\newtheorem{theorem}{Theorem}
\newtheorem{lemma}{Lemma}
\newtheorem{corollary}{Corollary}
\newtheorem{proposition}{Proposition}
\theoremstyle{definition}
\let\oldnl\nl% Store \nl in \oldnl
\newcommand{\nonl}{\renewcommand{\nl}{\let\nl\oldnl}}
\newcommand{\kl}{\mathrm{KL}}
\def\reg{\mathrm{Reg}}
\def\tv{\mathrm{TV}}
\renewcommand{\hat}{\widehat}
\renewcommand{\tilde}{\widetilde}
\newcommand{\citep}{\cite}
\newcommand{\citet}{\cite}
\begin{document}

\title{An Optimal Policy for Dynamic Assortment Planning Under Uncapacitated Multinomial Logit Models}
\author[1]{Xi Chen \footnote{Author names listed in alphabetical order.}}
\author[2]{Yining Wang}
\author[3]{Yuan Zhou}
\affil[1]{Leonard N. Stern School of Business, New York University}
\affil[2]{Machine Learning Department, Carnegie Mellon University}
\affil[3]{Computer Science Department, Indiana University at Bloomington}
% \nipsfinalcopy is no longer used

\maketitle

\begin{abstract}
We study the dynamic assortment planning problem, where for each arriving customer, the seller offers an assortment of substitutable products and customer makes the purchase among offered products according to an uncapacitated multinomial logit (MNL) model.  Since all the utility parameters of MNL are unknown, the seller needs to simultaneously learn customers' choice behavior and make dynamic decisions on assortments based on the current knowledge. The goal of the seller is to maximize the expected revenue, or equivalently, to minimize the expected regret. Although dynamic assortment planning problem has received an increasing attention in revenue management, most existing policies require the estimation of mean utility for each product and the final regret usually involves the number of products $N$.  The optimal regret of the dynamic assortment planning problem under the most basic and popular choice model---MNL model is still open. By carefully analyzing a revenue potential function, we develop a trisection based policy combined with adaptive confidence bound construction, which achieves an \emph{item-independent} regret bound of $O(\sqrt{T})$, where $T$ is the length of selling horizon. We further establish the matching lower bound result to show the optimality of our policy.
There are two major advantages of the proposed policy. First, the regret of all our policies has no dependence on $N$. Second, our policies are almost assumption free: there is no assumption on mean utility nor any ``separability'' condition on the expected revenues for different assortments.	
Our result also extends the unimodal bandit literature.

\textbf{Keywords}: dynamic assortment optimization, multinomial logit choice model, trisection algorithm, regret analysis.
\end{abstract}

\section{Introduction}

Assortment planning has a wide range of applications in retailing and online advertising. Given a large number of substitutable products,
the assortment planning problem refers to the selection of a subset of products (a.k.a., an assortment) offering to a customer such that the expected revenue is maximized. To model customers' choice behavior when facing a set of offered products, discrete choice models have been widely used, which  capture demand for each product as a function of the entire assortment. One of the most popular discrete choice models is the \emph{the multinomial logit model (MNL)}, which is naturally resulted from the random utility theory where a customer's preference of a product is represented by the mean utility of the product with a random factor \citep{McFadden1974}. In many scenarios, customers' choice behavior (e.g., mean utilities of products) may not be given as \emph{a priori} and cannot be easily estimated well due to the insufficiency of historical data (e.g.,  fast fashion sale or online advertising). To address this challenge,  dynamic assortment planning that simultaneously learns choice behavior and makes decisions on the assortment has received a lot of attentions \citep{Caro2007, Rusmevichientong2010, Saure2013, Agrawal16MNLBandit, Agrawal17Thompson}. More specifically, in a dynamic assortment planning problem, the seller offers an assortment to each arriving customer in a finite time horizon of length $T$. The goal of the seller is to maximize the cumulative expected revenue over $T$ periods, or equivalently, to minimize the \emph{regret},  which is defined as the gap between the  expected revenue generated by the policy and the oracle expected revenue when the mean utility for each product is known as \emph{a priori}.  %{\red (Yining: should we highlight worst-case regret?)}

{Despite a lot of research in the area of dynamic assortment planning under various choice models (see Section \ref{sec:related}), the optimal policy for the most fundamental uncapacitated MNL model still remains open in the literature.}  A natural idea to tackle this problem is to conduct some form of maximum likelihood estimation (MLE) of mean utilities of different products on-the-fly, and then select the assortment that maximizes the expected revenue based on the current estimate of mean utilities. However,   when the number of products $N$ is large as compared to the horizon length $T$, accurate estimation of mean utilities is extremely difficult, if not impossible, without additional assumptions. In terms of regret analysis, this approach usually incurs a regret that is polynomial in $N$, which is sub-optimal according to our lower bound result (i.e., $\Omega(\sqrt{T})$). 
Therefore, the following question naturally arises: can we design dynamic assortment policies without explicit estimation of mean utilities and achieve the optimal regret that is independent of $N$?

In this paper, we provide affirmative answers to this question under the most fundamental and popular uncapacitated multinominal logit model. As mentioned above, the estimation of utility parameters will be inaccurate when $N$ is large and thus existing methods based on maximum likelihood estimation cannot be directly used. We design several new techniques to address  the challenge. Under an MNL model, we leverage the structure of the optimal assortment in static problems and convert the problem into a \emph{dynamic optimization} of a carefully designed \emph{potential function}. In particular, the seminal result by \cite{Talluri2004,Gallego2004,Liu2008} shows that the optimal assortment belongs to the set of revenue-ordered assortments. More precisely, assuming that $N$ products are revenue-ordered with the revenues  $r_1 \geq r_2 \geq \ldots \geq r_N$, then the optimal assortment must belong to the set $\{\{\}, \{1\}, \{1,2\}, \ldots, \{1,\ldots, N\}\}$. Therefore, it suffices to only consider the following level sets of products: for each cutoff parameter $\theta\geq 0$,  we define the \emph{level set} to be the products whose revenue is greater than or equal to $\theta$. Further, motivated by \cite{Rusmevichientong2010}, we can define the \emph{potential function} $F(\theta)$ to be the expected revenue when this level set is offered as an assortment.  

To construct our policy, we first establish a set of important properties of the potential function $F(\theta)$, including 1) we show that the fixed point of $F(\theta)$ is the maximizer $\theta^*$ and leads to the optimal assortment; 2) we set up a reference line and comparing $F(\theta)$ with the reference line {to decide whether $F$ is increasing or decreasing locally at $\theta$}. Based on these properties, we propose a trisection search policy that dynamically searches the maximizer $\theta^*$ of the potential function and achieves an optimal regret up to logarithmic factors in $T$. {Then we further develop an approach with adaptive confidence levels to remove the logarithmic factor in $T$.}  The matching lower bound result has also been established, which shows the optimality of the proposed policy. By exploring the structure of the potential function, we no longer need to estimate $N$ parameters of mean utilities; instead, we only estimate the expected revenue of level sets at a few cutoff points. Before we present an overview of our technical result in Sec.~\ref{sec:result}, we briefly highlight two important advantages of the proposed policies. 
\begin{enumerate}
	\item First, the regrets of our policies have no dependence on the number of products $N$.  This property makes our result more favorable for scenarios when a large number of potential items are available, e.g., online sales or online advertisement. And a key message behind this result is that by exploring the structure of the problem, the explicit estimation of utility parameters could be avoided in dynamic assortment planning.
	
	\item Second, our policy is almost assumption-free: we only require the revenue for each product is upper bounded by a constant and the knowledge of total selling horizon $T$, which is usually available in practice. We have no assumption on the mean utilities (e.g., the assumption that the no-purchase is the most frequent outcome as in \cite{Agrawal16MNLBandit,Agrawal17Thompson}). 	This relaxation of assumptions is possible because we do \emph{not} attempt to estimate individual mean utilities in our algorithms. Moreover,  we do not have any ``separation condition'' on the expected revenue between a pair of candidate assortments, which has been assumed in the existing literature \citep{Rusmevichientong2010, Saure2013}.
	
\end{enumerate}

\subsection{Our results and techniques}
\label{sec:result}
The main contribution of this paper is an optimal characterization of the worst-case regret for dynamic assortment planning under the MNL model.
More specifically, we have the following informal statement of the main results in this paper.

\begin{theorem}[informal]
There exists a policy whose worst-case regret over $T$ time periods is upper bounded by $C_1\sqrt{T}$ %asymptotically almost surely
%\footnote{A sequence of events $\{E_T\}_{T\geq 1}$ holds asymptotically almost surely if $\lim_{T\to\infty}\Pr[E_T]=1$.}
 for some universal constant $C_1>0$;
furthermore, there exists another universal constant $C_2>0$ such that no policy can achieve a  worst-case regret smaller than $C_2\sqrt{T}$.
\label{thm:informal}
\end{theorem}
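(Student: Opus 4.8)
The plan is to establish the two halves of the statement separately: an $O(\sqrt{T})$ upper bound by exhibiting and analyzing the trisection policy, and a matching $\Omega(\sqrt{T})$ lower bound by an information-theoretic two-point argument.

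\textbf{Upper bound.} First I would reduce the $N$-dimensional assortment-selection problem to a one-dimensional search for the fixed point $\theta^*$ of the potential function $F$. By the revenue-ordered optimality result of \cite{Talluri2004,Gallego2004,Liu2008} it suffices to search over the cutoff $\theta\ge 0$, and the properties of $F$ established earlier --- that $\theta^*$ is the unique fixed point $F(\theta^*)=\theta^*$, and that the sign of $F(\theta)-\theta$ relative to the reference line reveals whether $\theta^*$ lies to the left or the right of $\theta$ --- let me treat the search as a noisy unimodal optimization. The policy proceeds in epochs maintaining a shrinking bracket $[\ell,u]\ni\theta^*$. In each epoch I would place two trisection points, repeatedly offer the two corresponding level sets, and form confidence intervals for $F$ at these points from the observed purchase outcomes; comparing the intervals against the reference line decides, with high probability, which outer third of the bracket can be discarded, shrinking it by a constant factor.

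\textbf{Regret accounting.} This is the heart of the upper bound. Within an epoch of bracket width $w$, offering a level set with cutoff $\theta$ costs per-customer revenue on the order of the local suboptimality of $F$, which the potential-function structure lets me bound in terms of $w$; the number of customers needed to resolve the epoch scales like the inverse square of the revenue gap to be detected, i.e. like $w^{-2}$ up to the chosen confidence level. Summing the per-epoch regret over a geometrically decreasing sequence of widths yields a telescoping bound of order $\sqrt{T}$. A naive union bound over epochs would leave a spurious $\log T$ factor; to reach the clean $C_1\sqrt{T}$ I would instead use the adaptive confidence-level scheme, allocating the sampling budget and failure probability per epoch so that the failure probabilities sum without a union-bound penalty, thereby removing the logarithmic factor.

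\textbf{Lower bound.} For the matching $\Omega(\sqrt{T})$ bound I would construct a two-point family of MNL instances that are statistically nearly indistinguishable yet have different optimal cutoffs. Concretely, perturb a single utility parameter by $\pm\Delta$ so that the revenue-maximizing assortment differs between the two instances while the per-customer choice distributions have Kullback--Leibler divergence $O(\Delta^2)$; any action optimal for one instance is $\Theta(\Delta)$-suboptimal per customer on the other. A change-of-measure argument (Pinsker / Bretagnolle--Huber) shows that the total information a policy accumulates over $T$ rounds is $O(T\Delta^2)$, so the instances are indistinguishable once $\Delta\lesssim 1/\sqrt{T}$. Setting $\Delta=c/\sqrt{T}$, the policy must play a $\Theta(\Delta)$-suboptimal assortment on $\Omega(T)$ rounds for at least one instance in the pair, incurring regret $\Omega(\Delta\cdot T)=\Omega(\sqrt{T})$, which yields $C_2$.

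\textbf{Main obstacle.} I expect the delicate part to be the upper bound's regret accounting rather than the two-point lower bound. The observations are multinomial choices among the offered products, so $F(\theta)$ is a nonlinear ratio functional of the underlying utilities rather than a simple mean; establishing concentration for its plug-in estimator and, more importantly, tying the per-customer revenue loss of an off-optimal cutoff cleanly to the current bracket width --- so that the $w^{-2}$ sample cost and the $O(w)$ per-sample regret telescope --- is where the potential-function analysis must be carried out with care. Removing the logarithmic factor via the adaptive confidence schedule, balancing exploration depth against the accumulated failure probability so that the total stays $O(\sqrt{T})$, is the other technically demanding component.
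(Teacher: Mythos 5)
Your proposal follows essentially the same route as the paper: reduction to a one-dimensional search via the level-set potential function $F$, a bracket-shrinking trisection policy whose decisions compare confidence intervals for $F$ against the reference line $F(\theta)=\theta$, a MOSS-style adaptive confidence schedule to remove the residual $\sqrt{\log T}$ factor, and a two-point Le Cam/KL lower bound built from a $\Theta(1/\sqrt{T})$ perturbation of a single utility parameter. The only notable deviation is that you explore both trisection points symmetrically, whereas the paper's algorithm is deliberately asymmetric---it tests only the right point $y_\tau$ against the line (exploiting that the gap $\theta-F(\theta)$ grows with distance from $\theta^*$ only on the right side) and fills the epoch by exploiting the left endpoint $a_\tau$---but since your shrinking decision must in the end rest on the same one-sided inference, this is a design difference rather than a gap.
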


%An important aspect of Theorem \ref{thm:informal} is that our regret bound is completely \emph{independent of the number of items $N$}.

%{\color{blue}and also makes the result applicable to scenarios when a large number of potential items are available, such as Amazon online sales systems.}

%albeit with quite different analysis.
%an iterated logarithmic term in $T$;
%whether it can be removed to obtain bounds matching up to numerical constants remains an open question, as we discuss in Sec.~\ref{sec:discussion}.
%
To enable such an $N$-independent regret, we provide a refined analysis of a certain \emph{unimodal} revenue potential function first studied in \cite{rusmevichientong2012robust}
and consider a trisection algorithm on revenue levels, extending some  ideas in unimodal bandits on either discrete or continuous arm domains \cite{yu2011unimodal,combes2014unimodal,agarwal2013stochastic}.
An important challenge in our problem is that the revenue potential function (defined in Eq.~(\ref{eq:F})) does not satisfy convexity or local Lipschitz growth,
and therefore previous results on unimodal bandits cannot be directly applied (see the related work section \ref{sec:related} for details).
Moreover, it is a simple exercise that mere unimodality in multi-armed bandits cannot lead to regret smaller than $\sqrt{NT}$,
because the worst-case constructions in the classical lower bound in multi-armed bandits are based on unimodal arms (see, e.g., \cite{Bubeck:Survey:12,bubeck2009pure}).

To overcome these difficulties, we establish additional properties of the revenue potential function which are different from classical convexity or Lipschitz growth properties.
In particular, we prove connections between the potential function and the straight line $F(\theta)=\theta$, which is then used as guidelines
in our update rules of trisection.
Also, because the potential function behaves differently on $F(\theta)\leq \theta$ and $F(\theta)\geq \theta$,
our trisection algorithm is \emph{asymmetric} in the treatments of the two trisection mid-points,
which is in contrast to previous trisection based methods for unimodal bandits \cite{yu2011unimodal,combes2014unimodal}
that treat both trisection mid-points symmetrically.

We also remark that the trisection search policy leads to a regret $O(\sqrt{T\log T})$, where the optimal regret should be $\Theta(\sqrt{T})$. The removal of additional $\log T$ terms in dynamic assortment selection and unimodal bandit problems is quite non-trivial, which requires new technical development.  In fact, most previous results on dynamic assortment selection \cite{rusmevichientong2012robust,Agrawal16MNLBandit,Agrawal17Thompson} and unimodal/convex bandits \cite{yu2011unimodal,combes2014unimodal,agarwal2013stochastic} have additional $\log T$ terms in regret upper bounds. 
The removal of this $\log(T)$ term is achieved by using confidence bounds with adaptively chosen confidence levels corresponding to different amounts of data collected.
At a higher level, our  strategy shares a similar spirit to the MOSS (Minimax Optimal Strategy in the Stochastic case) algorithm for multi-armed bandits \cite{audibert2009minimax}. On the other hand, the analysis is quite different from the analysis of the MOSS algorithm, involving new concentration inequalities and induction arguments tailored specifically to our model and proposed policy.

The rest of the paper is organized as follows. Sec.~\ref{sec:related} discusses the related work from both revenue management and bandit learning fields. We introduce the model and notations in Sec.~\ref{sec:model}. We further define the revenue potential function and investigate its properties in Sec.~\ref{sec:potential}. The policy and regret analysis will be provided in Sec.~\ref{sec:policy} and the lower bound results are developed in Sec.~\ref{sec:lower}. In Sec.~\ref{sec:simulation}, we provide some simulation studies to illustrate the performance of the proposed policies and conclusion and discussions will be followed in Sec.~\ref{sec:discussion}. Some technical proofs will be relegated to the supplementary material.

\section{Related work}\label{sec:related}

There are two lines of related work --- dynamic assortment planning and unimodal bandits. We will provide a brief review of both fields and highlight some closely related work.

\subsection{Dynamic assortment planning}

Static assortment planning with known choice behavior has been an active research area since the seminal work by \citet{Ryzin1999,Mahajan2001}. When the customer makes the choice according to the MNL model, \citet{Talluri2004,Gallego2004} prove the the optimal  assortment will belong to revenue-ordered assortments (see Lemma \ref{lem:popular} in Sec.~\ref{sec:potential}). An alternative proof is provided in \cite{Liu2008}. This important structural result enables efficient computation of static assortment planning under the MNL model, which reduces the number of candidate assortments from $2^N$ to $N$ and will also be used in our policy development.

Motivated by the large-scale online retailing, researchers start to relax the assumption on  prior knowledge of customers' choice behavior. The question of \emph{dynamic} optimization of assortments has received increasing attention in both the machine learning and operations management society \cite{Caro2007,Rusmevichientong2010,Saure2013,Agrawal16MNLBandit,Agrawal17Thompson},
where the mean utilities of products are unknown and have to be learnt on the fly.  Motivated by fast-fashion retailing, the work by \cite{Caro2007} was the first to study dynamic assortment planning problem, which assumes that the demand for product is independent of each other. The work \cite{Rusmevichientong2010} and \cite{Saure2013} incorporate choice models of MNL into dynamic assortment planning and formulate the problem into a online regret minimization problem. 

The work  \cite{Rusmevichientong2010} is closely related to our paper, which analyzes the same revenue potential function and proposes a golden ratio search algorithm based on the unimodal property of the potential function. However, only using the unimodal property leads a regret bound involving $\log(N)$ \cite{Rusmevichientong2010} , which is not $N$-independent. Moreover,  the golden ratio search algorithm imposes a strong ``separability assumption'' (see Proposition 8 in \cite{Rusmevichientong2010}), which assumes a constant gap between the expected revenues of any pair of candidate assortments,  which may fail when the number of items $N$ is large. 
In this work we relax the gap assumption and also remove the additional $\log N$ dependency by a more refined analysis of properties of the revenue potential function.
%and borrowing ``trisection'' ideas from the unimodal bandit literature \cite{yu2011unimodal,combes2014unimodal,agarwal2013stochastic}.

Our paper is also closely related to recent work  \cite{Agrawal16MNLBandit} and \cite{Agrawal17Thompson}. These work develop variants of UCB and Thompson sampling type methods for \emph{capacitated} MNL assortment models, where the size of each assortment is not allowed to exceed a pre-specified parameter $K$. Here the capacity limit $K$ is usually much smaller than $N$.
For the capacitated MNL model, the paper \cite{chen2018note} further establishes a lower bound result, which shows an $\Omega(\sqrt{NT})$ regret lower bound exists provided that $K \leq N/4$. By comparing this result with our result described in Theorem \ref{thm:informal}, it is interesting to see that the regret behavior in capacitated and uncapacitated MNL models is significantly different (see Table \ref{tab:regret}). While the dependence on $N$ in regret is unavoidable in the capacitated case, this paper shows that it can be got rid of in the uncapacitated case. We remove this dependence on $N$ by designing a novel policy that does not explicitly estimate utility parameters.  %We also note that the uncapacitated MNL can be viewed as a special capacitated MNL with the capacity limit $K=N$. 
%As a future work, it would be interesting to  investigate the phase transition from  the regret $\Theta(\sqrt{NT})$ (when $K\leq N/4$) to $\Theta(\sqrt{T})$ (when $K=N$).} \xnote{Should we mention it here?}
%{\color{red}\bf [Yining: I don't think mentioning $N/4<K<N$ is a good idea here because we don't know how to do it and don't know the regret in such a scenario.]}

In addition to MNL models, there are some recent work studying dynamic assortment under more complicated choice models, such as nested logit models \cite{Chen:18dynamic} and contextual MNL models \cite{Wang:17:person, Chen:18context}.  We also note that to highlight our key idea and focus on the balance between information collection and revenue maximization, we study stylized dynamic assortment planning problems following the existing literature \citep{Rusmevichientong2010, Saure2013, Agrawal16MNLBandit, Agrawal17Thompson}, which ignore operational considerations such as price decisions and inventory replenishment.

\begin{table}
	\center
	\caption{Summary of the state-of-the-art worst-case regrets for dynamic assortment planning under uncapacitated MNL and capacitated MNL, where $T$ and $N$ denote the length of the horizon and the number of products, respectively. We also provide the reference for each result, either the theorem number (when the result is first derived in this paper) or the reference.  Here, the tilde-$O$ notation $\tilde{O}$ is used as a variant of the standard big-$O$ notation but hides logarithmic factors. }
	\begin{tabular}{|ccc|} \hline
		Worst-case Regret & uncapacitated MNL  & capacitated MNL ($K\leq N/4$)     \\ \hline
		\multirow{2}{*}{Upper bound} & $O(\sqrt{T})$  & $\tilde{O}(\sqrt{NT}+N)$   \\  
   		& (Theorem \ref{thm:upper-lil}) & \citep{Agrawal16MNLBandit,Agrawal17Thompson}    \\ \hline
		\multirow{2}{*}{Lower bound} & $\Omega(\sqrt{T})$    & $\Omega(\sqrt{NT})$               \\ 
		& (Theorem \ref{thm:lower}) & \citep{chen2018note}    \\ \hline
	\end{tabular}
	\label{tab:regret}
\end{table}

\subsection{Unimodal bandits}

Another relevant line of research is \emph{unimodal bandit} \cite{yu2011unimodal,combes2014unimodal,agarwal2013stochastic,cope2009regret},
in which discrete or continuous multi-armed bandit problems are considered with additional unimodality constraints on the means of the arms.
Apart from unimodality, additional structures such as ``inverse Lipschitz continuity'' (e.g., $|\mu(i)-\mu(j)|\geq L|i-j|$ for some constant $L$, where $\mu(i)$ denotes the mean reward of the $i$-th arm)
or convexity are imposed to ensure {smaller regret compared to unstructured multi-armed bandits}.
%\xnote{What do we mean by ``improvement of the regret''}. 
However, both conditions fail to hold for the revenue potential function arising from uncapacitated MNL-based assortment planning problems.
%We also note that, in one previous work \cite{combes2014unimodal}, ``gap'' assumptions were also made to derive ``gap-dependent'' regret bounds, which are not directly comparable to our regret bounds in a ``gap-free'' setting. \xnote{do we need to mention gap assumption?}
In addition, under the gap-free setting where an $O(\sqrt{T})$ regret is to be expected, most previous works have additional $\log T$ terms in their regret upper bounds
(except for the work of \cite{cope2009regret} which introduces additional strong regularity conditions on the underlying functions).
%\xnote{Do we really want to highlight this $\log T$ term?}
In \cite{cohen2016online}, a more general problem of optimizing piecewise-constant function is considered, without assuming a unimodal structure
of the function. Consequently, a weaker $\widetilde O(T^{2/3})$ regret is derived.

%\xnote{Some additional papers, please add into an appropriate place: The work of \cite{combes2014unimodal} also derived gap-dependent regret bounds for unimodal bandit,
%	which is not easily comparable to our bounds.}
\section{Model specification}
\label{sec:model}

Let $\mathcal N$ be a finite set of all products/items with $|\mathcal N|=N$, and each item $i\in\mathcal N$ is associated with a revenue parameter $r_i>0$ and a  utility parameter (a.k.a., preference parameter) $v_i\geq 0$.\footnote{From random utility theory, we have $v_i =\exp(u_i)$, where $u_i$ is the underlying mean utility. For the ease of presentation, we will call $v_i$ the ``utility parameter'' since we only use $v_i$ throughout this paper.}
Throughout the paper we conveniently label all items in $\mathcal N$ as $1,2,\cdots,N$.
The revenue parameters $r_1,\cdots,r_N$ are known to the retailer, who has full knowledge of each items' price/cost; while the utility parameters $v_1,\cdots,v_N$ are unknown.
Let $\mathbb S=2^{\mathcal N}$ be the set of all possible assortments. At every time time $t$, a retailer picks an assortment $S_t\in \mathbb S$ ($S_t\neq\emptyset$), and observes a purchasing action $i_t\in S_t\cup \{0\}$,
where $i_t=0$ means no purchase occurs at time $t$. 
If a purchasing action is made (i.e., $i_t\neq 0$), the corresponding revenue $r_{i_t}$ is collected. It is worthy noting that since items are substitutable, a typical setting of assortment planning usually restricts each purchase to be a single item.

The distribution of $i_t$ is modeled by the following multinomial-logit (MNL) model:
\begin{equation}
\Pr[i_t=j] = \left\{\begin{array}{ll}{v_j}/({1+\sum_{i\in S_t}v_i})& j\in S_t;\\ {1}/({1+\sum_{i\in S_t}v_i})& j=0.\end{array}\right.
\label{eq:mnl-bandit}
\end{equation}
Define also $R(S_t)$ as the \emph{expected} revenue by supplementing $S_t$ to a customer; more specifically,
\begin{equation}
R(S_t) := \sum_{j\in S_t}\Pr[i_t=j]\cdot r_j = \frac{\sum_{j\in S_t}r_jv_j}{1+\sum_{j\in S_t}v_j}.
\end{equation}
For normalization purposes the utility parameter for the ``no-purchase'' action is assumed to be $v_0=1$.
Apart from that, the rest of the preference parameters $\{v_i\}_{i=1}^N$ are \emph{unknown} to the retailer and have to be either explicitly or implicitly learnt
from customers' purchasing actions $\{i_t\}_{t=1}^T$.

The retailer's objective is to maximize the expected revenue over the $T$ time periods. Such an objective is equivalent to the ``regret minimization'', in which
the retailer's assortment sequence is compared against the optimal assortment. 
More specifically, the goal of the retailer is to design a policy $\pi$ that generates $\{S_t\}_{t=1}^T$ to minimize the following cumulative regret:
\begin{equation}
\reg(\{S_t\}_{t=1}^T)  := \sum_{t=1}^T{R(S^*) - \mathbb E^\pi[R(S_t)]} \;\;\;\;\;\text{where}\;\;S^* \in \arg\max_{S\in\mathbb S} R(S).
\end{equation}
Here,  $R(S_t) = \mathbb E[r_{i_t}|S_t]$ is the expected revenue the retailer collects on assortment $S_t$. For notational convenience we define $r_0=0$ corresponding to the ``no-purchase'' action.

Finally, throughout this paper we only make the following standard assumption on the revenue parameters (see, e.g., Theorem 1 in \cite{Agrawal16MNLBandit}):
\begin{enumerate}[leftmargin=0.5in]
	\item[(A1)] $r_\infty := \max_{i\in\mathcal N}r_i \leq 1$.
\end{enumerate}
We note that upper bound on the maximum revenue is assumed to be one without loss of generality, since one can always normalize the revenues.

\section{The revenue potential function and its properties}\label{sec:potential}

The set $\mathbb S$ consists of $2^N$ different assortments, which poses a significant challenge on both  regret minimization (treating each assortment in $\mathbb S$ independently results in exponentially large regret) and computation (as it is intractable to enumerate all assortments in $\mathbb S$). %computationally (it is intractable to enumerate all assortments in $\mathbb S$) and information-theoretically (treating each assortment in $\mathbb S$ independently results in exponentially large regret).
{ To address the challenge, we can reduce the number of candidate assortments in $\mathbb S$ by constraining
	such assortment selections to ``level sets''.}
In particular, for a given real number $\theta\geq 0$, define the $\theta$-\emph{level set} to be
\[
\mathcal L_\theta(\mathcal N) := \{i\in\mathcal N: r_i\geq \theta\}
\]
as all items whose revenues are not smaller than $\theta$. For notational simplicity, we will use $\mathcal L_\theta$ (omitting $\mathcal N$ in the parenthesis) when the context is clear.
Further, let
\begin{equation}
\mathbb P := \{\mathcal L_\theta(\mathcal N): \theta\geq 0\}\subseteq\mathbb S
\end{equation}
be the class of all candidate assortments in $\mathbb S$ that can be expressed as level sets. It is easy to verify that $|\mathbb P|\leq N$, which is significantly smaller than $|\mathbb S|=2^N$. 

{ It is well-known that the optimal expected revenue for the static assortment optimization problem will remain the same when reducing the candidate assortments from $\mathbb S$ to $\mathbb P$.}
%The following definition characterizes a much restrictive class of assortment selections, by constraining the selected assortments
%to contain only items with the highest ranked revenue parameters.
%\begin{definition}
%For any $\theta\geq 0$ define $\mathcal L_\theta(\mathcal N) := \{i\in\mathcal N: r_i\geq \theta\}$.
%Denote also $\mathbb P := \{\mathcal L_\theta(\mathcal N): \theta\geq 0\}\subseteq \mathbb S$.
%\label{defn:level-set}
%\end{definition}
More precisely, the following lemma is a classical result in  revenue management \citep{Talluri2004,Gallego2004,Liu2008},
which shows the optimal expected revenue can be achieved by only considering the restricted level set class $\mathbb P$ under the MNL model.
\begin{lemma}[\citep{Talluri2004,Gallego2004,Liu2008}] Under the MNL model, there exists an subset $S^*\subseteq \mathcal{N}$  such taht
	$
	R(S^*)=\max_{S\in\mathbb S}R(S) = \max_{S\in\mathbb P}R(S).
	$
	\label{lem:popular}
\end{lemma}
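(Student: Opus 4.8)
The plan is to prove Lemma~\ref{lem:popular}, namely that under the MNL model the optimal expected revenue over all subsets $\mathbb S = 2^{\mathcal N}$ equals the optimal revenue over the level-set class $\mathbb P$. Since $\mathbb P \subseteq \mathbb S$, the inequality $\max_{S\in\mathbb S}R(S) \geq \max_{S\in\mathbb P}R(S)$ is immediate, so the entire content is to show that some maximizer $S^*$ of $R$ over $\mathbb S$ is in fact a level set, i.e. a revenue-ordered assortment. I would use a fixed-point / exchange argument based on the following classical observation: if $\lambda^* := \max_{S\in\mathbb S}R(S)$ is the optimal expected revenue, then $\lambda^*$ is characterized as the unique root of the function $g(\lambda) := \max_{S\in\mathbb S}\sum_{j\in S}(r_j-\lambda)v_j$, since $R(S)\geq\lambda \iff \sum_{j\in S}(r_j-\lambda)v_j \geq \lambda$ (using $1+\sum_{j\in S}v_j > 0$ and rearranging $R(S) = \frac{\sum_{j\in S}r_j v_j}{1+\sum_{j\in S}v_j}$).

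\medskip

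First I would fix the optimal value $\lambda^*$ and consider the linear-in-$S$ objective $\Phi_\lambda(S) := \sum_{j\in S}(r_j-\lambda)v_j$. The key point is that because this objective is \emph{separable} across items, it is maximized by simply including every item with a positive contribution: $S \mapsto \Phi_\lambda(S)$ is maximized by taking $S = \{j : (r_j-\lambda)v_j > 0\} = \{j : r_j > \lambda\}$ (since $v_j \geq 0$, items with $v_j=0$ contribute nothing and items with $r_j \leq \lambda$ hurt). This candidate is exactly a level set $\mathcal L_{\lambda}$ (up to ties and zero-utility items). So the second step is to show that this level set, evaluated at $\lambda = \lambda^*$, actually attains the optimum. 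I would argue: take any true optimizer $S^*$ with $R(S^*)=\lambda^*$; then $\Phi_{\lambda^*}(S^*) = \sum_{j\in S^*}(r_j-\lambda^*)v_j = \lambda^*\cdot\big(1+\sum_{j\in S^*}v_j\big) - \lambda^*\cdot 1 \cdots$ — more cleanly, the identity $R(S)=\lambda^* \iff \Phi_{\lambda^*}(S)=\lambda^*$ shows $\max_S \Phi_{\lambda^*}(S)=\lambda^*$, and since the maximizer of $\Phi_{\lambda^*}$ is the level set $\mathcal L_{\lambda^*}$, this level set also achieves $\Phi_{\lambda^*}=\lambda^*$, hence $R(\mathcal L_{\lambda^*})=\lambda^*$.

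\medskip

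The main obstacle I anticipate is handling the boundary/tie cases carefully so that the chosen set is genuinely a member of $\mathbb P=\{\mathcal L_\theta : \theta\geq 0\}$. Specifically, items with $r_j = \lambda^*$ contribute exactly zero to $\Phi_{\lambda^*}$, so they may be freely included or excluded without changing the objective; I must verify that including all of them (which gives the closed level set $\mathcal L_{\lambda^*} = \{j : r_j \geq \lambda^*\}$) does not decrease $R$, which follows because adding zero-contribution items keeps $\Phi_{\lambda^*}=\lambda^*$ and hence keeps $R=\lambda^*$. A second subtlety is items with $v_j=0$: they never affect either $R$ or $\Phi$, so they can be harmlessly included in any level set. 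Assembling these, the level set $\mathcal L_{\lambda^*}$ is both a valid element of $\mathbb P$ and an optimizer, giving the desired equality; I would then set $S^* := \mathcal L_{\lambda^*}$ to match the statement. An alternative, equally clean route avoiding the fixed-point machinery is a direct exchange argument: given any optimal $S^*$, show that swapping in a higher-revenue item outside $S^*$ for a lower-revenue item inside $S^*$ (or simply adding/removing items to make it revenue-ordered) never decreases $R(S)$, driving $S^*$ to a level set in finitely many steps; but I expect the potential-function/fixed-point proof to be shorter and to foreshadow the $F(\theta)=\theta$ structure used later in the paper.
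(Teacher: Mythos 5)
Your proof is correct, but note that the paper itself never proves Lemma~\ref{lem:popular}: it is quoted as a classical result and attributed to \citep{Talluri2004,Gallego2004,Liu2008}, so there is no internal proof to compare against. Your argument is a clean, self-contained version of the standard one from that literature: the equivalence $R(S)\geq\lambda \iff \sum_{j\in S}(r_j-\lambda)v_j\geq\lambda$ (valid since $1+\sum_{j\in S}v_j>0$), the observation that $\Phi_{\lambda^*}(S)=\sum_{j\in S}(r_j-\lambda^*)v_j$ is separable across items and hence maximized by the level set $\mathcal L_{\lambda^*}$, and the tie/zero-utility bookkeeping all go through, and $\lambda^*\geq 0$ guarantees $\mathcal L_{\lambda^*}\in\mathbb P$. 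It is also pleasantly consonant with the paper's own toolkit: the identity you rely on is exactly the manipulation used in the supplementary proof of Lemma~\ref{lem:rrequal} (see Eq.~(\ref{eq:rrequal-main})), so your route foreshadows the fixed-point structure $F(\theta^*)=\theta^*$ as you intended. Two small presentational points: (i) what you call ``the unique root of $g(\lambda)$'' should be the unique fixed point of $g$ (root of $g(\lambda)-\lambda$, which is strictly decreasing); in any case uniqueness is never needed, only that $g(\lambda^*)=\lambda^*$; (ii) the step ``the identity $R(S)=\lambda^*\iff\Phi_{\lambda^*}(S)=\lambda^*$ shows $\max_S\Phi_{\lambda^*}(S)=\lambda^*$'' needs the one-sided version you stated earlier --- no $S$ has $R(S)>\lambda^*$, hence no $S$ has $\Phi_{\lambda^*}(S)>\lambda^*$, while $S^*$ attains equality --- which is implicit in your write-up but should be said explicitly.
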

In other words, Lemma \ref{lem:popular}  suggests that it suffices to consider ``level-set'' type assortments $\mathcal L_\theta$
and to find $\theta\in[0,1]$ that gives rises to the largest $R(\mathcal L_\theta)$.

This motivates the following ``potential'' function, which takes a revenue threshold $\theta$ as input and outputs the expected revenue of its corresponding level set assortments:
\begin{equation}
\text{\it The revenue potential function:} \;\;\;\;F(\theta) := R(\mathcal L_\theta), \;\;\theta\in[0,1].
\label{eq:F}
\end{equation}

Intuitively, $F(\theta)$ is the expected revenue obtained by providing the assortment consisting of all items whose revenues exceed or are equal to $\theta$. The potential function  plays a central role in the development of our dynamic trisection search algorithm and item-independent regret bounds.  Similar idea of studying the expected revenue of revenue-ordered items was also considered in \cite{rusmevichientong2012robust}. But we will derive a more comprehensive list of properties of the potential function $F$ to facilitate our algorithmic development and analysis. The derived properties in this section could also be potentially useful for solving other assortment planning problems under the MNL.

Because item revenues $r_i$ are discrete, $F$ is a piecewise-constant function as illustrated in the left picture in Fig.~\ref{fig:F}, where $\mathcal S=\{s_1,\cdots,s_m\}$ are the changing points of $F$.
More specifically, we have the following proposition and its verification is easy from the definition and the discretized nature of $F$.
\begin{proposition}
	There exists $c_0,\cdots,c_m\geq 0$ satisfying $c_i\neq c_{i+1}$ for all $i=0,\cdots,m-1$, and $\mathcal S=\{s_1,\cdots,s_m\}\subseteq \{r_i\}_{i=1}^N$, such that
	\begin{equation}
	F(\theta) = c_0\cdot\mathbb I[\theta\leq s_1] + \sum_{i=1}^{m-1}{c_i\cdot\mathbb I[s_i < \theta\leq s_{i+1}]} + c_m\cdot \mathbb I[\theta>s_m],
	\end{equation}
	where $c_m=0$.
\end{proposition}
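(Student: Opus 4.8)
The plan is to argue directly from the definition $F(\theta) = R(\mathcal{L}_\theta)$ together with the fact that the level sets $\mathcal{L}_\theta$ can only change at finitely many thresholds, namely the distinct item revenues $\{r_i\}_{i=1}^N$. First I would observe that the map $\theta \mapsto \mathcal{L}_\theta = \{i : r_i \geq \theta\}$ is piecewise constant in $\theta$: as $\theta$ increases through $[0,1]$, items drop out of the level set exactly when $\theta$ crosses one of the values $r_i$ from below. Consequently, on any interval that contains no point of $\{r_i\}_{i=1}^N$ in its interior, the set $\mathcal{L}_\theta$ is fixed, and hence $R(\mathcal{L}_\theta)$ equals a single constant there.

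Next I would formalize the breakpoints. Let $\mathcal{S} = \{s_1 < s_2 < \cdots < s_m\}$ be the distinct values among $\{r_i\}_{i=1}^N$ that lie in $(0,1]$, sorted in increasing order; by construction $\mathcal{S} \subseteq \{r_i\}_{i=1}^N$. I would then define the constants by evaluating $F$ on each maximal interval of constancy: take $c_0 = R(\mathcal{L}_\theta)$ for $\theta \in [0, s_1]$, take $c_i = R(\mathcal{L}_\theta)$ for $\theta \in (s_i, s_{i+1}]$ for $i = 1,\dots,m-1$, and $c_m = R(\mathcal{L}_\theta)$ for $\theta > s_m$. The key is the careful handling of which endpoint is included: because the level set uses the inequality $r_i \geq \theta$ (a closed condition on the lower side), the constancy intervals are of the form $(s_i, s_{i+1}]$, which matches the indicator structure $\mathbb{I}[s_i < \theta \leq s_{i+1}]$ in the claimed formula. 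This bookkeeping about half-open intervals is the one place where a sign/endpoint error could creep in, so I would verify it explicitly by checking that at $\theta = s_{i+1}$ the items with revenue exactly $s_{i+1}$ are still included.

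To establish the two remaining assertions, I would argue as follows. For $c_m = 0$: when $\theta > s_m = \max_i r_i$, the level set $\mathcal{L}_\theta$ is empty (every item has $r_i < \theta$), so $F(\theta) = R(\emptyset) = 0$ directly from the definition of $R$ with an empty sum in the numerator. For the condition $c_i \neq c_{i+1}$: each $s_i$ is a genuine changing point of $F$, meaning the level set strictly gains at least one new item as $\theta$ decreases past $s_i$; since $\mathcal{S}$ was defined to consist precisely of the distinct revenue values, any $s_i$ corresponds to a nonempty set of items entering the assortment, and one checks that adding items with revenue exactly $s_i$ changes the value of $R(\mathcal{L}_\theta)$. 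I would note that if some threshold failed to change the value of $F$, it would simply not have been recorded as a changing point, so the non-equality $c_i \neq c_{i+1}$ is a matter of definition of $\mathcal{S}$ as the set of \emph{changing} points rather than all revenue values.

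I expect the only genuine subtlety — and the main obstacle, modest as it is — to be justifying $c_i \neq c_{i+1}$ rigorously, since a priori it is conceivable that adding a new item to the level set leaves the expected revenue $R(\mathcal{L}_\theta)$ unchanged. The cleanest resolution is to \emph{define} $\mathcal{S}$ as the set of points where the value of $F$ actually changes (coalescing any revenue values that produce no change into the neighboring constant piece), which is consistent with the statement that $\mathcal{S} \subseteq \{r_i\}_{i=1}^N$ and automatically enforces $c_i \neq c_{i+1}$; the piecewise-constant structure then follows immediately. Everything else is a routine verification from the explicit formula for $R(S)$ and the monotone nesting of the level sets, so I would keep those steps brief.
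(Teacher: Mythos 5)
Your proof is correct and takes essentially the same route as the paper: the paper gives no detailed argument (it defines $\mathcal S$ as the changing points of $F$ and states that verification is ``easy from the definition and the discretized nature of $F$''), and your steps — constancy of $\mathcal L_\theta$ on the half-open intervals $(s_i,s_{i+1}]$ forced by the closed condition $r_i\geq\theta$, the empty level set giving $c_m=0$, and $c_i\neq c_{i+1}$ enforced by taking $\mathcal S$ to be the genuine changing points rather than all distinct revenues — are exactly the intended verification. In particular, your final paragraph's resolution of the $c_i\neq c_{i+1}$ subtlety (coalescing revenue values at which $R(\mathcal L_\theta)$ happens not to change) is precisely the convention the paper adopts.
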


Define $F^* := \max_{0\leq i\leq m}c_i = \sup_{\theta\geq 0}F(\theta)$ as the maximum value of $F$.
By Lemma \ref{lem:popular}, we have the following corollary saying that $F^*$ equals the expected revenue of the optimal assortment.
\begin{corollary}
	$F^* = R(S^*)$.
	\label{cor:popular}
\end{corollary}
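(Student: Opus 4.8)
The plan is to obtain the corollary as an immediate consequence of Lemma~\ref{lem:popular}, by unwinding the definitions of $F^*$ and of the level-set class $\mathbb P$ so as to exhibit the chain of equalities
\[
F^* = \sup_{\theta\geq 0}R(\mathcal L_\theta) = \max_{S\in\mathbb P}R(S) = R(S^*).
\]
No new analytic input beyond the classical structural result is required: the entire task is to match the quantity $F^*$ to the right-hand side $\max_{S\in\mathbb P}R(S)$ that already appears in Lemma~\ref{lem:popular}.

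First I would recall that, by definition, $F^* = \sup_{\theta\geq 0}F(\theta)$ and $F(\theta)=R(\mathcal L_\theta)$, so that $F^* = \sup_{\theta\geq 0}R(\mathcal L_\theta)$. Next I would argue that as $\theta$ ranges over $[0,\infty)$ the level set $\mathcal L_\theta$ ranges over exactly the collection $\mathbb P=\{\mathcal L_\theta(\mathcal N):\theta\geq 0\}$, so the supremum over thresholds coincides with the supremum of $R$ over $\mathbb P$. Since $|\mathbb P|\leq N$ is finite — equivalently, since the preceding Proposition shows that $F$ takes only the finitely many values $c_0,\dots,c_m$ — this supremum is in fact attained and equals $\max_{S\in\mathbb P}R(S)$. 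Applying Lemma~\ref{lem:popular}, which guarantees $\max_{S\in\mathbb P}R(S)=\max_{S\in\mathbb S}R(S)=R(S^*)$, then closes the chain.

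There is no genuine obstacle here; the only points deserving a word of care are bookkeeping. One should confirm that restricting attention to the supremum over thresholds $\theta\geq 0$ rather than over all of $\mathbb S$ loses nothing, which is precisely the content of Lemma~\ref{lem:popular} and thus may be invoked directly. One should also check that extending the range from the nominal domain $[0,1]$ of $F$ to all $\theta\geq 0$ is harmless: under assumption (A1) any threshold $\theta>r_\infty$ yields the empty level set with $R(\emptyset)=0=c_m$, contributing nothing new to the supremum, so a maximizing threshold may always be taken in $[0,1]$ and the value $F^*$ is unambiguous.
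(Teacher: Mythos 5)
Your proof is correct and follows exactly the route the paper intends: the corollary is stated as an immediate consequence of Lemma~\ref{lem:popular}, obtained by unwinding $F^* = \sup_{\theta\geq 0} R(\mathcal L_\theta) = \max_{S\in\mathbb P}R(S)$ (attained by finiteness of $\mathbb P$) and invoking the lemma. Your extra bookkeeping about thresholds beyond $[0,1]$ yielding the empty set is a reasonable clarification but introduces no new ideas beyond what the paper takes as immediate.
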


We further establish some more refined structural properties of $F$.
For notational simplicity, let $F(x^+) := \lim_{y\to x^+}F(y)$ and $F(x^-) := \lim_{y\to x^-}F(y)$.
\begin{lemma}
	There exists $\theta^*>0$ such that $\theta^*=F(\theta^*)=F^*$.
	\label{lem:rrequal}
\end{lemma}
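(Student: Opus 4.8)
The plan is to take $\theta^\ast := F^\ast$ directly and verify that it is a fixed point of $F$, i.e.\ that cutting the level set at the optimal revenue itself reproduces the optimal revenue: $F(F^\ast)=R(\mathcal L_{F^\ast})=F^\ast$. The engine behind everything is a single algebraic identity for how $R$ changes when one item is adjoined. For $j\notin S$, writing $B := 1+\sum_{i\in S}v_i$ and using $\sum_{i\in S}r_iv_i = B\cdot R(S)$, one obtains
\begin{equation}
R(S\cup\{j\}) = \frac{B\,R(S)+v_j r_j}{B+v_j},
\end{equation}
which exhibits $R(S\cup\{j\})$ as a convex combination of $R(S)$ and $r_j$ (the weights lie in $[0,1]$ since $B\geq 1>0$ and $v_j\geq 0$). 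Consequently $R(S\cup\{j\})$ always lies weakly between $R(S)$ and $r_j$, and when $v_j>0$ the comparison is governed entirely by $r_j$ versus $R(S)$: adjoining $j$ strictly increases $R$ iff $r_j>R(S)$, leaves it unchanged iff $r_j=R(S)$, and strictly decreases it iff $r_j<R(S)$. Reading the same identity ``in reverse'' (with $S$ replaced by $S\setminus\{j\}$) gives the removal rule: if $j\in S$, $v_j>0$ and $r_j<R(S)$, then $R(S)$ lies strictly between $R(S\setminus\{j\})$ and $r_j$, forcing $R(S\setminus\{j\})>R(S)$, so deleting $j$ strictly increases $R$.

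Next I would invoke Corollary \ref{cor:popular} to fix a global maximizer $S^\ast\in\arg\max_{S\in\mathbb S}R(S)$ with $R(S^\ast)=F^\ast$, and use the add/remove rules to sandwich the revenues of its items around the threshold $F^\ast$. Optimality together with the removal rule forces every $i\in S^\ast$ with $v_i>0$ to satisfy $r_i\geq F^\ast$ (otherwise deleting $i$ would strictly improve $R$), so the positive-utility part of $S^\ast$ sits inside $\mathcal L_{F^\ast}$. Symmetrically, optimality together with the addition rule forces every $j\notin S^\ast$ with $v_j>0$ to satisfy $r_j\leq F^\ast$ (otherwise adjoining $j$ would strictly improve $R$). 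Hence any positive-utility item lying in the symmetric difference $\mathcal L_{F^\ast}\triangle S^\ast$ must have revenue exactly $F^\ast$.

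Finally I would pass from $S^\ast$ to $\mathcal L_{F^\ast}$ without changing the revenue. Building $\mathcal L_{F^\ast}$ out of $S^\ast$ only ever adjoins items of revenue exactly $F^\ast=R(S^\ast)$ (no change, by the ``unchanged'' case of the addition rule) or items with $v_i=0$ (no change, since such items contribute nothing to the numerator or denominator of $R$), and it never deletes a positive-utility item of $S^\ast$ (all of those already lie in $\mathcal L_{F^\ast}$); the only deletions are of zero-utility items, which again leave $R$ untouched. Therefore $R(\mathcal L_{F^\ast})=R(S^\ast)=F^\ast$, i.e.\ $F(\theta^\ast)=\theta^\ast=F^\ast$. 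Positivity $\theta^\ast>0$ then follows in the nondegenerate case where some $v_i>0$, since $F^\ast\geq R(\{i\})=r_iv_i/(1+v_i)>0$. I expect the main obstacle to be the careful bookkeeping of the boundary ties—items whose revenue equals $F^\ast$ exactly—together with zero-utility items; the convex-combination identity is precisely what makes both degeneracies harmless, since the equality $r_j=R(S)$ lands exactly in the ``no change'' regime.
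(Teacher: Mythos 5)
Your proof is correct, but it takes a genuinely different route from the paper's. The paper never touches the global maximizer $S^*$: it works entirely inside the level-set class, locating the interval $(s,s']$ on which the piecewise-constant $F$ attains $F^*$, rewriting $F(s')=F^*$ as $\sum_{r_i\geq s'}(r_i-F^*)v_i=F^*$, and running two contradiction arguments (if $F^*\leq s$, adjoining the items with $r_i\in[s,s')$ only adds nonnegative terms and forces $F(s)\geq F^*$; if $F^*>s'$, deleting the items with $r_i=s'$ removes nonpositive terms and forces $F(s'^+)\geq F^*$), each contradicting that $s,s'$ are jump points of a function maximized at $F^*$; hence $s<F^*\leq s'$ and $F(F^*)=F^*$ since $F$ is constant on $(s,s']$. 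You instead verify $R(\mathcal L_{F^*})=F^*$ by an exchange argument at a global maximizer $S^*$: the convex-combination identity for adjoining one item yields add/remove rules, global optimality of $S^*$ sandwiches the revenues of positive-utility items around $F^*$, and the symmetric difference $\mathcal L_{F^*}\mathbin{\triangle} S^*$ is then revenue-neutral (ties $r_i=F^*$ and zero-utility items only). The two proofs share the same underlying algebra---cross-multiplying the definition of $R$---but the dependencies differ: you need Corollary \ref{cor:popular} (hence Lemma \ref{lem:popular}) to identify $F^*$ with $R(S^*)$, whereas the paper's argument is self-contained in the level-set world, using only the definition of $F^*$ and the piecewise-constant structure. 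What your route buys is modularity and explicit handling of the boundary degeneracies; indeed your exchange rules are essentially the standard machinery behind Lemma \ref{lem:popular} itself, so your argument makes the logical dependence on that classical result visible rather than hidden. What the paper's route buys is brevity and independence from Lemma \ref{lem:popular}. One small note: your positivity step ($F^*>0$ when some $v_i>0$) also uses $r_i>0$, which holds by the model assumption; the paper makes the same nondegeneracy assumption implicitly when it asserts ``Clearly $s\neq 0$ because $F^*>0$.''
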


\begin{lemma}
	For any $\theta\geq \theta^*$, $F(\theta)\leq \theta$ and $F(\theta)\geq F(\theta^+)$.
	\label{lem:Fmonotonic-right}
\end{lemma}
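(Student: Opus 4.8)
The plan is to derive both inequalities from Lemma~\ref{lem:rrequal}, which hands us a point $\theta^*$ with $\theta^* = F(\theta^*) = F^*$, where $F^* = \sup_{\theta\geq 0}F(\theta)$ is the global maximum of $F$. The first inequality is then essentially immediate: since $F^*$ is the supremum of $F$, every value satisfies $F(\theta)\leq F^* = \theta^*$, and combining this with the hypothesis $\theta\geq\theta^*$ gives $F(\theta)\leq\theta^*\leq\theta$. No additional structure of $F$ is needed for this half, so the real content lies in the second assertion $F(\theta)\geq F(\theta^+)$, and in fact I would reduce it to the first.

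For $F(\theta)\geq F(\theta^+)$ I would split on whether $\theta$ is a changing point. If $\theta\notin\mathcal S$, then $\theta$ lies in the interior of a plateau of the piecewise-constant function $F$, so $F$ is constant on a right-neighborhood of $\theta$ and $F(\theta^+)=F(\theta)$; the inequality holds with equality. The only substantive case is $\theta = s_i\in\mathcal S$. Here, passing from $\theta$ to $\theta^+$ deletes from the level set exactly the items of revenue equal to $s_i$ — precisely the lowest-revenue items of $\mathcal L_{s_i}$ — so that $\mathcal L_{\theta^+}=\mathcal L_\theta\setminus T$ with $T=\{j\in\mathcal L_{s_i}: r_j=s_i\}\neq\emptyset$.

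The key computation I would record is the effect on $R$ of deleting a batch $T\subseteq S$ of items that share a common revenue $s$. Writing $B = 1+\sum_{j\in S}v_j$ and $w=\sum_{j\in T}v_j>0$, and using $\sum_{j\in T}r_jv_j = sw$, a one-line simplification of the MNL revenue formula $R(S)=\tfrac{\sum_{j\in S}r_jv_j}{1+\sum_{j\in S}v_j}$ yields
\[
R(S\setminus T)-R(S)=\frac{w\,(R(S)-s)}{B-w},\qquad B-w = 1+\sum_{j\in S\setminus T}v_j>0.
\]
Applying this with $S=\mathcal L_{s_i}$ and $s=s_i$ shows that $F(\theta^+)-F(\theta)$ has the same sign as $F(\theta)-s_i = F(\theta)-\theta$. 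Since $\theta=s_i\geq\theta^*$, the first inequality already established gives $F(\theta)-\theta\leq 0$, whence $F(\theta^+)\leq F(\theta)$, as desired.

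The hard part is not a deep obstacle but the bookkeeping: the whole proof hinges on the batch-removal identity and on matching its sign to $F(\theta)-\theta$. Two points require care. First, I would fix the left-continuity convention from the Proposition so that at a changing point $F(\theta)=F(s_i)$ corresponds to the \emph{larger} level set $\mathcal L_{s_i}$ (which still contains the revenue-$s_i$ items) while $F(\theta^+)$ corresponds to the smaller set $\mathcal L_{s_i}\setminus T$; getting this backwards would flip the inequality. Second, I would confirm that at a genuine changing point $T\neq\emptyset$, so that $w>0$ and the denominator $B-w$ is strictly positive, making the sign conclusion valid.
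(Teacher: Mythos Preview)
Your proof is correct and follows essentially the same route as the paper: both obtain $F(\theta)\leq\theta$ immediately from $F(\theta)\leq F^*=\theta^*\leq\theta$, and both reduce $F(\theta)\geq F(\theta^+)$ to an algebraic identity for $F(\theta)-F(\theta^+)$ whose sign is controlled by the first inequality. The only (cosmetic) difference is that your batch-removal identity factors this difference as a positive multiple of $\theta-F(\theta)$, whereas the paper factors it as a positive multiple of $\theta-F(\theta^+)$ and then needs one extra limiting step to deduce $F(\theta^+)\leq\theta$; your arrangement is marginally more direct.
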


\begin{lemma}
	For any $\theta\leq \theta^*$, $F(\theta)\geq \theta$ and $F(\theta)\leq F(\theta^+)$.
	\label{lem:Fmonotonic-left}
\end{lemma}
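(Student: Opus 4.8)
The plan is to reduce the lemma to a single elementary identity describing how $R$ reacts to enlarging an assortment, and then to feed that identity into the piecewise-constant structure of $F$. The starting point is the \emph{add-an-item identity}: for any assortment $S$ and any item $j\notin S$,
\begin{equation*}
R(S\cup\{j\})-R(S)=\frac{v_j\,(r_j-R(S))}{1+\sum_{i\in S\cup\{j\}}v_i},
\end{equation*}
obtained by clearing denominators in the definition of $R$. Applying it repeatedly yields a \emph{monotone-pull} principle: if a nonempty set $T$ of items, all carrying the same revenue $a$, is appended to $S$, then $R$ moves monotonically from $R(S)$ toward $a$ and never crosses $a$; hence $\Sgn\big(R(S\cup T)-R(S)\big)=\Sgn(a-R(S))$ and, when $R(S)\neq a$, the value $R(S\cup T)$ lies strictly between $R(S)$ and $a$.

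Next I would convert this into a relation between adjacent step values of $F$. As $\theta$ decreases across a jump point $s_{i+1}$, the level set acquires exactly the items of revenue $s_{i+1}$; writing $A:=\{j:r_j>s_{i+1}\}$ we have $R(A)=F(s_{i+1}^+)=c_{i+1}$ and $R\big(A\cup\{j:r_j=s_{i+1}\}\big)=F(s_{i+1})=c_i$. The monotone-pull principle with $a=s_{i+1}$ then shows $c_i$ lies strictly between $c_{i+1}$ and $s_{i+1}$, so in particular
\begin{equation*}
\Sgn(c_i-c_{i+1})=\Sgn(s_{i+1}-c_{i+1}).
\end{equation*}
This is exactly the ``reference line $F(\theta)=\theta$'' comparison from the introduction: whether $F$ steps up or down at $s_{i+1}$ is dictated by the position of the right-hand value $c_{i+1}$ relative to $s_{i+1}$.

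Fix $\theta^*$ from Lemma~\ref{lem:rrequal}, say $\theta^*\in(s_k,s_{k+1}]$, so $c_k=\theta^*$. I would first dispatch the delicate endpoint: the items of revenue exactly $\theta^*$ (if any) carry revenue equal to $R(\mathcal L_{\theta^*})=\theta^*$, so by the monotone-pull principle removing them leaves $R$ unchanged, giving $F(\theta^{*+})=\theta^*$ and hence $\theta^*<s_{k+1}$. This is the one genuinely subtle point, since without it the two one-sided statements would clash at $\theta=\theta^*$: the right-hand Lemma~\ref{lem:Fmonotonic-right} already gives $F(\theta^*)\ge F(\theta^{*+})$, and for the present lemma I need equality there.

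The remainder is a descending induction establishing $P(j):\ s_{j+1}<c_j<c_{j+1}$ for $j=k-1,\dots,0$ (with the convention $c_k:=\theta^*$). In the base case $c_k=\theta^*>s_k$, so the sign identity yields $c_{k-1}<c_k$ and strict betweenness gives $s_k<c_{k-1}<\theta^*$. For the inductive step, $P(j)$ supplies $c_j>s_{j+1}>s_j$, so the sign identity at $s_j$ gives $s_j<c_{j-1}<c_j$, which is $P(j-1)$. The lemma then follows by inspection: on a left interval $(s_j,s_{j+1}]$ with $j\le k-1$ one has $F(\theta)=c_j>s_{j+1}\ge\theta$ and $F(\theta)=c_j\le c_{j+1}=F(\theta^+)$; on the interval containing $\theta^*$, every $\theta\le\theta^*$ satisfies $F(\theta)=\theta^*\ge\theta$ and $F(\theta)=\theta^*=F(\theta^+)$ by the endpoint computation. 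The main obstacle is precisely this endpoint analysis, together with the bookkeeping that the items appended at each jump share a single revenue value, which is what licenses the monotone-pull principle in the first place.
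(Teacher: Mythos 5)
Your proof is correct and takes essentially the same route as the paper: your monotone-pull principle is exactly the paper's key identity $F(\theta)-F(\theta^+)=\gamma_\theta\left[\theta-F(\theta^+)\right]$ with $\gamma_\theta\in[0,1)$ (Eq.~(\ref{eq:F-diff}), derived there in one block computation rather than item-by-item), and your descending induction on the jump points below $\theta^*$ mirrors the paper's induction establishing $F(s_j)\geq s_j$ and $F(s_j)\leq F(s_j^+)$. The only substantive difference is cosmetic polish: you carry strict betweenness through the induction and explicitly verify the endpoint fact $F(\theta^{*+})=\theta^*$ (so $\theta^*$ cannot be a jump point), a point the paper's proof leaves implicit.
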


The proofs of the above lemmas are given in the supplementary material. Lemmas \ref{lem:rrequal}, \ref{lem:Fmonotonic-right} and \ref{lem:Fmonotonic-left} provide a complete picture of the structure of the potential function $F$, and most importantly the relationship between $F$ and the central
straight line $F(\theta)=\theta$, as depicted in the right picture of Fig.~\ref{fig:F}.
In particular, $F$ intersects with the $y=x$ line at $\theta^*$ that attains the maximum function value $F^*$, and monotonically decreases as one moves away from $\theta^*$,
meaning that $F$ is \emph{uni-modal}.
Furthermore, Lemmas \ref{lem:Fmonotonic-right} and \ref{lem:Fmonotonic-left} show that { (1) $F$ is left-continuous; (2) $F^*$ lies below the $y=x$ line to the right of $\theta^*$
	and above the $y=x$ line to the left of $\theta^*$.}
This helps us judge the positioning of a particular revenue level $\theta$ by simply comparing the expected revenue of $R(\mathcal L_\theta)$ with $\theta$ itself,
motivating an asymmetric trisection algorithm which we describe in the next section.

%Therefore, the relative position of $\theta$ with respect to $\theta^*$ can be decided by comparing the value of $\theta$ to the value of  $F(\theta)$. This fact lays the foundation of our dynamic trisection search algorithm, which is detailed in the next section.

%The potential $F$ was first introduced and considered in \cite{kok2008assortment}, in which it was proved that $F$ is left-continuous, piecewise-constant and \emph{unimodal} in its input revenue $\theta$. Using such unimodality, a golden-ratio search based policy was designed that achieves $O(\log N\log T)$ regret under additional consecutive gap assumptions of the level set assortments $\{\mathcal L_\theta\}$. To derive gap-independent results and to get rid of the additional $\log N$ dependency, we provide a more refined analysis of properties of the potential function $F$ in this paper, summarized in the following three lemmas:

\begin{figure}[t]
\centering
\includegraphics[width=0.495\textwidth]{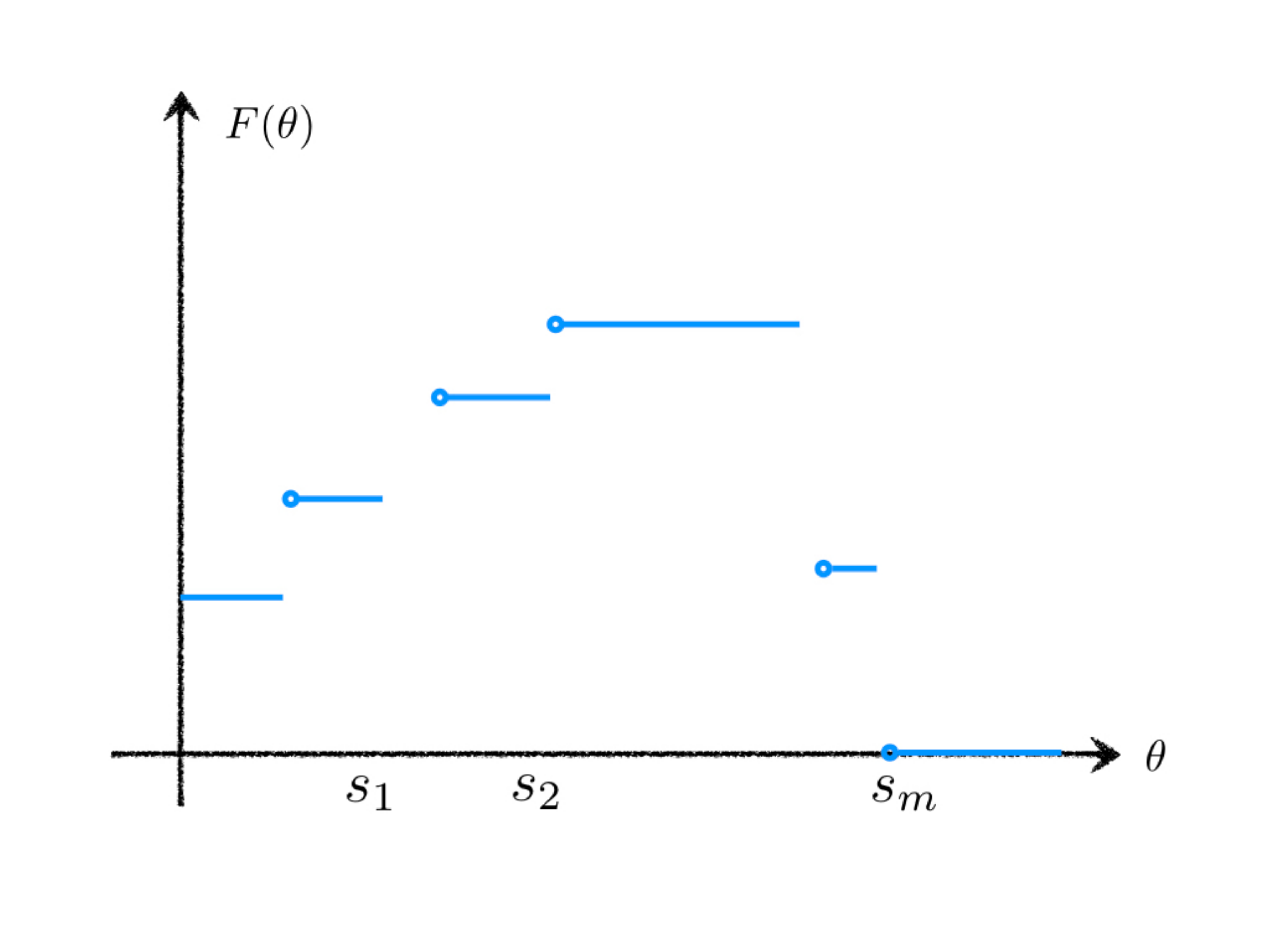}
\includegraphics[width=0.495\textwidth]{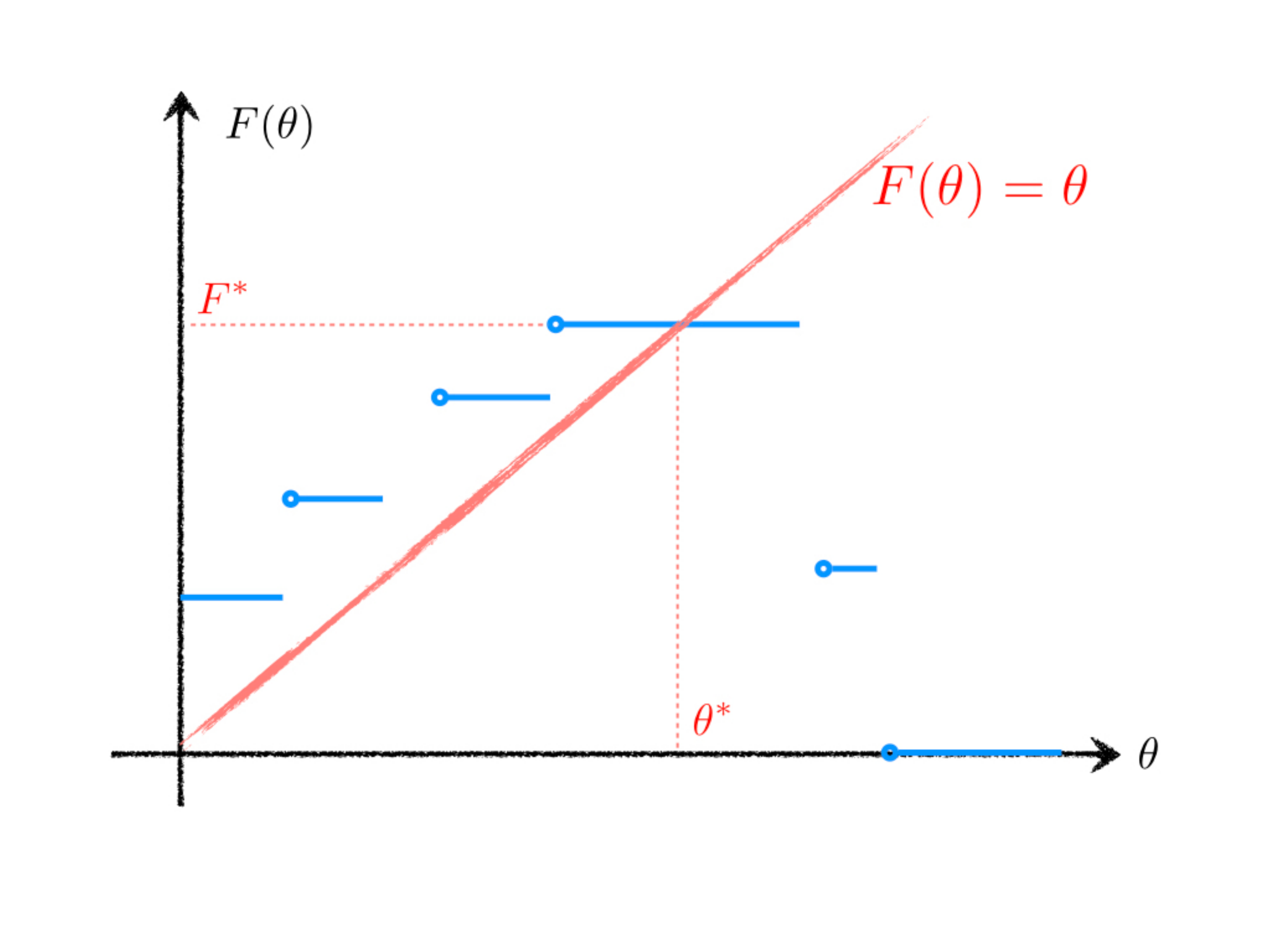}
\caption{\small Illustration of the potential function $F(\theta)$, the important quantities $F^*$ and $\theta^*$, and their properties.}
\label{fig:F}
%\xnote{Need to change $r$, $r^*$ and $F(r)$ to $\theta$ and $\theta^*$ and $F(\theta)$.}
\end{figure}

\section{Trisection and regret analysis}
\label{sec:policy}
We propose an algorithm based on trisections of the potential function $F$ in order to locate level $\theta^*$ at which the maximum expected revenue  $F^*=F(\theta^*)$ is attained.
Our algorithm avoids explicitly estimating individual items' mean utilities $\{v_i\}_{i=1}^N$, and subsequently yields a regret independent of the number of items $N$.
We first give a simplified algorithm (pseudo-code description in Algorithm \ref{alg:trisection}) with an additional $O(\sqrt{\log T})$ term in the regret upper bound and outline its proofs.
We further show how the additional logarithmic   dependency on $T$ can be removed by using more advanced techniques. 
%\xnote{The footnote in Algorithm 1 appears on a different page}
%The complete proofs of all results are deferred to the appendix.
%A pseudo-code description is given in Algorithm \ref{alg:trisection}.

\begin{algorithm}[!t]
	\KwInput{revenue parameters $r_1,\cdots,r_n\in[0,1]$, time horizon $T$}
	\KwOutput{sequence of assortment selections $S_1, S_2, \cdots, S_T\subseteq\mathcal{N}$}
	
	Initialization: $a_0=0$, $b_0=1$\;
	
	\For{$\tau=0,1,\cdots$}{
		$x_{\tau} = \frac{2}{3}a_\tau + \frac{1}{3}b_\tau$, $y_\tau = \frac{1}{3}a_\tau+\frac{2}{3}b_\tau$ \Comment*[r]{\small trisection}
		$\ell_0(x_\tau)=\ell_0(y_\tau)=0$, $u_0(x_\tau)=u_0(y_\tau) = 1$ \Comment*[r]{\small initialization of confidence intervals}
		$\rho_0(x_\tau) = \rho_0(y_\tau) = 0$ \Comment*[r]{\small initialization of accumulated rewards}
		\For{$t=1$ to $16\lceil (y_\tau-x_\tau)^{-2} \ln(T))\rceil$ \textsuperscript{$\dagger$}}{\label{line:alg-mnl-6}
			%   \lIf{$\ell_{t-1}(x_\tau) \leq x_\tau \leq u_{t-1}(x_\tau)$}{
			%     $\rho_t(x_\tau), \ell_t(x_\tau), u_t(x_\tau) \gets \textsc{Explore}(x_\tau,t)$
			%     %\Comment{\small Explore the left midpoint $x_\tau$}
			%   }
			%\lElse{$\rho_t(x_\tau), \ell_t(x_\tau), u_t(x_\tau) \gets \rho_{t-1}(x_\tau), \ell_{t-1}(x_\tau), u_{t-1}(x_\tau)$}
			%\BlankLine
			\lIf{$\ell_{t-1}(y_\tau) \leq y_\tau \leq u_{t-1}(y_\tau)$}{
				%\Comment{\small Explore the right midpoint $y_\tau$}
				$\rho_t(y_\tau), \ell_t(y_\tau), u_t(y_\tau) \gets \textsc{Explore}(y_\tau,t,1/T^2)$
			}\label{line:alg-mnl-7}

			\lElse
			{
				$\rho_t(y_\tau), \ell_t(y_\tau), u_t(y_\tau) \gets \rho_{t-1}(y_\tau), \ell_{t-1}(y_\tau), u_{t-1}(y_\tau)$
			}\label{line:alg-mnl-8}
			%\BlankLine
			
			\BlankLine
			Exploit the left endpoint $a_\tau$: pick assortment $S=\mathcal L_{a_\tau}$\;\label{line:alg-mnl-9}
		}
		
		\Comment{\small Update trisection parameters}
		\lIf{$u_t(y_\tau) < y_\tau$}{ $a_{\tau+1}=a_\tau$, $b_{\tau+1}=y_\tau$ }

		\lElse{$a_{\tau+1} = x_\tau$, $b_{\tau+1}=b_\tau$}
	}
	\vskip 0.1in
	\caption{The trisection algorithm.}
	\label{alg:trisection}
{\nonl \footnotesize \textsuperscript{$\dagger$}Stop whenever the maximum number of iterations $T$ is reached.}
\end{algorithm}
%\footnotetext{}

\setcounter{AlgoLine}{0}
\begin{algorithm}[!t]
	\SetAlgoLined
	\KwInput{revenue level $\theta$, time $t$, confidence level $\delta$}
	\KwOutput{accumulated revenue $\rho_t(\theta)$, confidence intervals $\ell_t(\theta)$ and $u_t(\theta)$}
	Pick assortment $S=\mathcal L_\theta(\mathcal N)$ and observe purchasing action $j\in S\cup\{0\}$\;
	
	Update accumulated reward: $\rho_t(\theta) = \rho_{t-1}(\theta) + r_j$ \Comment*[r]{\small $r_0:=0$}
	Update confidence intervals:\label{line:alg-explore-3}
	$
	%\ell_t(\theta) = \max\left\{\ell_{t-1}(\theta), \frac{\rho_t(\theta)}{t}-\sqrt{\frac{6\ln2T}{t}}\right\};\;\;u_t(\theta)=\min\left\{u_{t-1}(\theta), \frac{\rho_t(\theta)}{t}+\sqrt{\frac{6\ln2T}{t}}\right\}.
	[\ell_t(\theta), u_t(\theta)] = \frac{\rho_t(\theta)}{t} \pm \sqrt{\frac{\log(1/\delta)}{2t}}.
	$
	\vskip 0.1in
	\caption{\textsc{Explore} Subroutine: exploring a certain revenue level $\theta$}
	\label{alg:explore}
\end{algorithm}

To assist with readability, below we list notations used in the algorithm description together with their meanings:
\begin{itemize}[leftmargin=0.2in]
\item[-] $a_\tau$ and $b_\tau$: left and right boundaries that contain $\theta^*$; it is guaranteed that $a_\tau\leq \theta^*\leq b_\tau$ with high probability,
and the regret incurred on failure events is strictly controlled;
\item[-] $x_\tau$ and $y_\tau$: trisection points; $x_\tau$ is closer to $a_\tau$ and $y_\tau$ is closer to $b_\tau$;
\item[-] $\ell_t(y_\tau)$ and $u_t(y_\tau)$: lower and upper confidence bounds for $F(y_\tau)$ established at iteration $t$;
it is guaranteed that $\ell_t(y_\tau)\leq F(y_\tau)\leq u_t(y_\tau)$ with high probability,
and the regret incurred on failure events is strictly controlled;
\item[-] $\rho_t(y_\tau)$: accumulated reward by exploring level set $\mathcal L_{y_\tau}$ up to iteration $t$.
\end{itemize}

With these notations in place, we provide a detailed description of Algorithm \ref{alg:trisection} to facilitate the understanding.
The algorithm operates in epochs (outer iterations) $\tau=1,2,\cdots$  until a total of $T$ assortment selections are made.
The objective of each outer iteration $\tau$ is to find the relative position between trisection points ($x_\tau,y_\tau$)
and the ``reference'' location $\theta^*$, after which the algorithm either moves $a_\tau$ to $x_\tau$ or $b_\tau$ to $y_\tau$, effectively
shrinking the length of the interval $[a_\tau,b_\tau]$ that contains $\theta^*$ to its two thirds.
Furthermore, to avoid a large cumulative regret, level set corresponding to the left endpoint $a_\tau$ is {exploited in each time period within the epoch $\tau$} to offset potentially large regret incurred by exploring $y_\tau$.

%\xnote{Just make sure that $L_{y_\tau}$ is offered for all iterations in epoch $\tau$.}

In Step~\ref{line:alg-mnl-7} and \ref{line:alg-mnl-8} of Algorithm \ref{alg:trisection}, lower and upper confidence bounds $[\ell_t(y_\tau),u_t(y_\tau)]$ for $F(y_\tau)$
are constructed using concentration inequalities (e.g. Hoeffding's inequality \cite{hoeffding1963probability}).
%with adaptively chosen confidence levels $\delta=1/T(y_\tau-x_\tau)^2$.
%Such a choice was motivated by the celebrated MOSS algorithm for multi-armed bandits \cite{audibert2009minimax},
%and our analysis shows that in our trisection algorithmic framework such adaptive confidence choices also leads to the removal of additional $\log T$ terms
 %in the regret upper bound.
%a finite-sample version of the law of iterated logarithms (known as the {lil'UCB} method \cite{jamieson2014lil} under the multi-armed bandit context).
These confidence bounds are updated until the relationship between $y_\tau$ and $F(y_\tau)$ is clear, or a pre-specified number of inner iterations for outer iteration $\tau$
has been reached (set to $n_\tau := \lceil 16(y_\tau-x_\tau)^{-2} \ln(T^2)\rceil$ in Step~\ref{line:alg-mnl-6}). 
%\xnote{In COLT version, we use $n_\tau := \lceil 48 (y_\tau-x_\tau)^{-2} \ln(T^2)\rceil$, please check}
Algorithm \ref{alg:explore} gives detailed descriptions on how such confidence intervals are built, based on repeated exploration of
level set $\mathcal L_{y_\tau}$. %and the fact that the expected revenue collected on $\mathcal L_{y_\tau}$ coincides with $F(y_\tau)$ by definition.

After sufficiently many explorations of $\mathcal L_{y_\tau}$, a decision is made on whether to advance the left bounary (i.e., $a_{\tau+1}\gets x_\tau$)
or the right boundary (i.e., $b_{\tau+1}\gets y_\tau$).
Below we give high-level intuitions on how such decisions are made, with rigorous justifications presented later as part of the proof of the main regret theorem for Algorithm \ref{alg:trisection}.
\begin{enumerate}[leftmargin=0.2in]
\item If there is sufficient evidence that $F(y_\tau) < y_\tau$ (e.g., $u_t(y_\tau)<y_\tau$), then $y_\tau$ must be to the right of $\theta^*$ (i.e., $y_\tau\geq\theta^*$)
due to Lemma \ref{lem:Fmonotonic-right}. Therefore, we will shrink the value of right boundary  by setting $b_{\tau+1}\gets y_\tau$.
\item { On the other hand, when $u_t(y_\tau) \geq y_\tau$, we can conclude that \emph{$x_\tau$ must be to the left of $\theta^*$ (i.e., $x_\tau\leq\theta^*$)}. We show this by contradiction. Assuming that $x_\tau > \theta^*$, since $y_\tau$ is always greater than $x_\tau$ (and thus $y_\tau > \theta^*$) and the gap between $y_\tau$ and $F(y_\tau)$ is at least $y_\tau-x_\tau$ \footnote{By Lemma \ref{lem:Fmonotonic-right}, we have $y_\tau - F(y_\tau)\geq y_\tau-F(x_\tau)\geq y_\tau-x_\tau$}, the gap will be detected by the confidence bounds and thus we will have $u_t(y_\tau) < y_\tau$ with high probability. This leads to a contradiction.
    Since $x_\tau$ is to the left of $\theta^*$, we should increase the value of the left boundary by setting $a_{\tau+1}\gets x_\tau$.
   }
   %\xnote{Yining: I rewrote this part with a bit more details. Please proofread.}
%\item If $u_t(y_\tau)<y_\tau$ fails to hold, then either $F(y_\tau)\leq y_\tau$ or $F(y_\tau)$ is very close to $y_\tau$.
%In either cases, we conclude that \emph{$x_\tau$ must be to the left of $\theta^*$ (i.e., $x_\tau\leq\theta^*$)},
%because otherwise by Lemma \ref{lem:Fmonotonic-right} there will be a gap of width at least $(y_\tau-x_\tau)$ between $y_\tau$ and $F(y_\tau)$, which will be detected by the confidence bounds with high probability. Hence, $a_{\tau+1}\gets x_\tau$ is the correct action.
\end{enumerate}

The following theorem is our main upper bound result for the (worst-case) regret incurred by Algorithm \ref{alg:trisection}.
\begin{theorem}
There exists a universal constant $C_1>0$ such that for all parameters $\{v_i\}_{i=1}^N$ and $\{r_i\}_{i=1}^N$ satisfying $r_i\in[0,1]$,
the regret incurred by Algorithm \ref{alg:trisection} satisfies
\begin{equation}
\reg(\{S_t\}_{t=1}^T) = \mathbb E\sum_{t=1}^TR(S^*)-R(S_t) \leq C_1\sqrt{T\log T}.
\end{equation}
\label{thm:upper}
\end{theorem}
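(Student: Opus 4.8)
The plan is to condition on a high-probability \emph{clean event} on which every confidence interval is valid, show that the trisection correctly brackets $\theta^*$, bound the regret epoch-by-epoch, and sum a geometric series. Write $d_\tau := b_\tau-a_\tau$ and $w_\tau := y_\tau-x_\tau = \tfrac13 d_\tau$. Since each update replaces $[a_\tau,b_\tau]$ by $[a_\tau,y_\tau]$ or $[x_\tau,b_\tau]$, both of length $\tfrac23 d_\tau$, one has $d_\tau=(2/3)^\tau$ and $w_\tau=\tfrac13(2/3)^\tau$ deterministically. By Lemma~\ref{lem:rrequal} and Corollary~\ref{cor:popular}, $R(S^*)=\theta^*$, so the per-period regret of offering $\mathcal L_\theta$ is $\theta^*-F(\theta)$. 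Let $\mathcal E$ be the event that $|\rho_t(y_\tau)/t-F(y_\tau)|\le\sqrt{\log(1/\delta)/(2t)}$ at every \textsc{Explore} call, with $\delta=1/T^2$; since at most $T$ such calls occur and each fails with probability $\le 2\delta$ by Hoeffding's inequality, a union bound gives $\Pr[\mathcal E^c]=O(1/T)$. On $\mathcal E^c$ the total regret is at most $T\cdot\theta^*\le T$, contributing $O(1)$ in expectation, so it suffices to bound the regret on $\mathcal E$.

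Next I would prove by induction that $a_\tau\le\theta^*\le b_\tau$ on $\mathcal E$; the base case is $\theta^*\in(0,1)$. If the algorithm sets $b_{\tau+1}=y_\tau$ (because $u_t(y_\tau)<y_\tau$), then $F(y_\tau)\le u_t(y_\tau)<y_\tau$, and the contrapositive of Lemma~\ref{lem:Fmonotonic-left} forces $y_\tau>\theta^*$, preserving $\theta^*\le b_{\tau+1}$. If instead $u_t(y_\tau)\ge y_\tau$ and the algorithm sets $a_{\tau+1}=x_\tau$, I claim $x_\tau\le\theta^*$: were $x_\tau>\theta^*$, then $y_\tau>x_\tau>\theta^*$ and Lemma~\ref{lem:Fmonotonic-right} gives $y_\tau-F(y_\tau)\ge y_\tau-F(x_\tau)\ge y_\tau-x_\tau=w_\tau$; but the choice $n_\tau=\lceil 16 w_\tau^{-2}\log(1/\delta)\rceil$ makes the half-width $\sqrt{\log(1/\delta)/(2n_\tau)}\le w_\tau/\sqrt{32}$, so on $\mathcal E$ one would have $u_t(y_\tau)\le F(y_\tau)+2\cdot w_\tau/\sqrt{32}<F(y_\tau)+w_\tau\le y_\tau$, a contradiction. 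Hence $a_{\tau+1}=x_\tau\le\theta^*$, completing the induction; this is exactly the asymmetric detection guarantee that the footnote to the algorithm's update rule anticipates.

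I would then bound the regret of a single epoch $\tau$, splitting it into the $n_\tau$ exploitation steps at $a_\tau$ and the exploration steps at $y_\tau$. For exploitation, $a_\tau\le\theta^*$ and Lemma~\ref{lem:Fmonotonic-left} give $F(a_\tau)\ge a_\tau$, so the per-period regret is $\theta^*-F(a_\tau)\le\theta^*-a_\tau\le d_\tau$, and the total exploitation regret is at most $n_\tau d_\tau=O(w_\tau^{-2}\log T\cdot d_\tau)=O((3/2)^\tau\log T)$. For exploration when $y_\tau\le\theta^*$, Lemma~\ref{lem:Fmonotonic-left} gives $F(y_\tau)\ge y_\tau$, so the per-period regret is $\theta^*-F(y_\tau)\le\theta^*-y_\tau\le b_\tau-y_\tau=w_\tau$, and the total is at most $n_\tau w_\tau=O((3/2)^\tau\log T)$, the same order.

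The hardest case, which I expect to be the main obstacle, is exploration when $y_\tau>\theta^*$: here $F(y_\tau)$ can sit far below $\theta^*$, so the per-period regret $\theta^*-F(y_\tau)\le y_\tau-F(y_\tau)=:g$ may be $\Theta(1)$. The resolution is that a large gap is detected fast: on $\mathcal E$, \textsc{Explore} halts once $2\sqrt{\log(1/\delta)/(2t)}<g$, i.e. after at most $\min\{n_\tau,\,2g^{-2}\log(1/\delta)+1\}$ steps, so the exploration regret is at most $\min\{n_\tau,\,2g^{-2}\log(1/\delta)+1\}\cdot g$. Optimizing over $g$ — the increasing branch $n_\tau g$ and the decreasing branch $2g^{-1}\log(1/\delta)$ cross at $g\asymp\sqrt{\log(1/\delta)/n_\tau}$ — yields the uniform bound $O(\sqrt{n_\tau\log(1/\delta)})=O(w_\tau^{-1}\log T)=O((3/2)^\tau\log T)$, again matching the exploitation term (the undetected sub-case $g<w_\tau/4$, where $a_{\tau+1}=x_\tau$, falls on the increasing branch and is consistent with the bracketing induction, since $x_\tau>\theta^*$ would force $g\ge w_\tau$). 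Finally I would sum over epochs: epoch $\tau$ consumes $\Theta(n_\tau)=\Theta((9/4)^\tau\log T)$ periods, so the time budget $\sum_{\tau\le K}n_\tau\lesssim T$ gives $(9/4)^K\asymp T/\log T$, hence $(3/2)^K\asymp\sqrt{T/\log T}$. As the per-epoch regret is $O((3/2)^\tau\log T)$, the geometric sum is $O((3/2)^K\log T)=O(\sqrt{T\log T})$; adding the $O(1)$ from $\mathcal E^c$ proves $\reg(\{S_t\}_{t=1}^T)=O(\sqrt{T\log T})$, establishing Theorem~\ref{thm:upper}.
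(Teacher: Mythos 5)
Your proposal is correct and follows essentially the same route as the paper: the clean-event/Hoeffding argument is Lemma~\ref{lem:CI}, your bracketing induction with the asymmetric two-case analysis is Lemma~\ref{lem:trisection}, your per-epoch bound $O(\varepsilon_\tau^{-1}\log T)$ (including the ``large gap is detected fast'' optimization over $g$, which mirrors the paper's bound on the number of exploration steps $m_\tau$) is Lemma~\ref{lem:regret-trisection}, and the concluding geometric sum with the time-budget constraint $T\gtrsim n_{\tau_0}\gtrsim\varepsilon_{\tau_0}^{-2}\log T$ is identical. The only (harmless) difference is that you spell out explicitly that the failure event contributes $O(1)$ expected regret, which the paper leaves implicit.
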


\subsection{Proof sketch}

In the rest of the section we sketch key steps and lemmas towards the proof of Theorem \ref{thm:upper}. The proofs of technical lemmas are provided  in the supplementary material.
We first state a simple lemma showing that the confidence bound $\ell_t(y_\tau)$ and $u_t(y_\tau)$
constructed in Algorithm \ref{alg:trisection} contains $F(y_\tau)$ with high probability.

\begin{lemma}
	With probability $1-O(T^{-1})$, $\ell_t(\theta)\leq F(\theta)\leq u_t(\theta)$ for all $t$.
	\label{lem:CI}
\end{lemma}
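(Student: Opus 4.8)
The plan is to read the confidence bounds produced by Algorithm~\ref{alg:explore} as an ordinary empirical-mean estimator and to invoke Hoeffding's inequality~\cite{hoeffding1963probability}; the only genuine work is to tame the adaptivity introduced by the trisection and to control the union bound over the exploration steps. Throughout, the level actually explored in epoch $\tau$ is $\theta=y_\tau$, so it suffices to certify $\ell_t(y_\tau)\le F(y_\tau)\le u_t(y_\tau)$ for every epoch.

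First I would fix an epoch $\tau$ and condition on the history $\mathcal H_\tau$ of purchasing actions observed strictly before epoch $\tau$ begins. Since the boundaries $a_\tau,b_\tau$, and hence the trisection point $y_\tau$, are measurable with respect to $\mathcal H_\tau$, conditioning on $\mathcal H_\tau$ fixes the explored level. Each call $\textsc{Explore}(y_\tau,t,1/T^2)$ then offers the fixed assortment $\mathcal L_{y_\tau}$ and records a reward $r_{j}$ which, conditionally on $\mathcal H_\tau$, is an independent draw with mean $\mathbb E[r_j\mid\mathcal L_{y_\tau}]=R(\mathcal L_{y_\tau})=F(y_\tau)$ and support in $[0,r_\infty]\subseteq[0,1]$ (using $r_0=0$ and Assumption~(A1)). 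Thus $\rho_t(y_\tau)/t$ is the empirical mean of $t$ i.i.d.\ samples with the correct mean $F(y_\tau)$, and Hoeffding's inequality gives, for each fixed $t$,
\[
\Pr\!\left[\left|\frac{\rho_t(y_\tau)}{t}-F(y_\tau)\right|>\sqrt{\frac{\log(1/\delta)}{2t}}\,\Big|\,\mathcal H_\tau\right]\le 2\delta,
\]
which for $\delta=1/T^2$ is at most $2/T^2$; taking expectation over $\mathcal H_\tau$ removes the conditioning while preserving the bound.

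Two bookkeeping points then close the argument. On the ``copy'' branch (Step~\ref{line:alg-mnl-8}) both $\ell_t$ and $u_t$ are inherited unchanged, so if the interval contained $F(y_\tau)$ at the last explored step it continues to do so; hence the bound need only be verified at steps where an exploration actually occurs, and the $t=0$ case is trivial since $F(\theta)=R(\mathcal L_\theta)\in[0,1]$ lies inside the initialized interval $[\ell_0,u_0]=[0,1]$. For the count of explorations, note that the interval $[a_\tau,b_\tau]$ contracts by exactly the factor $2/3$ regardless of which branch is taken, so $y_\tau-x_\tau=\tfrac13(2/3)^\tau$ and $n_\tau=\lceil16(y_\tau-x_\tau)^{-2}\ln(T^2)\rceil$ are deterministic and the index set of the union bound is a well-defined finite set; moreover each exploration occupies a distinct one of the $T$ available time periods, so there are at most $T$ explorations in total. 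A union bound over these at most $T$ steps, each failing with probability at most $2/T^2$, yields overall failure probability at most $2/T=O(T^{-1})$, which is exactly the claim.

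I expect the main obstacle to be precisely this adaptivity: the level $y_\tau$ being explored is itself random, depending on the outcomes of earlier epochs, so a naive union bound over a fixed grid of $(\theta,t)$ pairs is unavailable. The device of conditioning on $\mathcal H_\tau$ is what makes the within-epoch samples genuinely i.i.d.\ with the intended mean $F(y_\tau)$, and pairing it with the determinism of $n_\tau$ and the global budget $T$ is what lets the union bound go through without inflating the failure probability beyond $O(T^{-1})$.
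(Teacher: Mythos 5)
Your proof is correct and follows essentially the same route as the paper's: apply Hoeffding's inequality to the empirical mean $\rho_t(\theta)/t$ with confidence parameter $\delta=1/T^2$, then take a union bound over the at most $T$ exploration steps to obtain failure probability $O(T^{-1})$. The only difference is that you make explicit the conditioning on the pre-epoch history to justify the conditionally i.i.d.\ structure of the rewards (and the triviality of the copy/initialization steps), a point the paper's proof leaves implicit.
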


The following lemma, based on properties of the potential function $F$ and Lemma \ref{lem:CI},
establishes that (with high probability) the shrinkage of $a_\tau$ or $b_\tau$ are ``consistent'';
i.e., $\theta^*$ is always contained in $[a_\tau,b_\tau]$.
Its proof is based on the intuitive two-case analysis discussed before Theorem \ref{thm:upper} and will be provided in the supplementary material.

\begin{lemma}
With probability $1-O(T^{-1})$, $a_\tau\leq\theta^*\leq b_\tau$ for all $\tau=1,2,\cdots,\tau_0$,
where $\tau_0$ is the last outer iteration of Algorithm \ref{alg:trisection}.
\label{lem:trisection}
\end{lemma}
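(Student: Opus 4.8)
The plan is to condition on the high-probability event from Lemma \ref{lem:CI} on which all the confidence bounds are simultaneously valid, i.e.\ $\ell_t(\theta)\le F(\theta)\le u_t(\theta)$ for every $t$, and then to establish the containment $a_\tau\le\theta^*\le b_\tau$ by induction on the epoch index $\tau$. Conditional on this event the remaining argument is entirely deterministic, so the $1-O(T^{-1})$ probability in the statement comes solely from Lemma \ref{lem:CI}. For the base case I would note that $a_0=0$, $b_0=1$, and $\theta^*\in(0,1]$: positivity is part of Lemma \ref{lem:rrequal}, while $\theta^*=F^*=R(S^*)\le r_\infty\le 1$ by Corollary \ref{cor:popular} and assumption (A1). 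Since each update assigns to the new endpoints values already in $[0,1]$ with $a_\tau<b_\tau$, the trisection points satisfy $a_\tau<x_\tau<y_\tau<b_\tau$ and in particular $y_\tau-x_\tau=\tfrac13(b_\tau-a_\tau)>0$.

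For the inductive step I would assume $a_\tau\le\theta^*\le b_\tau$ and split on the branch taken at the end of epoch $\tau$. The \emph{easy} branch is $u_t(y_\tau)<y_\tau$, where we set $b_{\tau+1}=y_\tau$ and $a_{\tau+1}=a_\tau$. Here validity gives $F(y_\tau)\le u_t(y_\tau)<y_\tau$, and the contrapositive of Lemma \ref{lem:Fmonotonic-left} (which asserts $F(\theta)\ge\theta$ for $\theta\le\theta^*$) forces $y_\tau>\theta^*$; together with $a_{\tau+1}=a_\tau\le\theta^*$ from the hypothesis, the invariant is preserved.

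The \emph{hard} branch, and the step I expect to be the main obstacle, is $u_t(y_\tau)\ge y_\tau$, where we advance the left endpoint to $a_{\tau+1}=x_\tau$ while keeping $b_{\tau+1}=b_\tau$; the invariant then requires $x_\tau\le\theta^*$. I would argue by contradiction: suppose $x_\tau>\theta^*$, so $y_\tau>x_\tau>\theta^*$ and both trisection points lie to the right of $\theta^*$. Lemma \ref{lem:Fmonotonic-right} (giving $F(\theta)\le\theta$ for $\theta\ge\theta^*$ together with the non-increasing behaviour $F(y_\tau)\le F(x_\tau)$) yields the gap estimate $y_\tau-F(y_\tau)\ge y_\tau-F(x_\tau)\ge y_\tau-x_\tau$, which is exactly the footnoted inequality. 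The crux is then quantitative: I must show this gap is detected, i.e.\ $u_t(y_\tau)<y_\tau$, contradicting the branch condition. The key structural observation is that $y_\tau$ is explored on a contiguous prefix $t=1,\dots,k$ of the inner loop and is then \emph{frozen} permanently once $y_\tau$ leaves $[\ell_{t},u_{t}]$; consequently, whenever \textsc{Explore} is invoked the loop counter equals the true sample count, and with $\delta=1/T^2$ the confidence half-width after $t$ samples is $\sqrt{\ln(T^2)/(2t)}$. Under the contradiction hypothesis, freezing via $\ell_k(y_\tau)>y_\tau$ is impossible (it would give $F(y_\tau)>y_\tau$, hence $y_\tau<\theta^*$ by Lemma \ref{lem:Fmonotonic-right}), and freezing via $u_k(y_\tau)<y_\tau$ would put us in the easy branch; therefore $y_\tau$ is explored the full $n_\tau=\lceil 16(y_\tau-x_\tau)^{-2}\ln(T^2)\rceil$ times. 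Substituting $t=n_\tau$ bounds the half-width by $(y_\tau-x_\tau)/\sqrt{32}$, so on the validity event $u_{n_\tau}(y_\tau)\le F(y_\tau)+2\sqrt{\ln(T^2)/(2n_\tau)}<F(y_\tau)+\tfrac12(y_\tau-x_\tau)\le y_\tau-\tfrac12(y_\tau-x_\tau)<y_\tau$, contradicting $u_{n_\tau}(y_\tau)\ge y_\tau$. Hence $x_\tau\le\theta^*$, and with $b_{\tau+1}=b_\tau\ge\theta^*$ the invariant is preserved, closing the induction. A final remark is that the induction only needs epochs $0,\dots,\tau_0-1$ to run their full $n_\tau$ inner iterations (which they do, since $\tau_0$ is by definition the last epoch), so the possibly-truncated last epoch does not affect the endpoints $a_\tau,b_\tau$ asserted in the statement.
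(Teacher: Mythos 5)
Your proposal is correct and follows essentially the same route as the paper's proof: condition on the validity event of Lemma \ref{lem:CI}, induct on the epoch, handle the branch $u_t(y_\tau)<y_\tau$ via the contrapositive of Lemma \ref{lem:Fmonotonic-left}, and in the branch $u_t(y_\tau)\geq y_\tau$ derive a contradiction from the gap estimate $y_\tau-F(y_\tau)\geq y_\tau-x_\tau$ of Lemma \ref{lem:Fmonotonic-right} together with the width of the confidence interval after $n_\tau$ explorations. Your only real departure is the justification that $y_\tau$ is explored for all $n_\tau$ inner iterations: you give a careful freezing argument ruling out early stopping via $\ell_k(y_\tau)>y_\tau$ or $u_k(y_\tau)<y_\tau$, which is actually tighter than the paper's appeal to monotonicity of $u_t$, so this is a point in your favor rather than a gap.
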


Using Lemmas \ref{lem:CI} and \ref{lem:trisection}, we are able to prove the following lemma that upper bounds the regret incurred at each outer iteration $\tau$ using the distance between the trisection points $x_\tau$ and $y_\tau$.

\begin{lemma}\label{lem:regret-trisection}
	For $\tau=0,1,\cdots$ let $\mathcal T(\tau)$ denote the set of all indices of inner iterations at outer iteration $\tau$.
	Conditioned on the success events in Lemmas \ref{lem:CI} and \ref{lem:trisection}, it holds that
	\begin{equation}
	\mathbb E\sum_{t\in\mathcal T(\tau)}R(S^*)-R(S_t) \lesssim \varepsilon_\tau^{-1}\log T.
	\end{equation}
\end{lemma}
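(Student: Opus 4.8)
The plan is to condition throughout on the two success events of Lemmas~\ref{lem:CI} and \ref{lem:trisection}, so that $a_\tau\le\theta^*\le b_\tau$ and every confidence interval is valid. Write $\varepsilon_\tau:=y_\tau-x_\tau=\tfrac13(b_\tau-a_\tau)$, and recall from Corollary~\ref{cor:popular} and Lemma~\ref{lem:rrequal} that $R(S^*)=F^*=\theta^*$, while the number of inner iterations is $n_\tau=|\mathcal T(\tau)|\asymp\varepsilon_\tau^{-2}\log T$. In each inner iteration at most two assortments are offered: the exploitation set $\mathcal L_{a_\tau}$ (always, in Step~\ref{line:alg-mnl-9}) and the exploration set $\mathcal L_{y_\tau}$ (only while its interval has not separated $y_\tau$). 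Accordingly I would split the per-epoch regret into an exploitation part and an exploration part and bound each by $O(\varepsilon_\tau^{-1}\log T)$.

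For the exploitation part, since $a_\tau\le\theta^*\le b_\tau$ and $b_\tau-a_\tau=3\varepsilon_\tau$, we have $a_\tau\ge\theta^*-3\varepsilon_\tau$. As $a_\tau\le\theta^*$, Lemma~\ref{lem:Fmonotonic-left} gives $F(a_\tau)\ge a_\tau$, so each exploitation step costs $\theta^*-F(a_\tau)\le\theta^*-a_\tau\le 3\varepsilon_\tau$. Summing over the $n_\tau$ inner iterations yields exploitation regret $\le 3n_\tau\varepsilon_\tau\lesssim\varepsilon_\tau^{-1}\log T$, which is already of the target order; note this uses the left-side behaviour of $F$ and the fact that $a_\tau$ is kept close to $\theta^*$ from below.

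The exploration part is the crux. One exploration of $\mathcal L_{y_\tau}$ costs $\theta^*-F(y_\tau)$. If $y_\tau\le\theta^*$, Lemma~\ref{lem:Fmonotonic-left} gives $F(y_\tau)\ge y_\tau$, and since $\theta^*\le b_\tau$ forces $y_\tau\ge\theta^*-\varepsilon_\tau$, each step costs at most $\varepsilon_\tau$; with at most $n_\tau$ such steps this contributes $\lesssim\varepsilon_\tau^{-1}\log T$. The substantive case is $y_\tau>\theta^*$, where $F$ need not be Lipschitz and the per-step cost $\theta^*-F(y_\tau)\le y_\tau-F(y_\tau)=:\Delta$ may be large; here I would use the stopping rule to limit the exploration count. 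On the success event the interval half-width after $t$ explorations is $\sqrt{\log(T^2)/(2t)}$ and $F(y_\tau)\in[\ell_t,u_t]$, so $u_t(y_\tau)\le F(y_\tau)+2\sqrt{\log(T^2)/(2t)}=y_\tau-\Delta+2\sqrt{\log(T^2)/(2t)}$; hence $u_t(y_\tau)<y_\tau$ (which triggers $b_{\tau+1}\gets y_\tau$ and ends exploration) as soon as $t\gtrsim\Delta^{-2}\log T$. The number of explorations is therefore at most $\min\{n_\tau,\,O(\Delta^{-2}\log T)\}$, and the exploration regret is at most $\Delta\cdot\min\{n_\tau,\,O(\Delta^{-2}\log T)\}$.

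A short case split finishes the bound: if $\Delta\le\varepsilon_\tau$ then $n_\tau\asymp\varepsilon_\tau^{-2}\log T$ is the binding factor and the product is $\le\Delta\,n_\tau\le\varepsilon_\tau n_\tau\lesssim\varepsilon_\tau^{-1}\log T$; if $\Delta>\varepsilon_\tau$ the stopping bound binds and the product is $\lesssim\Delta\cdot\Delta^{-2}\log T=\Delta^{-1}\log T<\varepsilon_\tau^{-1}\log T$. Adding the exploitation and exploration contributions gives the claimed $O(\varepsilon_\tau^{-1}\log T)$ bound. I expect this final balancing for $y_\tau>\theta^*$—trading the early-stopping count $\Delta^{-2}\log T$ against the per-step regret $\Delta$—to be the main obstacle, precisely because it is where the absence of Lipschitz growth of $F$ is compensated by the adaptive stopping of the confidence bounds; the asymmetry of the trisection, in which only $y_\tau$ is explored while $a_\tau$ is exploited, is exactly what lets both parts be controlled simultaneously.
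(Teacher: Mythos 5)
Your proof is correct and takes essentially the same route as the paper's: the same exploitation/exploration decomposition, the same $3\varepsilon_\tau n_\tau$ bound for exploiting $\mathcal L_{a_\tau}$, and the same key observation that $\mathcal L_{y_\tau}$ is explored only while the confidence interval still straddles $y_\tau$. The only cosmetic difference is that the paper bounds the gap via $y_\tau - F(y_\tau) \lesssim \sqrt{\log T/m_\tau}$ (count $m_\tau$ fixed, gap bounded) and multiplies through, whereas you invert the same inequality to get $m_\tau \lesssim \Delta^{-2}\log T$ and finish with a case split on $\Delta \lessgtr \varepsilon_\tau$; both arguments rest on the identical relation $m_\tau \Delta^2 \lesssim \log T$.
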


We are now ready to prove Theorem \ref{thm:upper}. 

%\xnote{can anyone look into latex to change Proof to ``Proof of Theorem \ref{thm:upper}.''}

\begin{proof}
Recall the definition that $\varepsilon_\tau = y_\tau-x_\tau$ for outer iterations $\tau=0,1,\cdots$.
Because after each outer iteration we either set $b_{\tau+1}=y_\tau$ or $a_{\tau+1}=x_\tau$,
it is easy to verify that $\varepsilon_\tau = (2/3)\cdot \varepsilon_{\tau-1}$.
Subsequently, invoking Lemma \ref{lem:trisection} and using summation of geometric series we have
%Summing all three types of regret incurred at each outer iteration $\tau$ we can bound the total regret of Algorithm \ref{alg:trisection}.
%Denote $\tau_0$ as the last outer iteration at which the horizon $T$ is reached and the algorithm stops.
%The overall regret of $\pi$ can then be bounded as
\begin{equation}
\mathbb E\sum_{t=1}^TR(S^*)-R(S_t) \lesssim \sum_{\tau=0}^{\tau_0}{\varepsilon_{\tau}^{-1}\log T}\lesssim \varepsilon_{\tau_0}^{-1}\log T,
\label{eq:di1}
\end{equation}
where $\tau_0$ is the total number of outer iterations executed by Algorithm \ref{alg:trisection}.
On the other hand, because at each outer iteration $\tau$ the revenue level $a_\tau$ is exploited for exactly $n_\tau=16\lceil (y_\tau-x_\tau)^{-2}\ln (T^2)\rceil$ times, we have
\begin{equation}
T \geq n_{\tau_0} \gtrsim \varepsilon_{\tau_0}^{-2}\log T.
\label{eq:di2}
\end{equation}
Combining Eqs.~(\ref{eq:di1}) and (\ref{eq:di2}) we conclude that
$\reg(\{S_t\}_{t=1}^T)  \lesssim \sqrt{T\log T}$.
\end{proof}

%\xnote{We need a bit more details on the proof and thus I extract something from the COLT version. Please check and delete repetitions from the supplement.}

\section{Improved regret with adaptive confidence levels}\label{sec:improved-regret}

%\xnote{needs to be re-wrote.}
In this section we consider a variant of Algorithm \ref{alg:trisection} that achieves an improved regret of $O(\sqrt{T})$.
The key idea is to use an adaptive allocation of confidence levels, by allowing larger failure probability as more data are collected.
This is because  later failures result in smaller accumulated regret.
Such a strategy is motivated by the MOSS algorithm \cite{audibert2009minimax} for multi-armed bandits.
However, our analysis is quite different from \cite{audibert2009minimax},
involving new concentration inequalities and induction arguments tailored specifically to our model and proposed policy.

We start with a new uniform concentration inequality for adaptively chosen confidence levels.
\begin{lemma}
Let $X_1,\cdots,X_L$ be i.i.d.~random variables with mean $\mu$ and satisfy $a\leq X_i\leq b$ almost surely for all $\ell\in[L]$.
For any $\delta\in(0,1]$, it holds that
\begin{equation}
\Pr\left[\forall \ell\in[L], \left|\frac{1}{\ell}\sum_{i=1}^\ell X_i - \mu\right| \leq \sqrt{\frac{2(b-a)^2\ln(8/(\delta\ell))}{\ell}}\right] \geq 1-L\delta.
\end{equation}
\label{lem:uniform-concentration}
\end{lemma}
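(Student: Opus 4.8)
The plan is to reduce the uniform-over-$\ell$ statement to a per-$\ell$ tail bound via Hoeffding's inequality and then combine the $L$ resulting events. Fix $\ell\in[L]$ and write $\bar X_\ell := \frac1\ell\sum_{i=1}^\ell X_i$. Since each $X_i$ lies in $[a,b]$ almost surely, Hoeffding's inequality gives, for any radius $r>0$,
\[
\Pr\bigl[|\bar X_\ell - \mu| > r\bigr] \le 2\exp\!\Bigl(-\tfrac{2\ell r^2}{(b-a)^2}\Bigr).
\]
The whole point of the adaptive construction is to take $r = r_\ell := \sqrt{2(b-a)^2\ln(8/(\delta\ell))/\ell}$, matching exactly the radius in the statement. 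First I would substitute this $r_\ell$ into the exponent: the factors $(b-a)^2$ and $\ell$ cancel, leaving $-4\ln(8/(\delta\ell))$, so that the per-$\ell$ failure probability is at most $2(\delta\ell/8)^4$. This is the key computation, and it explains the otherwise mysterious constant $8$ and the factor $2$ inside the logarithm: they are tuned so that the deterministic confidence radius $r_\ell$ converts into a failure probability decaying like a fourth power of $\delta\ell$.

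The second step is to sum these failure probabilities over $\ell\in[L]$. Because $2(\delta\ell/8)^4 \le \delta$ holds as soon as $\delta\ell/8 \le (\delta/2)^{1/4}$ — that is, for all $\ell$ in the regime where the logarithm is bounded away from zero — each inner event fails with probability at most $\delta$, and a union bound over the $L$ choices of $\ell$ yields total failure probability at most $L\delta$, i.e. exactly the claimed $1-L\delta$ success guarantee. The asymmetry between the fourth-power decay in $\delta\ell$ and the mere linear growth in $\ell$ is what keeps the sum manageable: intuitively the confidence level is deliberately loosened as $\ell$ grows (later failures are cheaper, since they contribute less accumulated regret), but the fourth power still forces each term below $\delta$.

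The main obstacle is that this clean union bound is only valid while $\ln(8/(\delta\ell))$ stays comfortably positive, i.e. for $\ell$ bounded away from $8/\delta$; as $\ell\to 8/\delta$ the radius $r_\ell$ shrinks below the typical fluctuation $\sqrt{(b-a)^2/\ell}$ and the per-$\ell$ estimate degrades. To secure the statement on the full range I would replace the naive union bound by a dyadic peeling argument: partition $[L]$ into blocks $\{\ell : 2^k \le \ell < 2^{k+1}\}$, control the supremum of the partial-sum martingale $S_\ell = \sum_{i\le\ell}(X_i-\mu)$ on each block with a maximal (Doob) form of Hoeffding's inequality, and sum the resulting block failures over the $O(\log L)$ scales, the constant $8$ again being chosen so the geometric series over scales telescopes to the desired order. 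I expect the delicate point to be the non-monotonic, ``bump-shaped'' boundary $\ell\mapsto \ell r_\ell = \sqrt{2(b-a)^2\,\ell\ln(8/(\delta\ell))}$, which first increases and then decreases; within each block one must correctly identify the minimizing endpoint of $\ell r_\ell$ before the maximal inequality can be applied.
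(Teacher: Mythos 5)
Your main argument---plain Hoeffding at the full radius $r_\ell$ followed by a union bound over all $\ell\in[L]$---has a genuine gap. The per-$\ell$ computation is correct: the failure probability at scale $\ell$ is at most $2(\delta\ell/8)^4$. But your claim that $2(\delta\ell/8)^4\le\delta$ holds ``for all $\ell$ in the regime where the logarithm is bounded away from zero'' is a false equivalence. The inequality $2(\delta\ell/8)^4\le\delta$ requires $\ell\lesssim\delta^{-3/4}$, whereas the lemma is non-vacuous for all $L<1/\delta$ (when $L\ge 1/\delta$ the bound $1-L\delta\le 0$ is trivial), and in the intermediate range $\delta^{-3/4}\ll\ell<\delta^{-1}$ the logarithm is still at least $\ln 8$ while the per-$\ell$ failure bound far exceeds $\delta$. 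Concretely, take $\delta=5\times 10^{-7}$ and $L=10^6$, so $L\delta=1/2$ and the claim is non-trivial: at $\ell=L$ the bound is $2(\delta L/8)^4\approx 3\times10^{-5}\approx 60\,\delta$, and the total $\sum_{\ell\le L}2(\delta\ell/8)^4\approx\delta^4L^5/10240\approx 6$, an order of magnitude above the budget $L\delta=1/2$ (indeed above $1$). The fourth-power decay in $\delta\ell$ is simply not enough when the per-step budget is only $\delta$ but $\ell$ ranges up to order $1/\delta$; no choice of constants inside the logarithm rescues a scale-by-scale union bound over all $L$ values of $\ell$.

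Your fallback---dyadic peeling with Hoeffding's maximal inequality---is the correct fix and is exactly the paper's proof, but as sketched it omits the two quantitative ingredients that make it close. First, the failure budget must grow with the scale: the paper applies Hoeffding only at anchors $\ell\in\{1,2,4,\ldots,2^{\lfloor\log_2L\rfloor}\}$ with \emph{half} the final radius, so each anchor fails with probability at most $\delta\ell/4$, and the geometric sums $\sum_j\delta 2^j/4\le\delta L/2$ (once for the anchors, once for the within-block maximal inequalities) give exactly the claimed $1-L\delta$; a uniform per-scale budget of $\delta$ would not. Second, your worry about the ``bump-shaped boundary'' $\ell\mapsto\ell r_\ell$ is a red herring: the maximal inequality is applied not along the moving boundary but at the fixed, block-constant deviation level $\sqrt{(\ell/2)\ln(8/(\delta\ell))}$ anchored at the block's left endpoint; the anchor and block deviations then add to $\sqrt{2\ell\ln(8/(\delta\ell))}$, and dividing by $m\in[\ell,2\ell)$ recovers the stated radius at every interior $m$ via $(\ell/m)\ln(8/(\delta\ell))\le\ln(8/(\delta m))$, which holds precisely because $\ln(8/(\delta\ell))\ge\ln 8>2$ in the only regime where there is anything to prove ($L\delta<1$). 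Without these steps the sketch is a plan, not a proof.
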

%\xnote{By Hoeffding's inequality, this should be $\sqrt{\frac{\ln[2/(\delta)] (b-a)^2}{2\ell}}$, do I miss something}

The proof of Lemma \ref{lem:uniform-concentration} is placed in the supplementary material, based on a careful doubling argument
with Hoeffding's maximal inequality (\cite{hoeffding1963probability}, re-phrased in Lemma \ref{lem:hoeffding-maximal}).
Compared to the classical Hoeffding's inequality (Lemma \ref{lem:hoeffding}) with the union bound, 
one notable difference is the increasing ``failure probability'' as $\ell$ increases (effectively $\ell\delta$ in $\sqrt{\frac{2\ln(8/(\delta\ell))(b-a)^2}{\ell}}$  instead of $\delta$). This allows the confidence intervals to be much shorter for large $\ell$.

%The key idea is to use the finite-sample law-of-iterated-logarithm (LIL, \cite{darling1985iterated}) confidence intervals \cite{jamieson2014lil}
%together with an adaptive choice of confidence parameters similar to the MOSS strategy \cite{audibert2009minimax}
%in order to carefully upper bounding regret induced by failure probabilities.

%More specifically, 
With Lemma \ref{lem:uniform-concentration}, we are ready to describe the variant of Algorithm \ref{alg:trisection}, which attains the tight regret bound.
Most steps in Algorithms \ref{alg:trisection} and \ref{alg:explore} remain unchanged,
and the changes are summarized below:
\begin{itemize}[leftmargin=0.2in]
\item[-] Step \ref{line:alg-explore-3} in Algorithm \ref{alg:explore} is replaced with 
\begin{equation}
[\ell_t(\theta), u_t(\theta)] = \frac{\rho_t(\theta)}{t} \pm \sqrt{\frac{2\ln[8/(\delta t)]}{t}}.
\label{eq:lil-ci}
\end{equation}
\item[-] Step \ref{line:alg-mnl-7} in Algorithm \ref{alg:trisection} is replaced with $\textsc{Explore}(y_\tau,t,1/T)$;
%for an adaptive confidence parameter $\delta = 1/(T(y_\tau-x_\tau)^2)$;
correspondingly, the number of inner iterations is changed to $n_\tau = 8\lceil(y_\tau-x_\tau)^{-2}\ln(8T(y_\tau-x_\tau)^2)\rceil$.
\end{itemize}

The first change for improving the regret is the way how confidence intervals $[\ell_t(\theta),u_t(\theta)]$ of $F(\theta)$ is constructed.
Instead of using fixed confidence level $1/T^2$ as in the baseline policy,
in the revised policy \emph{varying} confidence levels are employed, with ``effective'' failure probabilities increase as the algorithm collects  more data.

%Comparing the new confidence interval in Eq.~(\ref{eq:lil-ci}) with the original one in Algorithm \ref{alg:explore},
%the important difference is the $\ln\ln(2T)$ term arising from the law of the iterated logarithm, which makes the confidence intervals hold \emph{uniformly} for all $t$.
%This also leads to a different choice of confidence parameter $\delta$ in constructing confidence intervals, which is the second important change we make.
%In particular, instead of using a universal confidence level
%\footnote{$\delta=O(1/T^2)$ rather than $\delta=O(1/T)$ is used because an additional union bound is required for all inner iterations $t$ in each outer iteration $\tau$ for confidence intervals
%constructed via the Hoeffding's inequality.}
% $\delta = O(1/T^2)$ throughout the entire procedure, ``adaptive'' confidence levels $\delta = O(1/(T(y_\tau-x_\tau)^2))$
%are used, which increases as the algorithm moves onto later iterations.
%Such choice of confidence parameters is motivated by the fact that the accumulated regret suffers less from a confidence interval failure at later iterations. Indeed, since we are relatively closer to the optimal assortment, the ``excess regret'' suffered when the confidence interval fails to cover the true potential function value is smaller.
We also remark that similar confidence parameter choices were also adopted in \cite{audibert2009minimax} to remove additional $\log(T)$ factors
in multi-armed bandit problems.

The following theorem shows that the algorithm variant presented above achieves an asymptotic regret of $O(\sqrt{T})$,
considerably improving Theorem \ref{thm:upper} with an $O(\sqrt{T\log T})$ regret bound.
Its proof is rather technical and involves careful analysis of failure events at each outer iteration $\tau$ of the trisection algorithm.
To highlight the main idea behind the proof, we provide a sketch of the proof in Sec.~\ref{sec:proof_sketch_adap} and defer the entire proof of Theorem \ref{thm:upper-lil} to the supplement.

\begin{theorem}
There exists a universal constant $C_1>0$ such that for all parameters $\{v_i\}_{i=1}^N$ and $\{r_i\}_{i=1}^N$ satisfying $r_i\in[0,1]$,
the regret incurred by the variant of Algorithm \ref{alg:trisection} described above satisfies
\begin{equation}
\mathrm{Regret}(\{S_t\}_{t=1}^T) =\mathbb E\sum_{t=1}^TR(S^*)-R(S_t) \leq C_1\sqrt{T}.
\end{equation}
\label{thm:upper-lil}
\end{theorem}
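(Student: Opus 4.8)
\textbf{Proof proposal for Theorem \ref{thm:upper-lil}.}

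The plan is to re-run the epoch-based regret decomposition from the proof of Theorem \ref{thm:upper}, but with the key modification that the failure-probability budget at each inner iteration is no longer a fixed $1/T^2$ but decays like $1/(\delta t)$ according to the adaptive confidence intervals in Eq.~(\ref{eq:lil-ci}). First I would establish the analogue of Lemma \ref{lem:CI} using the new uniform concentration bound in Lemma \ref{lem:uniform-concentration}: applying it with $\delta = 1/T$ and $[a,b]=[0,1]$ gives that, at a fixed outer iteration $\tau$, the event
\begin{equation}
\ell_t(y_\tau)\leq F(y_\tau)\leq u_t(y_\tau)\quad\text{for all inner }t
\label{eq:good-event}
\end{equation}
fails with probability at most $n_\tau/T$. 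This is the crucial point: the ``good event'' in Eq.~(\ref{eq:good-event}) holds simultaneously for \emph{all} $t$ within the epoch, so a single doubling-based concentration controls the entire inner loop, and the half-width $\sqrt{2\ln[8/(\delta t)]/t}$ shrinks fast enough that the decision $u_t(y_\tau)<y_\tau$ versus $u_t(y_\tau)\geq y_\tau$ is made correctly once $t\gtrsim \varepsilon_\tau^{-2}\ln(T\varepsilon_\tau^2)$, matching the new inner-loop length $n_\tau = 8\lceil \varepsilon_\tau^{-2}\ln(8T\varepsilon_\tau^2)\rceil$.

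Next I would redo the consistency argument of Lemma \ref{lem:trisection} conditioned on the good events, showing by induction on $\tau$ that $a_\tau\leq\theta^*\leq b_\tau$; the asymmetric two-case analysis (using Lemma \ref{lem:Fmonotonic-right} to certify $y_\tau\geq\theta^*$ when $u_t(y_\tau)<y_\tau$, and the contradiction argument via the gap $y_\tau-F(y_\tau)\geq y_\tau-x_\tau=\varepsilon_\tau$ to certify $x_\tau\leq\theta^*$ otherwise) goes through essentially verbatim, since it only relies on the confidence bounds bracketing $F(y_\tau)$. With consistency in hand, the per-epoch regret bound (the analogue of Lemma \ref{lem:regret-trisection}) becomes the heart of the improvement. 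On the good event the exploration of $y_\tau$ contributes at most $O(\varepsilon_\tau \cdot n_\tau)=O(\varepsilon_\tau^{-1}\ln(T\varepsilon_\tau^2))$ regret, while exploitation of $a_\tau$ incurs only $O(\varepsilon_\tau \cdot n_\tau)=O(\varepsilon_\tau^{-1}\ln(T\varepsilon_\tau^2))$ as well, since $a_\tau\leq\theta^*$ keeps the per-step gap $F^*-F(a_\tau)=O(\varepsilon_\tau)$ by unimodality. Summing the geometric series $\varepsilon_\tau=(2/3)^\tau$ and substituting $\varepsilon_{\tau_0}^{-2}\ln(T\varepsilon_{\tau_0}^2)\lesssim T$ from the stopping condition, the logarithmic factor is now $\ln(T\varepsilon_{\tau_0}^2)$ rather than $\ln T$, and because $\varepsilon_{\tau_0}^2\lesssim (\ln(T\varepsilon_{\tau_0}^2))/T$, the product $\varepsilon_{\tau_0}^{-1}\ln(T\varepsilon_{\tau_0}^2)$ telescopes to $O(\sqrt{T})$ without the stray $\sqrt{\log T}$.

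The main obstacle will be controlling the \emph{failure} contributions to the expected regret rather than the good-event contributions. Because the confidence level is loosened as $t$ grows, later inner iterations fail with larger probability, and one must verify that a failure at iteration $t$ of epoch $\tau$ contributes to the expected regret only through the amount of data already collected---precisely the MOSS-style trade-off. Concretely, I would bound the expected regret on the failure event of Eq.~(\ref{eq:good-event}) by summing, over all epochs and all $t$, the product (per-step failure probability $\approx 1/(Tt)$-type weight from Lemma \ref{lem:uniform-concentration}) $\times$ (maximum regret $O(T)$ that a wrong trisection decision can cause by collapsing to a bad interval). The delicate part is that a single wrong decision can misplace $\theta^*$ outside $[a_\tau,b_\tau]$ and corrupt \emph{all subsequent} epochs, so the naive ``failure probability $\times$ horizon'' bound is too lossy. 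I expect to need an induction over epochs that charges each failure against the specific amount $n_\tau\asymp\varepsilon_\tau^{-2}$ of data gathered before the decision, together with the fact that $\sum_\tau n_\tau \varepsilon_\tau \cdot \Pr[\text{fail at }\tau]$ is summable because the per-epoch failure probability is $O(n_\tau/T)$ and $n_\tau\varepsilon_\tau \asymp \varepsilon_\tau^{-1}$; the geometric growth of $\varepsilon_\tau^{-1}$ is then dominated by the $1/T$ factor times the $O(T)$ worst-case regret, yielding an $O(1)$ or $O(\sqrt{T})$ total failure contribution. Getting the bookkeeping of this induction right---so that the adaptive confidence levels give back exactly the $\sqrt{\log T}$ saving without the failure events reintroducing it---is the technically demanding step, and it is where the new concentration inequality of Lemma \ref{lem:uniform-concentration} does its essential work.
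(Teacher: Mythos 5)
Your good-event analysis coincides with the paper's: the uniform concentration bound (Lemma \ref{lem:uniform-concentration} with $\delta=1/T$, failure probability $n_\tau/T$ per epoch), the consistency of the trisection updates, the per-epoch regret $O(\varepsilon_\tau^{-1}\log(T\varepsilon_\tau^2))$ on the good event, the geometric summation, and the final substitution $\varepsilon_{\tau_0}\gtrsim 1/\sqrt{T}$ are all exactly the ingredients used there. The genuine gap is in the failure-event accounting, which you correctly identify as the heart of the matter but do not resolve. The charge you propose --- a per-epoch failure weight of order $n_\tau/T$ multiplied by a worst-case regret of $O(T)$ --- cannot be made to work by any bookkeeping: it gives $(n_\tau/T)\cdot T = n_\tau$ per epoch, and $\sum_\tau n_\tau$ is of order $T$, not $\sqrt{T}$. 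The quantity you then suggest controlling, $\sum_\tau n_\tau\varepsilon_\tau\cdot\Pr[\text{fail at }\tau]$, does not correspond to any term of the regret decomposition: it multiplies the \emph{within-epoch exploitation cost} by the failure probability, whereas what must be controlled is the \emph{regret-to-go after} a wrong decision, weighted by its probability.

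What is missing are the two structural facts that let the paper's epoch induction (the regret-to-go functions $\psi_\tau^\omega$ and Lemmas \ref{lem:case1}--\ref{lem:case3}) close, and they require treating the two failure directions asymmetrically. First, if $\theta^*$ is lost \emph{above} $b_{\tau+1}$ (the paper's event $\mathcal E_3$), the regret-to-go is not $O(T)$ but at most $3\varepsilon_\tau T$: every level offered afterwards lies in $[a_{\tau+1},b_{\tau+1}]$, hence within $3\varepsilon_\tau$ below $\theta^*$, where Lemma \ref{lem:Fmonotonic-left} gives $F(\theta)\geq\theta\geq\theta^*-3\varepsilon_\tau$; this is Lemma \ref{lem:case3} with $\alpha_3'\leq 3\varepsilon_\tau$. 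Only this refined bound, multiplied by the $O(n_\tau/T)$ probability from Lemma \ref{lem:uniform-concentration}, yields the affordable $O(\varepsilon_\tau n_\tau)=O(\varepsilon_\tau^{-1}\log(T\varepsilon_\tau^2))$. Second, if $\theta^*$ is lost \emph{below} $a_{\tau+1}$ (event $\mathcal E_1$), no flat probability bound suffices, because the regret-to-go scales as $\Delta T$ where $\Delta=y_\tau-F(y_\tau)$ can be as large as a constant; the saving grace is that this failure has probability at most $\exp\{-n_\tau\Delta^2\}$ (a one-sided Hoeffding bound at the end of the epoch), so one bounds
\begin{equation*}
\sup_{\Delta>\varepsilon_\tau}\Delta T\exp\{-n_\tau\Delta^2\}\;\leq\;\varepsilon_\tau T\exp\{-n_\tau\varepsilon_\tau^2\}\;\leq\;\varepsilon_\tau^{-1},
\end{equation*}
which is precisely where the choice $n_\tau=8\lceil\varepsilon_\tau^{-2}\ln(8T\varepsilon_\tau^2)\rceil$ is spent (Lemmas \ref{lem:case1} and \ref{lem:case2}). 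Your sketch contains neither the $O(\varepsilon_\tau T)$ regret-to-go bound for right failures nor the gap-dependent probability/regret tradeoff for left failures; without both, the induction you outline cannot be closed to give $O(\sqrt{T})$.
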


\subsection{Proof sketch}
\label{sec:proof_sketch_adap}
We sketch key steps and lemmas towards the proof of Theorem \ref{thm:upper}. The proofs of technical lemmas  are provided in the supplementary material.
We first define some notations.
Let $\tau=0,1,\cdots$ be the number of outer iterations in Algorithm \ref{alg:trisection}, $\varepsilon_\tau = (y_\tau-x_\tau)$ be the distance between the two trisection points
at outer iteration $\tau$, and $n_\tau =8\lceil\varepsilon_\tau^{-2}\ln(8T\varepsilon_\tau^2)\rceil$ be
the pre-specified number of inner iterations.
Recall also that $\theta^*=F(\theta^*)=F^*$ is the optimal revenue value suggested by Lemma \ref{lem:rrequal}.

Define the following three disjoint events that partition the entire probabilistic space:
\begin{itemize}[leftmargin=0.2in]
\item {Event $\mathcal E_1(\tau)$}: $\theta^*  < a_\tau < b_\tau$;
\item {Event $\mathcal E_2(\tau)$}: $a_\tau \leq \theta^* \leq b_\tau$;
\item {Event $\mathcal E_3(\tau)$}: $a_\tau < b_\tau < \theta^*$.
\end{itemize}
Let $\tau_0\in\mathbb N$ be the last outer iteration in Algorithm \ref{alg:trisection}.
Let also $\mathcal T(\tau)\subseteq[T]$ be the indices of inner iterations in outer iteration $\tau$,
satisfying $|\mathcal T(\tau)|\leq 2n_\tau$ almost surely.
For $\omega\in\{1,2,3\}$, $\tau\in\mathbb N$ and $\alpha,\beta\in\mathbb R^+$, define
\begin{equation}
\psi_\tau^\omega(\alpha,\beta) := \mathbb E\left[\sum_{\tau'=\tau}^{\tau_0}\sum_{t\in\mathcal T({\tau'})} R(S^*)-R(S_t)\bigg| \mathcal E_\omega(\tau), |a_\tau-\theta^*| = \alpha, |F(a_\tau)-a_\tau|=\beta\right].
\end{equation}
Intuitively, $\psi_\tau^\omega(\alpha,\beta)$ is the expected regret Algorithm \ref{alg:trisection} incurs for outer iterations $\tau,\tau+1,\cdots,\tau_0$,
conditioned on the event 
$\mathcal E_\omega(\tau)$ 
and other boundary conditions at the left margin 
$a_\tau$.

The following three lemmas are the central steps in our proof, which establish recurrence relationships among $\psi_\tau^\omega(\alpha,\beta)$, for $\omega\in\{1,2,3\}$.
The proofs are technically involved and, as we have mentioned, deferred to the supplementary material.
To simplify notations, we write $a_n\lesssim b_n$ or $b_n\gtrsim a_n$ if there exists a \emph{universal} constant $C>0$ such that $|a_n|\leq C|b_n|$ for all $n\in\mathbb N$.

\begin{lemma}[Regret in Case 1]
$\psi_\tau^1(\alpha,\beta) \leq  \beta T + \sum_{\tau'=\tau+1}^{\tau_0} \sup_{\Delta>\varepsilon_{\tau'}} \Delta T\exp\{-n_\tau\Delta^2\} + O(\varepsilon_{\tau'}^{-1}\log(T\varepsilon_{\tau'}^2))$.
\label{lem:case1}
\end{lemma}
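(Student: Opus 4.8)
The plan is to exploit the defining structural feature of Case~1: since $\theta^* < a_\tau$, both trisection points satisfy $x_\tau, y_\tau > \theta^*$, and the update rule can only keep $a_\tau$ fixed or advance it to $x_\tau > a_\tau$; hence $a$ is non-decreasing and Case~1 is \emph{absorbing} --- conditioned on $\mathcal E_1(\tau)$ the algorithm stays in Case~1 through the final iteration $\tau_0$. First I would record the consequences of the monotonicity lemmas. Because $a_\tau \geq \theta^*$, Lemma~\ref{lem:Fmonotonic-right} gives $F(a_\tau) \leq F(\theta^*) = \theta^*$, so the per-step exploitation regret of offering $\mathcal L_{a_\tau}$ equals $\theta^* - F(a_\tau) = \beta - \alpha \leq \beta$. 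Since $y_\tau > x_\tau \geq \theta^*$, the same lemma yields the gap $\Delta_\tau := y_\tau - F(y_\tau) \geq y_\tau - F(x_\tau) \geq y_\tau - x_\tau = \varepsilon_\tau$, which is exactly what licenses the range $\Delta > \varepsilon_\tau$ in the supremum.

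Next I would decompose the regret over epochs $\tau, \ldots, \tau_0$ into three pieces. (i) A \emph{baseline exploitation} term: if $a$ never advanced, each of the at most $T$ exploitation rounds would cost at most $\beta$, for a total of $\beta T$. (ii) A \emph{failure} term: epoch $\tau'$ advances $a$ to $x_{\tau'}$ only when the large gap at $y_{\tau'}$ is \emph{not} detected, i.e. $u_t(y_{\tau'}) \geq y_{\tau'}$ throughout the inner loop; by the adaptive concentration bound of Lemma~\ref{lem:uniform-concentration} this has probability at most $\exp\{-n_{\tau'}\Delta_{\tau'}^2\}$ up to constants. When it occurs, the per-step exploitation regret rises by $F(a_{\tau'}) - F(x_{\tau'})$, which I would bound by $\Delta_{\tau'}$ from $F(a_{\tau'}) \leq \theta^* \leq y_{\tau'}$ and $F(x_{\tau'}) \geq F(y_{\tau'}) = y_{\tau'} - \Delta_{\tau'}$; multiplying by the at most $T$ remaining rounds and taking expectation gives $\Delta_{\tau'} T \exp\{-n_{\tau'}\Delta_{\tau'}^2\} \leq \sup_{\Delta > \varepsilon_{\tau'}} \Delta T \exp\{-n_{\tau'}\Delta^2\}$. (iii) An \emph{exploration} term: at epoch $\tau'$ the point $y_{\tau'}$ is explored only while it lies in its confidence interval, and the gap $\Delta_{\tau'} \geq \varepsilon_{\tau'}$ forces exclusion after $O(\Delta_{\tau'}^{-2}\log(T\varepsilon_{\tau'}^2))$ rounds in expectation; since the per-round exploration regret $\theta^* - F(y_{\tau'}) \leq \Delta_{\tau'}$, the expected exploration cost is $O(\Delta_{\tau'}^{-1}\log(T\varepsilon_{\tau'}^2)) = O(\varepsilon_{\tau'}^{-1}\log(T\varepsilon_{\tau'}^2))$.

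To combine these I would set up a one-step recursion. Conditioned on $\mathcal E_1(\tau)$ and the boundary data $(\alpha,\beta)$, with detection probability $1-p_\tau$ the state $(\alpha,\beta)$ is preserved and we recurse into $\psi_{\tau+1}^1(\alpha,\beta)$, while with probability $p_\tau$ we pass to $\psi_{\tau+1}^1(\alpha',\beta')$ with $\alpha',\beta'$ each larger by a margin at most $\Delta_\tau$. Unrolling, the preserved-state branch contributes the \emph{single} $\beta T$ baseline (the total number of exploitation rounds is capped at $T$), while each failure branch contributes the additive term of type (ii) for the epoch at which it is triggered; summing the failure and exploration contributions over $\tau' = \tau+1, \ldots, \tau_0$ produces exactly the stated bound.

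The main obstacle I anticipate is the bookkeeping of this recursion rather than any single estimate: I must guarantee that the marginal cost of each advance of $a$ is charged to a distinct epoch and bounded by $\Delta_{\tau'}$ (not accumulated multiplicatively), so that the baseline stays a \emph{single} $\beta T$ and the failure terms add rather than compound. This calls for a monotone coupling between the trajectory with $a$ frozen at $a_\tau$ and the actual trajectory, together with the uniform-in-$t$ form of Lemma~\ref{lem:uniform-concentration} to control the probability that the sizeable gap at $y_{\tau'}$ escapes detection across \emph{all} inner iterations at once.
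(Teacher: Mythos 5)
Your proposal follows essentially the same route as the paper's proof: the same three-part decomposition (a single baseline exploitation cost of $\beta T$, a per-epoch exploration cost of $O(\varepsilon_{\tau'}^{-1}\log(T\varepsilon_{\tau'}^2))$ driven by the stopping rule, and a movement-failure term $\Delta_{\tau'}T\exp\{-n_{\tau'}\Delta_{\tau'}^2\}$ charged additively to each epoch at which $a$ advances), the same structural inputs ($\Delta_{\tau'}\geq\varepsilon_{\tau'}$ and the monotonicity of $F$ to the right of $\theta^*$ from Lemma \ref{lem:Fmonotonic-right}), and the same absorbing-case observation that underlies the paper's bookkeeping.

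One technical point to correct: you attribute the non-detection probability bound $\exp\{-n_{\tau'}\Delta_{\tau'}^2\}$ to Lemma \ref{lem:uniform-concentration}, but that lemma is the wrong tool here and would not deliver this bound. The event that the gap at $y_{\tau'}$ ``escapes detection across all inner iterations'' is an \emph{intersection} over $\ell$, hence it is contained in the single event at the last inner iteration $\ell=n_{\tau'}$; one application of plain Hoeffding (Lemma \ref{lem:hoeffding}) at that one time, exactly as in the paper's derivation culminating in the inequality preceding Lemma \ref{lem:case2}'s case analysis, gives $\Pr[\text{no detection}]\leq\exp\{-2(\sqrt{n_{\tau'}}\Delta_{\tau'}-\sqrt{2\ln(8T/n_{\tau'})})^2\}\leq\exp\{-n_{\tau'}\Delta_{\tau'}^2\}$ by the choice of $n_{\tau'}$. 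Lemma \ref{lem:uniform-concentration}, by contrast, only certifies that all confidence intervals hold simultaneously with failure probability $O(n_{\tau'}/T)$; routing the movement event through it would yield a failure term of order $\Delta_{\tau'}n_{\tau'}$ rather than $\Delta_{\tau'}T\exp\{-n_{\tau'}\Delta_{\tau'}^2\}$, which is too weak for the stated bound and for the downstream proof of Theorem \ref{thm:upper-lil}. The uniform lemma is needed for the opposite, union-type event (false detection, i.e., the transition into Case 3 analyzed in Lemma \ref{lem:case2}), not for the intersection-type event relevant in Case 1.
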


\begin{lemma}[Regret in Case 2]
$\psi_\tau^2(\alpha,\beta) \leq O(\varepsilon_\tau^{-1} \log(T\varepsilon_\tau^2))
+ \psi_{\tau+1}^2(\alpha_2',\beta_2')
+ \psi_{\tau+1}^3(\alpha_3',\beta_3')\cdot O(\log(T\varepsilon_\tau^2)/(T\varepsilon_\tau^2)) + \sup_{\Delta>\varepsilon_\tau}\psi_{\tau+1}^1(\alpha_1',\beta_1'(\Delta))\exp\{-n_\tau\Delta_\tau^2\}$
for parameters $\alpha_1',\beta_1'(\Delta),\alpha_2',\beta_2',\alpha_3',\beta_3'$ that satisfy $\beta_1'(\Delta)\leq \Delta$ and $\alpha_3'\leq3\varepsilon_\tau$.
\label{lem:case2}
\end{lemma}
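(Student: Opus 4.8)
The plan is to condition throughout on the event $\mathcal E_2(\tau)$ (so that $a_\tau\le\theta^*\le b_\tau$) together with the boundary data $|a_\tau-\theta^*|=\alpha$ and $|F(a_\tau)-a_\tau|=\beta$, and to split the total regret over iterations $\tau,\tau+1,\dots,\tau_0$ into the regret incurred \emph{during} outer iteration $\tau$ plus the regret incurred from iteration $\tau+1$ onward. For the latter I would apply the law of total expectation over the random outcome of the trisection update at the end of iteration $\tau$, which either advances the left boundary ($a_{\tau+1}=x_\tau$) or shrinks the right boundary ($b_{\tau+1}=y_\tau$). Using Lemmas \ref{lem:Fmonotonic-right} and \ref{lem:Fmonotonic-left} I would show that from $\mathcal E_2(\tau)$ the algorithm can only land in one of $\mathcal E_1(\tau+1),\mathcal E_2(\tau+1),\mathcal E_3(\tau+1)$, and that the two ``failure'' transitions each require a specific confidence-bound failure whose probability can be controlled; this produces exactly the three recurrence terms $\psi_{\tau+1}^2$, $\psi_{\tau+1}^3$, $\psi_{\tau+1}^1$ in the statement.

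First I would bound the regret incurred during iteration $\tau$ by $O(\varepsilon_\tau^{-1}\log(T\varepsilon_\tau^2))$. The exploitation of $a_\tau$ runs for $n_\tau$ steps, each paying $\theta^*-F(a_\tau)=\alpha-\beta\le\alpha$; since $\mathcal E_2(\tau)$ forces $\alpha=\theta^*-a_\tau\le b_\tau-a_\tau=3\varepsilon_\tau$, this contributes at most $3\varepsilon_\tau n_\tau\lesssim\varepsilon_\tau^{-1}\log(T\varepsilon_\tau^2)$. The exploration of $y_\tau$ pays $\theta^*-F(y_\tau)$ per round for at most $n_\tau$ rounds; writing $\Delta_y=|y_\tau-F(y_\tau)|$ and splitting on whether $\Delta_y\le\varepsilon_\tau$ or $\Delta_y>\varepsilon_\tau$, I would balance the per-round regret (at most $\Delta_y$ when $y_\tau>\theta^*$, at most $\varepsilon_\tau$ when $y_\tau\le\theta^*$) against the number of rounds needed to eject $y_\tau$ from its confidence interval ($\lesssim\Delta_y^{-2}\log$ when $\Delta_y>\varepsilon_\tau$), so that both regimes give $\lesssim\varepsilon_\tau^{-1}\log(T\varepsilon_\tau^2)$.

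Next I would classify the transition. If $\theta^*\ge y_\tau$ then $F(y_\tau)\ge y_\tau$ by Lemma \ref{lem:Fmonotonic-left}, so the correct move is to advance left, and the only way to land in $\mathcal E_3(\tau+1)$ (i.e.\ $b_{\tau+1}=y_\tau<\theta^*$) is to wrongly conclude $u_t(y_\tau)<y_\tau\le F(y_\tau)$, an upper-confidence failure; by the adaptive concentration bound (Lemma \ref{lem:uniform-concentration}) with $\delta=1/T$ over at most $n_\tau$ rounds this has probability $\lesssim n_\tau/T\lesssim\log(T\varepsilon_\tau^2)/(T\varepsilon_\tau^2)$, and since $a_{\tau+1}=a_\tau$ we get $\alpha_3'=\alpha\le3\varepsilon_\tau$. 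Symmetrically, if $\theta^*<x_\tau$ then $x_\tau,y_\tau>\theta^*$ and Lemma \ref{lem:Fmonotonic-right} gives a genuine gap $\Delta:=y_\tau-F(y_\tau)\ge y_\tau-x_\tau=\varepsilon_\tau$; landing in $\mathcal E_1(\tau+1)$ requires wrongly advancing left, i.e.\ the upper confidence bound $u_t(y_\tau)$ never drops below $y_\tau$, which forces the empirical mean at $y_\tau$ to stay above $F(y_\tau)$ by $\gtrsim\Delta$ throughout. Bounding this via Hoeffding's maximal inequality (Lemma \ref{lem:hoeffding-maximal}) yields probability $\lesssim\exp\{-cn_\tau\Delta^2\}$, and because $F$ is non-increasing to the right of $\theta^*$ the new margin obeys $\beta_1'=x_\tau-F(x_\tau)\le x_\tau-F(y_\tau)=\Delta-\varepsilon_\tau\le\Delta$. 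Every remaining outcome keeps $\theta^*\in[a_{\tau+1},b_{\tau+1}]$, i.e.\ lands in $\mathcal E_2(\tau+1)$ with coefficient at most one. Assembling the three conditional expectations against these probabilities gives the claimed recurrence.

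The main obstacle will be the $\mathcal E_1$ transition probability. Unlike the $\mathcal E_3$ case, which is a single-sided failure at the stopping time, wrongly advancing left can occur either because $y_\tau$ is never ejected downward over all $n_\tau$ rounds or because it is falsely ejected \emph{upward} via $\ell_t(y_\tau)>y_\tau$; both must be controlled uniformly over the random exploration length, which is exactly what the maximal inequality is for, and one has to verify that the exponent $n_\tau\Delta^2$ genuinely appears (here the calibration $n_\tau\gtrsim\varepsilon_\tau^{-2}\log(T\varepsilon_\tau^2)$ is used to guarantee the confidence width has shrunk below $\Delta/2$ by round $n_\tau$). A secondary technical point is to confirm that the regret actually paid on these rare failure events is lower order: on the $\mathcal E_1$ event the exploration of $y_\tau$ can cost up to $\Delta n_\tau$, but weighted by $\exp\{-cn_\tau\Delta^2\}$ and optimized over $\Delta>\varepsilon_\tau$ this is dominated by the $O(\varepsilon_\tau^{-1}\log(T\varepsilon_\tau^2))$ term, so it need not be carried separately in the recurrence.
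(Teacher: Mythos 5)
Your proposal is correct and follows essentially the same route as the paper's proof: the exploitation regret is bounded by $3\varepsilon_\tau n_\tau \lesssim \varepsilon_\tau^{-1}\log(T\varepsilon_\tau^2)$, the exploration regret by balancing the per-round gap $\Delta_\tau$ against the stopping time of the confidence test, and the future regret by the three-way transition analysis in which Hoeffding's inequality gives the $\exp\{-n_\tau\Delta^2\}$ weight on the $\mathcal E_1$ transition, the adaptive uniform concentration bound (Lemma \ref{lem:uniform-concentration}) gives the $O(n_\tau/T)\lesssim O(\log(T\varepsilon_\tau^2)/(T\varepsilon_\tau^2))$ weight on the $\mathcal E_3$ transition, and monotonicity of $F$ to the right of $\theta^*$ yields $\beta_1'=x_\tau-F(x_\tau)\le\Delta$ and $\alpha_3'=\alpha\le 3\varepsilon_\tau$. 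If anything, your explicit treatment of the upward-ejection event $\ell_t(y_\tau)>y_\tau$ as a second way of wrongly advancing the left boundary is slightly more careful than the paper, whose chain of inequalities for $\Pr[\mathcal E_1(\tau+1)]$ only addresses the case where exploration runs to round $n_\tau$.
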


\begin{lemma}[Regret in Case 3]
$\psi_\tau^3(\alpha,\beta) \leq \alpha T$.
\label{lem:case3}
\end{lemma}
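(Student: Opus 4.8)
The plan is to establish this bound \emph{deterministically} on the conditioning event, without invoking any concentration or failure-probability argument; this is precisely what makes Case~3 far simpler than Cases~1 and~2. The starting observation is purely structural. Inspecting the update rule of Algorithm~\ref{alg:trisection}, after each outer iteration we set either $(a_{\tau+1},b_{\tau+1})=(a_\tau,y_\tau)$ or $(a_{\tau+1},b_{\tau+1})=(x_\tau,b_\tau)$. Since $a_\tau<x_\tau<y_\tau<b_\tau$ in both branches, the sequence $\{a_{\tau'}\}$ of left endpoints is non-decreasing and the sequence $\{b_{\tau'}\}$ of right endpoints is non-increasing, irrespective of the observed purchasing actions.

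First I would use these monotonicities to confine all future iterates strictly to the left of $\theta^*$. Conditioned on $\mathcal E_3(\tau)$ we have $a_\tau<b_\tau<\theta^*$ and $\alpha=\theta^*-a_\tau$; since $b_{\tau'}\leq b_\tau$ and $a_{\tau'}\leq b_{\tau'}$ for every $\tau'\geq\tau$, it follows that $a_\tau\leq a_{\tau'}<b_{\tau'}\leq b_\tau<\theta^*$, and in particular every trisection point obeys $a_{\tau'}<x_{\tau'}<y_{\tau'}<\theta^*$. Thus Case~3 is absorbing and all offered level sets correspond to thresholds strictly below $\theta^*$, which is exactly the regime where Lemma~\ref{lem:Fmonotonic-left} applies.

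Next I would bound the per-step regret of every individual assortment selection by $\alpha$. For an index $t$ belonging to outer iteration $\tau'$ the offered set $S_t$ is either the exploited level set $\mathcal L_{a_{\tau'}}$ or the explored level set $\mathcal L_{y_{\tau'}}$. Using $R(S^*)=F^*=\theta^*$ (Corollary~\ref{cor:popular} and Lemma~\ref{lem:rrequal}) together with $F(\theta)\geq\theta$ for $\theta\leq\theta^*$ (Lemma~\ref{lem:Fmonotonic-left}), an exploitation step incurs regret $\theta^*-F(a_{\tau'})\leq\theta^*-a_{\tau'}\leq\theta^*-a_\tau=\alpha$, while an exploration step incurs regret $\theta^*-F(y_{\tau'})\leq\theta^*-y_{\tau'}<\theta^*-a_\tau=\alpha$, where I invoked $a_{\tau'}\geq a_\tau$ and $y_{\tau'}>a_{\tau'}\geq a_\tau$ from the preceding paragraph. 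Hence $R(S^*)-R(S_t)\leq\alpha$ for every $t\in\bigcup_{\tau'\geq\tau}\mathcal T(\tau')$.

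Finally, since the sets $\mathcal T(\tau')$ are disjoint subsets of $[T]$, the total number of selections over outer iterations $\tau,\tau+1,\dots,\tau_0$ satisfies $\sum_{\tau'=\tau}^{\tau_0}|\mathcal T(\tau')|\leq T$; summing the per-step bound then yields $\psi_\tau^3(\alpha,\beta)\leq\alpha T$. I do not anticipate a genuine obstacle here — the only points needing care are (i) reading the endpoint monotonicity directly off the update rule so that confinement below $\theta^*$ holds pathwise rather than merely with high probability, and (ii) charging both the exploration and the exploitation selections against the shared budget of $T$ time steps. Note that the conditioning value $\beta=|F(a_\tau)-a_\tau|$ never enters the estimate, so it may be discarded from the outset.
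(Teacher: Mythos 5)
Your proof is correct and follows essentially the same route as the paper's: both arguments use Lemma \ref{lem:Fmonotonic-left} together with $R(S^*)=F^*=\theta^*$ to bound each per-step regret by $\theta^*-\theta\leq\theta^*-a_\tau=\alpha$ for every offered level $\theta$ sandwiched in $[a_\tau,\theta^*)$, then sum over at most $T$ selections. The only difference is that you spell out the endpoint monotonicity ($a_{\tau'}$ non-decreasing, $b_{\tau'}$ non-increasing) that the paper leaves implicit in its "sandwiched between $a_\tau$ and $\theta^*$" claim, which is a welcome clarification rather than a deviation.
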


We are now ready to complete the proof of Theorem \ref{thm:upper-lil} by combining Lemmas \ref{lem:case2}, \ref{lem:case1} and \ref{lem:case3}.

\begin{proof}
We first get a cleaning expression of $\psi_\tau^1(\alpha,\beta)$ using Lemma \ref{lem:case1}.
First note that $\Delta\mapsto \Delta\exp\{-n_\tau\Delta^2\}$ attains its maximum on $\Delta>0$ at $\Delta=\sqrt{1/2n_\tau}$.
Also note that $n_\tau=\lceil 8\varepsilon_\tau^{-2}\ln(8T\varepsilon_\tau^2)\rceil$ and therefore $\sqrt{1/2n_\tau}\leq\varepsilon_\tau$.
Subsequently,
\begin{align}
\sum_{\tau'=\tau}^{\tau_0}\sup_{\Delta>\varepsilon_\tau}\Delta T\exp\{-n_\tau\Delta^2\}
&\leq \sum_{\tau'=\tau}^{\tau_0} \varepsilon_\tau T\exp\{-n_\tau\varepsilon_\tau^2\}
\leq \sum_{\tau'=\tau}^{\tau_0} \varepsilon_\tau T \exp\{-\ln(T\varepsilon_\tau^2)\}\nonumber\\
&\leq \sum_{\tau'=\tau}^{\tau_0}\varepsilon_\tau^{-1}
= O(\varepsilon_{\tau_0}^{-1}),
\label{eq:delta-extreme}
\end{align}
where the last asymptotic holds because $\{\varepsilon_\tau\}$ forms a geometric series.
Subsequently,
\begin{equation}
\psi_{\tau}^1(\alpha,\beta) \leq \beta T  + \sum_{\tau'=\tau}^{\tau_0}O(\varepsilon_{\tau'}^{-1}\log(T\varepsilon_\tau^2)).
\end{equation}

It remains the bound the summation term on the right-hand side of the above inequality. 
Denote $s_{\tau'} = \varepsilon_{\tau'}^{-1}\ln(T\varepsilon_{\tau'}^2) = \rho^{-\tau'}\ln(T\rho^{2\tau'})$,
where $\rho=2/3$.
We then have $s_{\tau'} = \rho^{\tau_0-\tau'}[1+\ln\rho^{-2(\tau_0-\tau')}] s_{\tau_0} \leq 2(\tau_0-\tau'+1)\rho^{\tau_0-\tau'}\ln(1/\rho)$ for all $\tau'\leq\tau_0$.
Subsequently,
\begin{align}
\sum_{\tau'=\tau}^{\tau_0}s_{\tau'}
&\leq \sum_{\tau'=0}^{\tau_0}2(\tau_0-\tau'+1)\rho^{\tau_0-\tau'}\ln(1/\rho)\cdot s_{\tau_0} \leq O(1)\cdot s_{\tau_0}.
\end{align}
Therefore,
\begin{equation}
\psi_\tau^1(\alpha,\beta) \leq \beta T + O(\varepsilon_{\tau_0}^{-1}\log(T\varepsilon_{\tau_0}^2)).
\label{eq:case1-final}
\end{equation}

We are now ready to derive the final regret upper bound by analyzing $\psi_0^2(\alpha,\beta)$, because the event $\mathcal E_2(0)$ always holds
since $0\leq \theta^*\leq 1$.
Applying Lemma \ref{lem:case2} with Lemma \ref{lem:case3} and Eq.~(\ref{eq:case1-final}), we have for all $\tau\in\{0,1,\cdots,\tau_0\}$ that
\begin{align}
\psi_\tau^2(\alpha,\beta)
&\leq\psi_{\tau+1}^2(\alpha_2',\beta_2') +  O(\varepsilon_\tau^{-1} \log(T\varepsilon_\tau^2)) + O(\varepsilon_\tau T)\cdot \frac{\ln(T\varepsilon_\tau^2)}{T\varepsilon_\tau^2} \nonumber\\
&\;\;\;\;+ \sup_{\Delta>\varepsilon_\tau}\left(\Delta T + O(\varepsilon_{\tau_0}^{-1}\log(T\varepsilon_{\tau_0}^2))\right)\exp\{-n_\tau\Delta^2\}\nonumber\\
&\leq \psi_{\tau+1}^2(\alpha_2',\beta_2') + O(\varepsilon_\tau^{-1} \log(T\varepsilon_\tau^2)) + \sup_{\Delta>\varepsilon_\tau}\Delta T\exp\{-n_\tau\Delta^2\}\nonumber\\
&\;\;\;\;+ O(\varepsilon_{\tau_0}^{-1}\log(T\varepsilon_{\tau_0}^2))\cdot \exp\{-n_\tau\varepsilon_\tau^2\}.
\end{align}
Using the same analysis as in Eq.~(\ref{eq:delta-extreme}), we know $\sup_{\Delta>\varepsilon_\tau} \Delta T\exp\{-n_\tau\Delta^2\} \leq O(\varepsilon_\tau^{-1})$
and $\exp\{-n_\tau\varepsilon_\tau^2\} \leq 1/(T\varepsilon_\tau^2)$.
Subsequently, summing all terms $\tau=0,1,\cdots,\tau_0$ together we have
\begin{align}
\psi_0^2(\alpha,\beta)
&\leq \sum_{\tau=0}^{\tau_0}O(\varepsilon_\tau^{-1}\log(T\varepsilon_\tau^2)) + O(\varepsilon_{\tau_0}^{-1}\log(T\varepsilon_{\tau_0}^2))\cdot \frac{1}{T\varepsilon_\tau^2}\nonumber\\
&\lesssim \varepsilon_{\tau_0}^{-1}\log(T\varepsilon_{\tau_0}^2)\cdot (1 + 1/(T\varepsilon_{\tau_0}^2)).
\end{align}
Finally, note that $n_{\tau_0} \gtrsim \varepsilon_{\tau_0}^{-2}$ and $n_{\tau_0}\leq T$, implying that $\varepsilon_{\tau_0}\gtrsim \sqrt{1/T}$.
Plugging the lower bound on $\varepsilon_{\tau_0}$ into the above inequality we have $\psi_0^2(\alpha,\beta)\lesssim \sqrt{T}$,
which completes the proof of Theorem \ref{thm:upper-lil}.
\end{proof}

%\subsection{Further improvements with doubling checking points}
 
\section{Lower bound}\label{sec:lower}

We prove the following theorem showing that no policy can achieve an accumulated regret smaller than $\Omega(\sqrt{T})$ in the worst case.
\begin{theorem}
Let $N$ and $T$ be the number of items and the time horizon that can be arbitrary.
There exists revenue parameters $r_1,\cdots,r_N\in[0,1]$ such that for any policy $\pi$,
\begin{equation}
\sup_{v_1,\cdots,v_N\geq 0} \reg(\{S_t\}_{t=1}^T)  \geq \frac{\sqrt{T}}{384}.
\end{equation}
\label{thm:lower}
\end{theorem}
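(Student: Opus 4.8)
The plan is to prove the bound by a two-point (Le Cam) argument: I will construct two MNL instances whose purchasing observations are nearly statistically indistinguishable but whose optimal assortments differ, so that any policy necessarily incurs $\Omega(\sqrt T)$ regret on at least one of them. Since the target lower bound carries no dependence on $N$, two hypotheses suffice and no Assouad-type construction over many items is needed.

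First I would reduce to two ``active'' items. Fix $r_1=1$ and $r_2=1/2$, and when $N>2$ pad the remaining items with $r_i=0$ and $v_i=0$ so that they never enter the MNL probabilities nor change the optimal revenue; the only relevant assortments are then $\{1\}$, $\{1,2\}$, and the dominated $\{2\},\emptyset$. Fixing $v_2$ to a constant (say $v_2=2$), I consider the one–parameter family $v_1 = 1\mp\varepsilon$ with $\varepsilon=\Theta(1/\sqrt T)$. A short calculation shows that the revenue–gap function $g(v_1):=R(\{1,2\})-R(\{1\})$ satisfies $g(1)=0$ and $g'(1)=-v_2/[4(2+v_2)]\neq 0$; hence in instance $A$ ($v_1=1-\varepsilon$) the optimum is $\{1,2\}$ while in instance $B$ ($v_1=1+\varepsilon$) the optimum is $\{1\}$, and in both cases the wrong level set is suboptimal by $\delta=\Theta(\varepsilon)$. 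A key structural observation motivating this choice is that whether item $2$ should be added to $\{1\}$ depends only on $r_2$ versus $R(\{1\})$ and is \emph{independent of} $v_2$, so the discriminating perturbation must act through $v_1$.

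Next I would bound the information available to any fixed policy $\pi$. Let $P_A,P_B$ be the laws of the length-$T$ interaction history. By the standard divergence decomposition for bandit feedback, $\kl(P_A\|P_B)$ equals the sum over rounds of the expected single-round $\kl$ between the purchase distributions under the assortment played. Because the single-round outcome is categorical with parameters smooth in $v_1$ and bounded away from degeneracy near $v_1=1$, this per-round divergence is $O(\varepsilon^2)$ for $S\in\{\{1\},\{1,2\}\}$ and exactly $0$ for $S\in\{\{2\},\emptyset\}$ (these observations carry no information about $v_1$). Thus $\kl(P_A\|P_B)\le C\varepsilon^2 T$, and Pinsker's inequality gives $\TV(P_A,P_B)\le\sqrt{C\varepsilon^2T/2}\le 1/2$ once $\varepsilon\le c/\sqrt T$ for a suitable constant $c$.

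Finally I would convert indistinguishability into regret. Let $G$ count the rounds in which $\pi$ offers $\{1,2\}$. Since in instance $A$ every assortment other than $\{1,2\}$ is suboptimal by at least $\delta$, and in instance $B$ the assortment $\{1,2\}$ is suboptimal by at least $\delta$, we have $\reg_A\ge\delta\,\E_A[T-G]$ and $\reg_B\ge\delta\,\E_B[G]$. Adding these and using $|\E_A[G]-\E_B[G]|\le T\,\TV(P_A,P_B)\le T/2$ yields $\reg_A+\reg_B\ge\delta T(1-\TV)\ge\delta T/2=\Omega(\sqrt T)$, so the worse instance has regret $\Omega(\sqrt T)$; since both instances lie in the feasible set $\{v_i\ge 0\}$, the supremum over utilities is at least this, and carefully tracking the absolute constants delivers the stated $\sqrt T/384$. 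I expect the main obstacle to be the simultaneous control of the two competing requirements in the construction—forcing the revenue gap to be \emph{linear} in $\varepsilon$ while keeping every observation law only $O(\varepsilon)$-close (hence $O(\varepsilon^2)$ in $\kl$)—which pins down both the perturbation direction ($v_1$, not $v_2$) and the need to verify $g'(1)$ is bounded away from $0$; the remaining delicate points are the history-dependence of $S_t$ in the divergence decomposition and the bookkeeping of constants to reach $1/384$.
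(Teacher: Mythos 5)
Your proposal is correct and takes essentially the same approach as the paper: the identical two-point construction ($r_1=1$, $r_2=1/2$, all other items padded with zero revenue and zero utility; $v_1=1\mp\Theta(1/\sqrt{T})$ with $v_2$ a fixed constant), the same observation that only assortments containing item $1$ carry information, and the same per-round KL bound of $O(\varepsilon^2)$ combined with Pinsker/Le Cam. The only (cosmetic) difference is the final conversion: the paper turns a low-regret policy into an explicit hypothesis tester via Markov's inequality and contradicts the testing lower bound, whereas you use the direct decomposition $\reg_A+\reg_B\geq \delta T\left(1-\TV(P_A,P_B)\right)$ --- equivalent bookkeeping of the same argument.
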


Theorem \ref{thm:lower} shows that our regret upper bounds in Theorems \ref{thm:upper} and \ref{thm:upper-lil}
are tight up to $\sqrt{\log T}$ or $\sqrt{\log\log T}$ factors and numerical constants.
We conjecture (in Sec.~\ref{sec:discussion}) that the additional $\sqrt{\log\log T}$ term can also be removed, leading to upper and lower bounds that match up to universal constants.

\subsection{Proof sketch of Theorem \ref{thm:lower}}
We next give a sketch of the proof of Theorem \ref{thm:lower}.
Due to space constraints, we only present an outline of the proof and defer proofs of all technical lemmas to the supplement.
%, deferring some technical lemmas and numerical verifications to the appendix.

We first describe the underlying parameter values on which our lower bound proof is built.
Fix revenue parameters $\{r_i\}_{i=1}^N$ as $r_1=1$, $r_2=1/2$ and $r_3=\cdots=r_N=0$, which are known a priori.
We then consider two constructions of the unknown utility parameters $\{v_i\}_{i=1}^N$:
\begin{eqnarray*}
P_0:& & v_1=1 - 1/4\sqrt{T},\; v_2 = 1,\; v_3=\cdots=v_N=0;\\
P_1:& &v_1=1 + 1/4\sqrt{T},\; v_2 = 1,\; v_3=\cdots=v_N=0.
\end{eqnarray*}
{We note that $P_0$ and $P_1$ also give the probability distributions that characterize the customer random purchasing actions; and thus we will use $P_j[A]$ to denote the probability of event $A$ under the utility parameters specified by $P_j$ for $j\in \{0,1\}$.}

The first lemma shows that there does not exist estimators that can identify $P_0$ from $P_1$ with high probability with only $T$ observations of random purchasing actions.
Its proof involves careful calculation of the Kullback-Leibler (KL) divergence between the two hypothesized distributions
and subsequent application of Le Cam's lemma to the testing question between $P_0$ and $P_1$.
\begin{lemma}
For any estimator $\hat\psi\in\{0,1\}$ whose inputs are $T$ random purchasing actions $i_1,\cdots,i_T$, it holds that
$\max_{j\in\{0,1\}}P_j[\hat\psi\neq j] \geq 1/3$.
\label{lem:minimax}
\end{lemma}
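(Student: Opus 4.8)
The plan is to cast this as a binary hypothesis test between $P_0$ and $P_1$ and apply Le Cam's two-point method. The key structural fact is that $P_0$ and $P_1$ are identical except for the value of $v_1$, which differs by exactly $1/(2\sqrt T)$; hence, for any assortment $S$ offered, the induced distribution of the purchasing action under the MNL model \eqref{eq:mnl-bandit} coincides under $P_0$ and $P_1$ whenever $1\notin S$, and otherwise differs only through an $O(1/\sqrt T)$ perturbation of a single utility parameter. Writing $\mathbb P_0$ and $\mathbb P_1$ for the laws of the observed sequence $(i_1,\dots,i_T)$ under the two parameter settings (with the fixed policy $\pi$, which does not depend on the unknown $\{v_i\}$), I would first invoke the standard Le Cam inequality
\[
\inf_{\hat\psi}\ \max_{j\in\{0,1\}} P_j[\hat\psi\neq j]\ \geq\ \tfrac12\bigl(1-\tv(\mathbb P_0,\mathbb P_1)\bigr),
\]
so that it suffices to establish $\tv(\mathbb P_0,\mathbb P_1)\leq 1/3$.

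By Pinsker's inequality, $\tv(\mathbb P_0,\mathbb P_1)\leq\sqrt{\tfrac12\,\kl(\mathbb P_0\,\|\,\mathbb P_1)}$, so the goal reduces to showing $\kl(\mathbb P_0\,\|\,\mathbb P_1)\leq 2/9$. To control this divergence under an \emph{adaptive} policy, I would use the chain rule (tensorization) for KL divergence,
\[
\kl(\mathbb P_0\,\|\,\mathbb P_1)=\sum_{t=1}^T\E_{\mathbb P_0}\Bigl[\kl\bigl(\Pr_0[i_t\in\cdot\mid H_{t-1}]\ \big\|\ \Pr_1[i_t\in\cdot\mid H_{t-1}]\bigr)\Bigr],
\]
where $H_{t-1}=(i_1,\dots,i_{t-1})$ is the observed history. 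Since $\pi$ is unaware of the true parameters, the assortment $S_t$ chosen at step $t$ has the same conditional law under $\mathbb P_0$ and $\mathbb P_1$ given $H_{t-1}$, so each summand equals the per-step MNL KL divergence evaluated at the realized $S_t$. This term vanishes unless $1\in S_t$, and because items $3,\dots,N$ carry zero utility the only assortments that contribute are (effectively) $\{1\}$ and $\{1,2\}$.

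It then remains to bound this per-step divergence uniformly. A direct computation---or a second-order Taylor expansion of the KL divergence in the scalar parameter $v_1$, using that the relevant MNL choice probabilities stay bounded away from $0$ and $1$---shows that for either contributing assortment the per-step KL is at most $C\,(1/(2\sqrt T))^2 = C/(4T)$ for an explicit constant $C\leq 1/2$. Summing over the $T$ inner steps gives $\kl(\mathbb P_0\,\|\,\mathbb P_1)\leq C/4\leq 1/8\leq 2/9$; combining with Pinsker and Le Cam then yields $\max_j P_j[\hat\psi\neq j]\geq \tfrac12(1-1/3)=1/3$, as claimed. The main obstacle is this last step: one must verify that the choice probabilities are bounded away from the boundary (so the divergence is genuinely second order in the $1/\sqrt T$ gap with a controllable constant) and that the chain-rule argument correctly accommodates the adaptivity of $\pi$; both are routine but require care with the explicit constants so that the accumulated divergence lands below the $2/9$ threshold needed for the $1/3$ testing-error bound.
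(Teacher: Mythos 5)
Your proposal is correct and follows essentially the same route as the paper's own proof: Le Cam's two-point bound, Pinsker's inequality, the chain rule/tensorization of $\kl$ over the $T$ adaptive rounds (using that the policy's assortment choices have the same conditional law under both hypotheses), and a per-step $O(1/T)$ KL bound computed on the only contributing assortments $\{1\}$ and $\{1,2\}$ with choice probabilities bounded away from zero. Your claimed per-step constant $C\le 1/2$ is consistent with the paper's explicit $\chi^2$-type calculation (which bounds $\kl(P_0(S)\Vert P_1(S))\le 1/18T$), so the argument closes as stated.
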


%To prove Lemma \ref{lem:minimax},

On the other hand, the following lemma shows that, if the policy $\pi$ can achieve a small regret under both $P_0$ and $P_1$, then { one can construct an estimator based on $\pi$ such that with large probability the estimator can distinguish between $P_0$ and $P_1$ from observed customers' purchasing actions.}
\begin{lemma}
\label{lem:reduction}
Suppose a policy $\pi$ satisfies $\mathrm{Regret}(\{S_t\}_{t=1}^T) < \sqrt{T}/384$ for both $P_0$ and $P_1$.
Then there exists an estimator $\hat\psi\in\{0,1\}$ such that
$P_{j}[\hat\psi\neq j] \leq 1/4$ for both $j=0$ and $j=1$.
\end{lemma}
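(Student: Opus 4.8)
The plan is to construct an estimator $\hat\psi$ directly from the assortment selections $\{S_t\}_{t=1}^T$ produced by the policy $\pi$, and to argue that small regret under both $P_0$ and $P_1$ forces the time-averaged behavior of the policy to differ enough between the two worlds that a simple thresholding rule succeeds. The intuition is that the optimal assortment differs between $P_0$ and $P_1$: since $r_1=1$, $r_2=1/2$, and $v_1$ is slightly below or above $v_2=1$, the revenue-optimal level set (offering item $1$ alone versus offering both items $1$ and $2$) flips between the two constructions. A policy with small regret must therefore spend most of its rounds playing the assortment that is optimal for whichever distribution is in force, and counting how often each candidate assortment is played yields a usable statistic.

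First I would make the reduction to level-set assortments explicit: by Lemma~\ref{lem:popular}, the relevant assortments are the level sets, and with $r_1=1$, $r_2=1/2$, $r_3=\cdots=r_N=0$ the only level sets that can possibly be optimal are $\{1\}$ and $\{1,2\}$ (items with $r_i=0$ contribute nothing to revenue and may be ignored). I would compute $R(\{1\})$ and $R(\{1,2\})$ explicitly under each of $P_0$ and $P_1$, identify which is optimal in each world, and quantify the per-round revenue gap $\Delta := |R(S^*)-R(S)|$ incurred by playing the \emph{wrong} assortment; this gap will be of order $1/\sqrt{T}$ given the $1/4\sqrt{T}$ perturbation in $v_1$. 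Next I would define $\hat\psi$ to be, say, $1$ if the policy plays the assortment $\{1,2\}$ (or whichever is optimal under $P_1$) in a majority of rounds, and $0$ otherwise. The regret bound $\mathrm{Regret} < \sqrt{T}/384$ then controls the expected number of rounds on which the policy plays the suboptimal assortment: that number is at most $\mathrm{Regret}/\Delta$, which, with $\Delta\asymp 1/\sqrt{T}$, is a small fraction of $T$. A Markov-type inequality converts this expected bound into a high-probability statement that the majority vote is correct under each $P_j$, giving $P_j[\hat\psi\neq j]\leq 1/4$.

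The main obstacle I anticipate is making the constants work out cleanly. The target failure probability is the specific value $1/4$, and the regret threshold $\sqrt{T}/384$ is chosen precisely so that, after dividing by the per-round gap $\Delta$ and applying Markov's inequality, the resulting bound lands below $1/4$ for both $j$. I would need to pin down $\Delta$ exactly — evaluating $R(\{1\})$ and $R(\{1,2\})$ to leading order in $1/\sqrt{T}$ and checking that the lower-order corrections do not erode the gap — and then track the factor of $384$ through the Markov step carefully; a secondary subtlety is that the gap $\Delta$ need not be symmetric between the two worlds or between the two suboptimal choices, so I would take the smaller of the two relevant gaps to obtain a uniform bound. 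A final point to verify is that the estimator depends only on observable quantities: the assortments $S_t$ are chosen by the policy and are observable, so defining $\hat\psi$ as a function of $\{S_t\}$ is legitimate, and I would note that although Lemma~\ref{lem:minimax} phrases the estimator as a function of the purchasing actions $\{i_t\}$, the assortments are measurable with respect to the same observation history (together with the policy's internal randomness), so the reduction is consistent.
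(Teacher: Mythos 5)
Your proposal follows essentially the same route as the paper's proof: the paper partitions the $T$ rounds by the four events $\{1\in S_t,2\notin S_t\}$, $\{1,2\in S_t\}$, $\{2\in S_t,1\notin S_t\}$, $\{1,2\notin S_t\}$, lower-bounds the regret under each $P_j$ by the corresponding empirical frequencies times per-round revenue gaps, applies Markov's inequality, and takes $\hat\psi$ to be a threshold-at-$1/2$ test on the frequency of $\{1\in S_t, 2\notin S_t\}$ --- exactly your majority-vote plan. You even get the orientation right (the optimum is $\{1,2\}$ under $P_0$ and $\{1\}$ under $P_1$), a point on which the paper's own text slips (it asserts $S^*=\{1\}$ under both, and its estimator labels are swapped accordingly). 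One small correction to your write-up: since the policy may play arbitrary assortments (including zero-revenue items $3,\dots,N$), the statistic must count rounds by \emph{membership} of items 1 and 2 in $S_t$, as the paper's indicators do, not by exact identity of $S_t$ with $\{1\}$ or $\{1,2\}$.

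The one genuine weak point is your plan to ``take the smaller of the two relevant gaps to obtain a uniform bound.'' That lumping is provably too lossy for the stated constants. Writing $\epsilon = 1/(4\sqrt{T})$, the gap for playing a $\{1,2\}$-type assortment under $P_1$ is exactly $\frac{\epsilon/2}{(2+\epsilon)(3+\epsilon)}$, which is \emph{strictly below} $\epsilon/12 = 1/(48\sqrt{T})$ for every $T$; feeding this single worst-case gap into Markov at threshold $1/2$ gives an error probability bound of $(2+\epsilon)(3+\epsilon)/24 = 1/4 + \Theta(1/\sqrt{T})$, i.e., strictly larger than the required $1/4$. The fix --- and what the paper actually does --- is to keep the two kinds of suboptimal plays separate: assortments omitting item 1 carry a \emph{constant} gap (about $1/4$), so their expected frequency is $O(1/\sqrt{T})$, while only the $\{1\}$-versus-$\{1,2\}$ confusion carries the $\Theta(1/\sqrt{T})$ gap. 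Markov's inequality is then applied to the weighted sum of the two frequencies, and the coupling between the two terms (a large no-item-1 frequency forces the other frequency to be correspondingly smaller) is what makes the threshold-at-$1/2$ test land at error $\leq 1/4$. So when you track the factor of $384$ through the Markov step, you will need this two-gap decomposition (or a threshold strictly below $1/2$), not a single uniform gap.
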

Lemma \ref{lem:reduction} is proved by explicitly constructing a classifier (tester) $\hat\psi$ from any sequence of low regret.
In particular, for any assortment sequence $\{S_t\}_{t=1}^T$, we construct $\hat\psi$ as $\hat\psi=0$ if $\frac{1}{T}\sum_{t=1}^T\mathbb I[1\in S_t,2\notin S_t]\geq 1/2$
and $\hat\psi=1$ otherwise.
Using Markov's inequality and the construction of $\{r_i,v_i\}$, it can be shown that if $\mathrm{Regret}(\{S_t\}_{t=1}^T)>\sqrt{T}/384$
then $\hat\psi$ is a good tester with small testing error.
Detailed calculations and the complete proof is deferred to the supplement.

Combining Lemmas \ref{lem:minimax} and \ref{lem:reduction} we proved our lower bound result in Theorem \ref{thm:lower}.

\section{Simulation results}\label{sec:simulation}

We present numerical results of our proposed trisection (and its improved variant) algorithm and compare their performance with several
competitors on synthetic data.

\paragraph{Experimental setup.}
	We generate each of the revenue parameters $\{r_i\}_{i=1}^N$ independently and identically from the uniform distribution on $[.4, .5]$.
	For the preference parameters $\{v_i\}_{i=1}^N$, they are generated independently and identically from the uniform distribution on $[10/N, 20/N]$,
	where $N$ is the total number of items available.

	To motivate our parameter setting, consider the following three types of assortments: %and see why it leads to non-trivial assortment combinations, consider the following three assortments:
	the ``single assortment'' $S=\{i\}$ for some $i\in\mathcal{N}$, the ``full assortment'' $S=\{1,2,\cdots,N\}$,
	and the ``appropriate'' assortment $S=\{i\in\mathcal{N}: r_i\geq 0.42\}$.
	For the single assortment $S=\{i\}$, because the preference parameter for each item is rather small ($v_i\leq 20/N$),
	no single assortment can produce an expected revenue exceeding $0.5\times(20/N)/(1+20/N)= 10/(20+N)$.
	For the full assortment $S=\{1,2,\cdots,N\}$, because $\sum_{i=1}^Nr_iv_i\overset{p}{\to} 0.45\times 15/N\times N = 6.75$ and $\sum_{i=1}^Nv_i\overset{p}{\to} 15$
	by the law of large numbers, the expected revenue of $S$ is around ${6.75}/{(1+15)} = 0.422$.
	Finally, for the ``appropriate'' assortment $S=\{i\in\mathcal{N}:r_i\geq 0.42\}$, we have $\sum_{i\in S}r_iv_i\overset{p}{\to} 0.46\times 15/N\times 0.8N = 5.52$
	and $\sum_{i\in S}v_i \overset{p}{\to} 15/N\times 0.8N = 12$.
	Therefore, the expected revenue of $S$ is around ${5.52}/{(1+12)} = 0.425 > 0.422$.
	The above discussion shows that a revenue threshold $r^*\in(0.4, 0.5)$ is mandatory to extract a portion of the items $\{i\in\mathcal{N}: r_i\geq r^*\}$
	that attain the optimal expected revenue,
	which is highly non-trivial for a dynamic assortment selection algorithm to identify.

\begin{table}[t]
\centering
\caption{Average (mean) and worst-case (max) regret of our trisection (\textsc{Trisec.}) and adaptive trisection (\textsc{Adap-Trisec.}) algorithms and their competitors on synthetic data.
$N$ is the number of items and $T$ is the time horizon.}
\begin{tabular}{l|cccccccccc}
\hline
& \multicolumn{2}{c}{\textsc{Ucb}}& \multicolumn{2}{c}{\textsc{Thompson}}& \multicolumn{2}{c}{\textsc{Grs}}& \multicolumn{2}{c}{\textsc{Trisec.}}& \multicolumn{2}{c}{\textsc{Adap-Trisec.}}\\
$(N,T)$& mean& max& mean& max& mean& max& mean& max& mean& max\\
\hline
%(100,200)& 21.2& 22.4& 1.17& 2.92& 2.89& 13.7& 5.85& 5.85& 4.55& 4.55\\
%(250, 200)& 29.4& 30.5& 2.34& 3.67& 3.83& 16.9& 5.58& 5.58& 4.31& 4.32\\
%(500,200)& 41.6& 42.7& 3.65& 5.16& 4.55& 22.7& 5.58& 5.58& 4.31& 4.31\\
%(1000,200)& 44.3& 45.7& 5.76& 7.05& 2.77& 11.4& 5.66& 5.66& 4.39& 4.39\\
%(2500,200)& 53.3& 54.0& 10.4& 13.9& 1.54& 3.17& 5.63& 5.63& 4.37& 4.37\\
(100,500)& 34.9& 38.1& 1.28& 2.97& 10.9& 22.4& 7.68& 7.68& 1.99& 1.99\\
(250,500)& 54.3& 56.2& 2.81& 4.95& 7.93& 34.2& 7.57& 7.57& 2.23& 2.23\\
(500,500)& 73.4& 75.5& 4.90& 4.95& 7.02& 43.4& 7.43& 7.43& 2.23& 2.23\\
(1000,500)& 90.3& 93.5& 8.17& 10.7& 5.34& 45.1& 7.44& 7.44& 2.25& 2.25\\
\\
(100,1000)& 73.1& 78.2& 1.36& 2.79& 139.9& 175.0& 8.69& 8.69& 3.90& 3.90\\
(250,1000)& 113.7& 119.3& 3.36& 5.17& 90.1& 110.1& 8.69& 8.69& 4.13& 4.14\\
(500,1000)& 136.8& 140.3& 5.65& 7.64& 65.7& 113.9& 9.38& 9.38& 3.80& 3.80\\
(1000, 1000)& 160.8& 165.4& 9.31& 12.4& 8.43& 22.8& 9.77& 9.77& 3.97& 3.97\\
%(2500,1000)& 198.1& 201.5& 18.5& 21.7& 5.45& 38.3& 9.79& 9.79& 6.41& 6.41\\
\hline
\end{tabular}
\label{tab:simulation}
\end{table}

	\paragraph{Comparative methods.}
	%Apart from our trisection algorithm (described in pseudo-code in Algorithm \ref{alg:trisection}, denoted as \textsc{Trisection}),
	Our trisection algorithm with $O(\sqrt{T\log T})$ regret is denoted as \textsc{Trisec}, and its improved adaptive variant (with regret $O(\sqrt{T})$) is denoted as \textsc{Adap-Trisec}.
	The other methods we compare against include the \emph{Upper Confidence Bound} algorithm of \cite{Agrawal16MNLBandit} (denoted as \textsc{Ucb}),
	the \emph{Thompson sampling} algorithm of \cite{Agrawal17Thompson} (denoted as \textsc{Thompson}), and the \emph{Golden Ratio Search} algorithm of \cite{Rusmevichientong2010} (denoted as \textsc{Grs}).
	Note that both \textsc{Ucb} and \textsc{Thompson} proposed in \cite{Agrawal16MNLBandit,Agrawal17Thompson}
	were initially designed for the \emph{capacitated} MNL model, in which the number of items each assortment contains is restricted to be at most $K<N$. In our experiments, we operate both the \textsc{Ucb} and \textsc{Thompson} algorithms under the uncapacitated setting, simply by removing the constraint set when performing each assortment optimization.

	Most hyper-parameters (such as constants in confidence bounds) are set directly using the theoretical values.
	{One exception is our improved adaptive trisection algorithm (\textsc{Adap-Trisec}), 
	in which we replace the $\sqrt{\frac{2\ln(8/(\delta \ell))}{\ell}}$ confidence interval configuration with $\sqrt{\frac{0.1\ln(8/(\delta\ell))}{\ell}}$. We observe that a smaller constant value leads to better empirical performance.%\xnote{where is $(b-a)^2$?}
	}
	%we remove the coefficient of 4 in front of the square root term in the confidence bounds in Eq.~(\ref{eq:lil-ci}),
	%which can be thought of as taking $\varepsilon\to 0^+$ in the finite-sample LIL inequality (see Lemma \ref{lem:lil}) and was also adopted in \cite{jamieson2014lil}.
	Another is the \textsc{Grs} algorithm: in \cite{Rusmevichientong2010} the number of exploration iterations is set to $34\ln(2N)/\beta^2$
	where $\beta=\min_{j\neq j'}|R(\mathcal L_{r_j})-R(\mathcal L_{r_{j'}})|$,
	which is inappropriate for our ``gap-free'' synthetical setting in which $\beta=0$.
	Instead, we use the common choice of $\sqrt{T}$ exploration iterations in typical gap-independent bandit problems for \textsc{Grs}.

	\paragraph{Results.}
	In Table \ref{tab:simulation} we report the mean and maximum regret from 20 independent runs of each algorithm on our synthetic data, with
	different settings of $N$ (number of items) and $T$ (time horizon length).
	We observe that as the number of items ($N$) becomes large, our algorithms (\textsc{Trisec} and \textsc{Adap-Trisec}) achieve smaller mean and maximum regret
	compared to their competitors, and \textsc{Adap-Trisec} consistently outperforms \textsc{Trisec} in all settings.
	Unlike \textsc{Ucb} and \textsc{Thompson} whose regret depend polynomial on $N$,
	our \textsc{Trisec} and \textsc{Adap-Trisec} algorithms have no dependency on $N$ and hence their regret does not increase with $N$.
	{Moreover, the separate exploration and exploitation structure in \textsc{Grs} makes its performance somewhat unstable, which leads to a larger gap between mean and maximum regrets.  }

\section{Conclusion and future directions}\label{sec:discussion}

In this paper we consider the dynamic assortment planning problem under uncapacitated MNL models and derive an optimal regret bound, which is independent of $N$.
%One important open question is to further remove the $O(\sqrt{\log\log T})$ term in the upper bound in Theorem \ref{thm:upper}
%%and eventually achieve upper and lower regret bounds that match each other up to universal numerical constants.
%We conjecture that such improvement is possible by considering a sharper LIL concentration inequality which, instead of holding uniformly for all $t\in\{1,2,\cdots\}$,
%holds only at ``doubling checking'' points $\{1,2,4,8,\cdots\}$.

There are a few interesting future work. In this paper, we assume that  the time horizon length $T$ is known.  It is interesting to design ``horizon-free'' algorithms which adapt to the time horizon $T$.  Moreover, the uncapacitated MNL can be viewed as a capacitated MNL with the capacity upper bound $K=N$. It is known from \cite{Agrawal16MNLBandit} and \cite{chen2018note} that the optimal regret is $\Theta(\sqrt{NT})$ when $K \leq N/4$ and from this paper that the optimal regret is $\Theta(\sqrt{T})$ when $K=N$. It is interesting to investigate the phase transition from $\Theta(\sqrt{NT})$ to $\Theta(\sqrt{T})$. Finally, another direction is to investigate ``instance-optimal'' regret bounds whose regret depends explicitly on the problem parameters $\{r_i\}_{i=1}^n, \{v_i\}_{i=1}^n$ and matching corresponding (instance-dependent) minimax lower bounds in which $\{v_i\}_{i=1}^n$ are known up to permutations. Such instance-optimal regret might potentially depend on ``revenue gaps'' $\Delta_i = R(S^*)-R(\mathcal L_{r_i})$, where $S^*$ is the optimal assortment and $r_i$ is the revenue parameter of the item with the $i$th largest revenue.

\bibliography{refs}
\bibliographystyle{IEEE}

%%%%%%%%%% Merge with supplemental materials %%%%%%%%%%
\pagebreak
%%%%%%%%%% Merge with supplemental materials %%%%%%%%%%
%%%%%%%%%% Prefix a "S" to all equations, figures, tables and reset the counter %%%%%%%%%%
\setcounter{equation}{0}
\setcounter{figure}{0}
\setcounter{table}{0}
\setcounter{page}{1}
\setcounter{section}{0}
\makeatletter
\renewcommand{\theequation}{S\arabic{equation}}
\renewcommand{\thefigure}{S\arabic{figure}}

\renewcommand\thesection{\Alph{section}}
\renewcommand\thesubsection{\thesection.\arabic{subsection}}
\renewcommand\thesubsubsection{\thesubsection.\arabic{subsubsection}}

\title{Supplementary Material for: An Optimal Policy for Dynamic  Assortment Planning Under Uncapacitated Multinomial Logit Models}
\maketitle

This supplementary material provides detailed proofs for technical lemmas whose proofs are omitted in the main text.

\section{Proof of technical lemmas in Sec.~\ref{sec:potential}}

%We first state a simple proposition that outlines the basic properties of the potential function $F$.
%Its verification is easy from the definition and the discretized nature of $F$.
%\begin{proposition}
%There exists $c_0,\cdots,c_m\geq 0$ satisfying $c_i\neq c_{i+1}$ for all $i=0,\cdots,m-1$, and $\mathcal S=\{s_1,\cdots,s_m\}\subseteq \{r_i\}_{i=1}^N$, such that
%\begin{equation}
%F(\theta) = c_0\cdot\mathbb I[\theta\leq s_1] + \sum_{i=1}^{m-1}{c_i\cdot\mathbb I[s_i < \theta\leq s_{i+1}]} + c_m\cdot \mathbb I[\theta>s_m],
%\end{equation}
%where $c_m=0$.
%\end{proposition}

\subsection{Proof of Lemma \ref{lem:rrequal}}

Let $s<s'$ be the two endpoints such that $F(s^+)=F(s')=F^*$ (if there are multiple such $s,s'$ pairs, pick any one of them).
We will prove that $s<F^*\leq s'$, which then implies Lemma \ref{lem:rrequal}.

We first prove $s<F^*$. Assume by contradiction that $F^*\leq s$.
Clearly $s\neq 0$ because $F^*>0$.
By definition of $F$ and $F^*$, we have
\begin{equation}
F^* = F(s') = \frac{\sum_{r_i\geq s'}r_iv_i}{1+\sum_{r_i\geq s'}v_i} \;\;\Longrightarrow\;\; \sum_{r_i\geq s'}(r_i-F^*)v_i = F^*.
\label{eq:rrequal-main}
\end{equation}
Because $F^*\leq s$, adding we have that
\begin{equation}
\sum_{r_i\geq s}(r_i-F^*)v_i\geq F^* \;\;\Longrightarrow\;\; F(s)\geq F^*.
\end{equation}
This contradicts with the fact that $F(s)\neq F(s^+)$ and that $F^*$ is the maximum value of $F$.

We next prove $F^*\leq s'$. Assume by contradiction that $F^*>s'$.
Removing all items corresponding to $r_i=s'$ in Eq.~(\ref{eq:rrequal-main}), we have
\begin{equation}
\sum_{r_i>s'}(r_i-F^*)v_i\geq F^* \;\;\Longrightarrow\;\; F(s'^+) \geq F^*.
\end{equation}
This contradicts with the fact that $F(s'^+)\neq F(s')$ and that $F^*$ is the maximum value of $F$.

\subsection{Proof of Lemma \ref{lem:Fmonotonic-right}}

Because $F(\theta^*)=\theta^*=F^*$ and $F^*$ is the maximum value of $F$, we have $F(\theta)\leq \theta$ for all $\theta\geq \theta^*$.
In addition, for any $\theta\geq \theta^*$, by definition of $F$ we have
\begin{align}
F(\theta)-F(\theta^+) &= R(\{i\in\mathcal N: r_i\geq \theta\}) - R(\{i\in\mathcal N: r_i> \theta\})\\
&= \frac{\sum_{r_i\geq \theta}r_iv_i}{1+\sum_{r_i\geq \theta}v_i} - \frac{\sum_{r_i> \theta}r_iv_i}{1+\sum_{r_i> \theta}v_i}\\
&= \frac{(1+\sum_{r_i>\theta}v_i)(\sum_{r_i\geq \theta}r_iv_i) - (1+\sum_{r_i\geq \theta}v_i)(\sum_{r_i> \theta}r_iv_i)}{(1+\sum_{r_i\geq \theta}v_i)(1+\sum_{r_i> \theta}v_i)}\\
&= \frac{(1+\sum_{r_i> \theta}v_i)(\sum_{r_i=\theta}r_iv_i) - (\sum_{r_i=\theta}v_i)(\sum_{r_i>\theta}r_iv_i)}{(1+\sum_{r_i\geq \theta}v_i)(1+\sum_{r_i> \theta}v_i)}\\
&= \frac{\sum_{r_i=\theta}v_i}{1+\sum_{r_i\geq \theta}v_i}\left[\theta - F(\theta^+)\right].\label{eq:F-diff}
\end{align}
Because $\theta\geq F(\theta)$ holds for all $\theta\geq \theta^*$, we conclude that $\theta\geq F(\theta^+)$ also holds for all $\theta\geq \theta^*$.
Subsequently, the right-hand side of Eq.~(\ref{eq:F-diff}) is non-negative and therefore $F(\theta)\geq F(\theta^+)$.

\subsection{Proof of Lemma \ref{lem:Fmonotonic-left}}

If $F(\theta)\equiv F^*$ for all $\theta\leq \theta^*$ then the lemma clearly holds.
In the rest of the proof we shall assume that there is at least one jumping point strictly smaller than $\theta^*$.
Formally, we let $0<s_1<s_2<\cdots<s_t<\theta^*$ be all jumping points that are strictly smaller than $\theta^*$.
To prove Lemma \ref{lem:Fmonotonic-left}, it suffices to show that $F(s_j)\geq s_j$ and $F(s_j)\geq F(s_j^+)$ for all $j=1,\cdots,t$.

We use induction to establish the above claims.
The base case is $j=t$.
Because $F^*$ is the maximum value of $F$, we conclude that $F(s_t)\leq F^* = F(s_t^+)$.
In addition, because $s_t\leq \theta^*=F^*=F(s_t^+)$, invoking Eq.~(\ref{eq:F-diff}) we have that $F(s_t)\leq F(s_t^+)$.
The base case is then proved.

We next prove the claim for $s_j$, assuming it holds for $s_{j+1}$ by induction.
By inductive hypothesis, $F(s_{j+1})\geq s_{j+1}\geq s_j$. Also, $F(s_j^+)=F(s_{j+1})$ because there is no jump points between $s_j$ and $s_{j+1}$,
and subsequently $F(s_j^+)\geq s_j$. Invoking Eq.~(\ref{eq:F-diff}) we proved $F(s_j)\leq F(s_j^+)$.

To prove $F(s_j)\geq s_j$,
define $\gamma_j := (\sum_{r_i=s_j}v_i)/(1+\sum_{r_i\geq s_j}v_i)$. It is clear that $0\leq \gamma_j \leq 1$.
By Eq.~(\ref{eq:F-diff}), we have
\begin{align}
F(s_j)-s_j &= F(s_j)-F(s_j^+)+F(s_j^+)-s_j\\
&= \gamma_j\left[s_j-F(s_j^+)\right] + F(s_j^+)-s_j\\
&= (1-\gamma_j)\left[F(s_j^+)-s_j\right].
\end{align}
As we have already proved $F(s_j^+)\geq s_j$, the right-hand side of the above inequality is non-negative
and therefore $F(s_j)\geq s_j$.
%The fact that $F(s_j)\leq F(s_j^+)$ also follows by invoking Eq.~(\ref{eq:F-diff}) and note that

\section{Proof of technical lemmas in Sec.~\ref{sec:policy}}

%We first prove two technical lemmas showing that with high probability, the confidence intervals $[\ell_t(\theta),u_t(\theta)]$ constructed in Algorithm \ref{alg:explore} contains the true parameter $F(\theta)$, and the optimal revenue level $\theta^*$ is contained in $[a_\tau,b_\tau]$ for all $\tau$.

\subsection{Proof of Lemma \ref{lem:CI}}
Let $\delta=1/T^2$ be the confidence parameter in Algorithm \ref{alg:explore}. By Hoeffding's inequality (Lemma \ref{lem:hoeffding}) and the fact that $0\leq F(\theta)\leq 1$ for all $\theta$,
we have
\begin{align}
\Pr\left[F(\theta)\notin[\ell_t(\theta), u_t(\theta)]\right]
&= \Pr\left[\left|\frac{\rho_t(\theta)}{t}-F(\theta)\right| > \sqrt{\frac{\ln(1/\delta)}{2t}}\right]\\
&\leq 2\exp\left\{-2t\cdot {\frac{\ln(1/\delta)}{2t}}\right\} \leq 2\delta = 2/T^2.
\end{align}
Subsequently, by union bound the probability of $F(\theta)\notin[\ell_t(\theta),u_t(\theta)]$ for \emph{at least} one $t$
is at most $O(T^{-1})$.

%\begin{lemma}
%With probability $1-O(T^{-1})$, $a_\tau\leq\theta^*\leq b_\tau$ for all $\tau=1,2,\cdots,\tau_0$,
%where $\tau_0$ is the last outer iteration of Algorithm \ref{alg:trisection}.
%\label{lem:trisection}
%\end{lemma}
\subsection{Proof of Lemma \ref{lem:trisection}}
We use induction to prove this lemma. We also conditioned on the fact that $\ell_t(x_\tau)\leq F(x_\tau)\leq u_t(x_\tau)$
and $\ell_t(y_\tau)\leq F(y_\tau)\leq u_t(y_\tau)$ for all $t$ and $\tau$, which happens with probability at least $1-O(T^{-1})$
by Lemma \ref{lem:CI}.

We first prove the lemma for the base case of $\tau=0$.
According to the initialization step in Algorithm \ref{alg:trisection}, we have $a_\tau=0$ and $b_\tau=1$.
On the other hand, for any $\theta\geq 0$ it holds that $0\leq F(\theta)\leq F^*\leq 1$.
Therefore, $0\leq \theta^*\leq 1$ and hence $a_\tau\leq \theta^*\leq b_\tau$ for $\tau=0$.

We next prove the lemma for outer iteration $\tau$, assuming the lemma holds for outer iteration $\tau-1$ (i.e., $a_{\tau-1}\leq r^*\leq b_{\tau-1}$).
According to the trisection parameter update step in Algorithm \ref{alg:trisection}, the proof can be divided into two cases:

\bigskip
\noindent\emph{Case 1: $u_t(y_{\tau-1})<y_{\tau-1}$}.
Because $\ell_t(y_{\tau-1})\leq F(y_{\tau-1})\leq u_t(y_{\tau-1})$ always holds, we conclude in this case that $F(y_{\tau-1})<y_{\tau-1}$.
Invoking Lemma \ref{lem:Fmonotonic-left} we conclude that $b_\tau = y_{\tau-1}>\theta^*$.
On the other hand, by inductive hypothesis $a_\tau=a_{\tau-1}\leq \theta^*$.
Therefore, $a_\tau\leq r^*\leq b_\tau$.

\bigskip
\noindent\emph{Case 2: $u_t(y_{\tau-1})\geq y_{\tau-1}$}.
In this case, the revenue level $y_{\tau-1}$ must be explored at every inner iteration in Algorithm \ref{alg:trisection} at outer iteration $\tau-1$,
because $u_t(y_{\tau-1})$ is a non-increasing function of $t$.
Denote $\varepsilon_\tau =y_\tau-x_\tau$ and $n_\tau=16\lceil \varepsilon_\tau^{-2}\ln(T^2)\rceil$ as the number of inner iterations in outer iteration $\tau$.
Subsequently, the length of the confidence intervals on $y_{\tau-1}$ at the end of all inner iterations can be upper bounded by
\begin{equation}
\left|u_t(y_{\tau-1})-\ell_t(y_{\tau-1})\right| \leq 2\sqrt{\frac{\ln(T^2)}{n_\tau}} \leq \frac{1}{2}\varepsilon_\tau^{-1}.
\label{eq:CI-length}
\end{equation}
%Here the last inequality holds because $n_\tau\geq 48(y_{\tau-1}-x_{\tau-1})^{-2}\ln(2T)$.
Invoking Lemma \ref{lem:CI} we then have
\begin{equation}
F(y_{\tau-1}) \geq \ell_t(y_{\tau-1}) \geq u_t(y_{\tau-1}) -  \frac{y_{\tau-1}-x_{\tau-1}}{{2}} \geq y_{\tau-1} - \frac{y_{\tau-1}-x_{\tau-1}}{{2}}.
\label{eq:trisection-case2}
\end{equation}

We now establish that $x_{\tau-1}\leq \theta^*$, which implies $a_\tau\leq \theta^*\leq b_\tau$ because $a_\tau=x_{\tau-1}$ and $b_\tau=b_{\tau-1}\geq \theta^*$ by the inductive hypothesis.
Assume by contradiction that $x_{\tau-1}>\theta^*$.
By Lemma \ref{lem:Fmonotonic-right}, $F(x_{\tau-1})\leq x_{\tau-1}$ and $F(x_{\tau-1})\geq F(y_{\tau-1})$.
Subsequently,
\begin{equation}
F(y_{\tau-1}) \leq x_{\tau-1} = y_{\tau-1} - (y_{\tau-1}-x_{\tau-1}) < y_{\tau-1} - \frac{y_{\tau-1}-x_{\tau-1}}{{2}},
\end{equation}
which contradicts Eq.~(\ref{eq:trisection-case2}).

\subsection{Proof of Lemma \ref{lem:regret-trisection}}
This  lemma upper bounds the expected regret incurred at each outer iteration $\tau$, conditioned on the success events in Lemmas \ref{lem:CI} and \ref{lem:trisection}.

We analyze the regret incurred at outer iteration $\tau$ from exploration of $y_\tau$ and exploitation of $a_\tau$ separately.
\begin{enumerate}[leftmargin=*]
\item \emph{Regret from exploring $y_\tau$}: suppose the level set $\mathcal L_{y_\tau}(\mathcal N)$ is explored for $m_\tau\leq n_\tau$ times at outer iteration $\tau$.
Then we have $u_{m_\tau}(y_\tau) \geq y_\tau$.
In addition, by Lemma \ref{lem:CI} and widths in the constructed confidence bands $\ell_{m_\tau}(y_\tau)$ and $u_{m_\tau}(y_\tau)$,
 we have with probability $1-O(T^{-1})$ that $\ell_{m_\tau}(y_\tau)\leq F(y_\tau)\leq u_{m_\tau}(y_\tau)$
and $|u_{m_\tau}(y_\tau)-\ell_{m_\tau}(y_\tau)|\leq 2\sqrt{(\ln (T^2)/2m_\tau}$.
%Note that the special case of $t'-1=0$ is handled by noting that $|u_0(y_\tau)-\ell_0(y_\tau)|\leq 1$.
Subsequently,
\begin{align}
F(y_\tau)
&\geq \ell_{m_\tau}(y_\tau) \geq u_{m_\tau}(y_\tau) - 2\sqrt{\frac{\ln (T^2)}{2m_\tau}}\geq y_\tau -  2\sqrt{\frac{\ln T}{m_\tau}}.
\end{align}
Note also that $y_\tau \geq a_\tau \geq \theta^*-3\varepsilon_\tau = F^*-3\varepsilon_\tau$; we have
\begin{equation}
F^*-F(y_\tau) \leq3\varepsilon_\tau + 2\sqrt{\frac{\ln T}{m_\tau}}.
\end{equation}

By Lemma \ref{lem:rrequal}, $F^*=R(S^*)$ and therefore the right-hand side of the above inequality is an upper bound on the regret incurred by exploring revenue level $y_\tau$ (corresponding to
the assortment selection $\mathcal L_{y_\tau}$) once.
As the exploration is carried out for $m_\tau$ times, the total regret for all exploration steps at revenue level $x_\tau$ can be upper bounded by
\begin{align}
m_\tau\left[3\varepsilon_\tau + 2\sqrt{\frac{\ln T}{m_\tau}}\right]
\leq 3m_\tau\varepsilon_\tau + \sqrt{4m_\tau\ln T}\leq 3n_\tau\varepsilon_\tau + \sqrt{4n_\tau\ln T}
\lesssim \varepsilon_\tau^{-1}\log T.
\end{align}
Here the last inequality holds because $n_\tau\leq 16\varepsilon_\tau^{-2}\ln(T^2)$.

%\item \emph{Regret from exploring $y_\tau$}: this case is symmetric to the $x_\tau$ explorations.
%Thus, using the same analysis the regret for these explorations can be upper bounded by $O(\varepsilon_\tau^{-1}\log T)$.

\item \emph{Regret from exploiting $a_\tau$}: by Lemma \ref{lem:trisection}, $a_\tau\leq \theta^*$, and therefore $F(a_\tau)\geq a_\tau$.
In addition, $a_\tau\geq \theta^*-3\varepsilon_\tau$ by the definition of $\varepsilon_\tau$.
Subsequently,
\begin{equation}
F(a_\tau) \geq a_\tau \geq \theta^*-3\varepsilon_\tau = F^*-3\varepsilon_\tau.
\label{eq:exploit1}
\end{equation}
Re-organizing terms on both sides of the above inequality and noting that $F^*=F(S^*)$, we have
\begin{equation}
F(S^*)-F(a_\tau) \leq 3\varepsilon_\tau.
\label{eq:exploit2}
\end{equation}
Therefore, the regret for each exploitation of revenue level $a_\tau$ (corresponding to the assortment selection $\mathcal L_{a_\tau}$)
can be upper bounded by $\varepsilon_\tau$.
Because the revenue level $a_\tau$ is exploited for $n_\tau$ times and $n_\tau\leq {16\varepsilon_\tau^{-2}\ln(T^2)}$,
the total regret of exploitation of $a_\tau$ at outer iteration $\tau$ can be upper bounded by
\begin{equation}
n_\tau\cdot 3\varepsilon_\tau \lesssim\varepsilon_\tau^{-1}\log T.
\label{eq:exploit3}
\end{equation}
\end{enumerate}

\section{Proof of technical lemmas in Sec.~\ref{sec:improved-regret}}

\subsection{Proof of Lemma \ref{lem:uniform-concentration}}

Without loss of generality we assume $X_1,\cdots,X_L\in[0,1]$ almost surely, while the general case of $X_1,\cdots,X_L\in[a,b]$ can be dealt with
by a simple re-scaling argument.
Denote $k := \lfloor\log_2 L\rfloor$. For each $\ell\in\{1,2,4,\cdots, 2^k\}$, 
by standard Hoeffding's inequality (Lemma \ref{lem:hoeffding}), we have
\begin{equation*}
\Pr\left[\left|\frac{1}{\ell}\sum_{i=1}^\ell X_i - \mu\right|\leq \sqrt{\frac{\ln[8/(\delta\ell)]}{2\ell}}\right] \geq 1-\frac{\delta\ell}{4}.
%\label{eq:proof-uniform-eq1}
\end{equation*}
Subsequently, by union bound and the fact that $1+2+4+\cdots+2^k\leq 2^{k+1}\leq 2L$, we have
\begin{equation}
\Pr\left[\forall \ell=1,2,4,\cdots,2^k, \left|\frac{1}{\ell}\sum_{i=1}^\ell X_i - \mu\right|\leq \sqrt{\frac{\ln[8/(\delta\ell)]}{2\ell}}\right] \geq 1-\frac{\delta L}{2}.
\label{eq:proof-uniform-eq1}
\end{equation}

Next consider any $\ell\in\{1,2,4,\cdots,2^k\}$.
By Hoeffding's maximal inequality (Lemma \ref{lem:hoeffding-maximal}), we have 
\begin{equation*}
\Pr\left[\forall i\leq \min\{\ell, n-\ell\}, \big|X_{\ell+1}+\cdots + X_{\ell+i} - i\cdot\mu\big| \leq \sqrt{\frac{\ell}{2}\ln[8/(\delta\ell)]}\right] \geq 1-\frac{\delta\ell}{4}.
\end{equation*}
Again using union bound over all $\ell=1,2,4,\cdots,2^k$ and the fact that $1+2+4+\cdots+2^k\leq 2^{k+1} \leq 2L$, we have
\begin{equation}
\Pr\left[\forall \ell=1,2,\cdots,2^k, i\leq \min\{\ell, n-\ell\}, \big|X_{\ell+1}+\cdots + X_{\ell+i} - i\cdot\mu\big| \leq \sqrt{\frac{\ell}{2}\ln[8/(\delta\ell)]}\right] \geq 1-\frac{\delta L}{2}.
\label{eq:proof-uniform-eq2}
\end{equation}

Combining Eqs.~(\ref{eq:proof-uniform-eq1},\ref{eq:proof-uniform-eq2}), we have with probability $1-\delta L$ 
uniformly over all $\ell=1,2,4,\cdots, 2^k$ and $i\leq\min\{\ell,n-\ell\}$ that 
\begin{equation*}
\big|X_1+\cdots+X_\ell+X_{\ell+1}+\cdots+X_{\ell+i}-(\ell+i)\mu\big| \leq \sqrt{2\ell\ln[8/(\delta\ell)]}.
\end{equation*}
Dividing both sides of the above inequality by $(\ell+i)$ we complete the proof of Lemma \ref{lem:uniform-concentration}.

\subsection{Proof of Lemma \ref{lem:case2}}

First analyze the expected regret incurred at outer iteration $\tau$.
by exploiting the left end-point $a_\tau$ (corresponding to assortment $\mathcal L_{a_\tau}$) for $n_\tau$ iterations.
Also, because $a_\tau\leq\theta^*\leq b_\tau$ conditioned on $\mathcal E_2(\tau)$,
by Lemmas \ref{lem:rrequal} and \ref{lem:Fmonotonic-left}
we have $F(a_\tau) \geq a_\tau \geq \theta^*-|b_\tau-a_\tau| = F(\theta^*)-|b_\tau-a_\tau| \geq R(S^*)-3\varepsilon_\tau$.
Subsequently,
\begin{equation}
\text{\it Regret by exploiting $\mathcal L_{a_\tau}$:}\;\;\;\; \leq 3\varepsilon_\tau\cdot n_\tau \lesssim \varepsilon_\tau^{-1} \log(T\varepsilon_\tau^2).
\label{eq:regret-1}
\end{equation}

Next we analyze the expected regret incurred at outer iteration $\tau$ by exploring
the right trisection point $y_\tau$ (corresponding to assortment $\mathcal L_{y_\tau}$).
This is done by a case analysis.
If $y_\tau\leq \theta^*$, then the regret incurred by exploiting $\mathcal L_{y_\tau}$ at outer iteration $\tau$
is again upper bounded (up to numerical constants) by $\varepsilon_\tau^{-1}\log(T\varepsilon_\tau^2)$, similar to Eq.~(\ref{eq:regret-1}).
Otherwise, for the case of $y_\tau>\theta^*$, define $\Delta_\tau := y_\tau - F(y_\tau)$.
By Lemma \ref{lem:Fmonotonic-right}, we know $\Delta_\tau\geq 0$,
and also by Lemma \ref{lem:rrequal}, each exploration of $\mathcal L_{y_\tau}$ incurs a regret of no more than $\Delta_\tau$.
Let $m_\tau$ be the number of times $\mathcal L_{y_\tau}$ is explored at outer iteration $\tau$.
By definition of the stopping rule in Algorithm \ref{alg:trisection}, we have
\begin{align}
\Pr\left[m_\tau \geq \ell\right]
&\leq \Pr\left[\frac{\rho_\ell}{\ell} + \sqrt{\frac{2\ln(8T/\ell)}{\ell}} \geq y_\tau\right]\nonumber\\
&= \Pr\left[\frac{\rho_\ell}{\ell} -F(y_\tau) \geq \Delta_\tau - \sqrt{\frac{2\ln(8T/\ell)}{\ell}}\right].
\end{align}
Because $\rho_\ell$ is a sum of $\ell$ i.i.d.~random variables with mean $F(y_\tau)$ and values in $[0,1]$ almost surely,
applying Hoeffding's inequality (Lemma \ref{lem:hoeffding}) we have
%applying a finite-sample version of the law-of-iterated-logarithm (Lemma \ref{lem:lil}) we have
\begin{align*}
\Pr\left[m_\tau\geq \ell\right]
&\leq \exp\left\{-2\left(\sqrt{\ell}\Delta_\tau - \sqrt{2\ln(8T/\ell)}\right)^2\right\}\nonumber\\
&\lesssim \left\{\begin{array}{ll}
1,& \text{if } \Delta_\tau \leq \sqrt{8\ln(8T/\ell)/\ell};\\
\exp\{-\ell\Delta_\tau^2/2\},& \text{otherwise}.
\end{array}
\right.
\end{align*}
Subsequently,
\begin{align}
&\text{\it Regret by exploring $\mathcal L_{y_\tau}$:}\;\;\;
\leq \sum_{\ell=1}^{n_\tau} \ell\Delta_\tau\Pr[m_\tau = \ell] \leq \sum_{\ell=1}^{n_\tau}\Delta_\tau\Pr[m_\tau\geq \ell]\nonumber\\
&\lesssim \sum_{\ell=1}^{\ell_0-1}\sqrt{\frac{\ln(T/\ell)}{\ell}} + \sum_{\ell=\ell_0}^{n_\tau}\Delta_\tau\exp\{-\ell\Delta_\tau^2/2\}\label{eq:key1-case2}\\
&\lesssim \sqrt{\ell_0\ln(T/\ell_0)} + \sup_{\Delta>\sqrt{8\ln(8T/\ell_0)/\ell_0}} \Delta\cdot \sum_{\ell=\ell_0}^{\infty}\exp\{-\ell\Delta^2/2\}\label{eq:key2-case2}\\
&\leq \sqrt{\ell_0\ln(T/\ell_0)}  + \sup_{\Delta>\sqrt{8\ln(8T/\ell_0)/\ell_0}} \frac{\Delta\exp\{-\ell_0\Delta^2/2\}}{1-\exp\{-\Delta^2/2\}}\nonumber\\
&\leq \sqrt{\ell_0\ln(T/\ell_0)}  + \sup_{\Delta>\sqrt{8\ln(8T/\ell_0)/\ell_0}} \frac{\Delta\exp\{-\Delta^2/2\}}{1-\exp\{-\Delta^2/2\}}\nonumber\\
&\leq \sqrt{\ell_0\ln(T/\ell_0)} +  \sqrt{\frac{8\ln(8T/\ell_0)}{\ell_0}}\cdot \frac{1}{1-\exp\{-4\ln(8T/\ell_0)\}}\label{eq:key3-case2}\\
&\lesssim \sqrt{\ell_0\ln(T/\ell_0)}.\label{eq:regret-2}
%&\leq \sum_{\ell=0}^{n_\tau} \ell\Delta_\tau\cdot \Pr\left[m_\tau\geq\ell\right]\nonumber\\
%&\leq \min_{0\leq \ell_0\leq n_\tau} \left\{\ell_0\Delta_\tau + \sum_{\ell=\ell_0+1}^{n_\tau}\ell\Delta_\tau\Pr\left[m_\tau\geq\ell\right]\right\}\nonumber\\
%&\lesssim \ell_0^*\sqrt{\frac{\log(T\varepsilon_\tau^2)}{\ell_0^*}} + \sum_{\ell_0=\ell_0^*+1}^{n_\tau}\ell\Delta_\tau\exp\{-8\ell\Delta_\tau^2\}\label{eq:key1-case2}\\
%&\leq \sqrt{n_\tau\log(T\varepsilon_\tau^2)} + \int_0^\infty \Delta_\tau xe^{-8\Delta_\tau^2x}\ud x\nonumber\\
%&\lesssim
\end{align}
Here in Eq.~(\ref{eq:key1-case2}), $\ell_0$ is the smallset positive integer not exceeding $n_\tau$ such that $\Delta_\tau> \sqrt{8\ln(8T/\ell_0)/\ell_0}$.
(If $\Delta_\tau\leq\sqrt{8\ln(8T/\ell_0)/\ell_0}$ holds for all $1\leq \ell_0\leq n_\tau$, then the second term in Eq.~(\ref{eq:key1-case2}) is 0 and one can conveniently set $\ell_0=n_\tau+1$
in this case.);
Eq.~(\ref{eq:key2-case2}) holds because
\begin{equation*}
\sum_{\ell=1}^{\ell_0}\sqrt{\frac{\ln(T/\ell)}{\ell}} 
\leq \sum_{j=1}^{\lceil\log_2\ell_0\rceil} 2^j\sqrt{\frac{\ln(T/2^j)}{2^j}}
= \sum_{j=1}^{\lceil\log_2\ell_0\rceil} \sqrt{2^j\ln(T/2^j)}
\lesssim \sqrt{\ell_0\ln(T/\ell_0)};
\end{equation*}
Eq.~(\ref{eq:key3-case2}) holds because $\Delta\mapsto \Delta e^{-\Delta^2/2}/(1-e^{-\Delta^2/2})$ is monotonically decreasing on $\Delta>0$.
Finally, because $\ell_0\leq n_\tau$ and $n_\tau\lesssim \varepsilon_\tau^{-2}\log(T\varepsilon_\tau^2)\geq \varepsilon_\tau^{-2}$, we have
\begin{equation}
\text{\it Regret by exploring $\mathcal L_{y_\tau}$} \lesssim \sqrt{n_\tau\ln(T/n_\tau)} \lesssim \varepsilon_\tau^{-1}\log(T\varepsilon_\tau^{2}).
\label{eq:regret-2-reformulate}
\end{equation}

Finally, we consider regret incurred at later outer iterations $\tau'=\tau+1,\cdots,\tau_0$.
This is done by another case analysis on the relative location of $\theta^*$ with respect to $a_{\tau+1}$ and $b_{\tau+1}$:
\begin{itemize}[leftmargin=0.2in]
\item[-] $\mathcal E_2(\tau+1)$: $a_{\tau+1}\leq\theta^*\leq b_{\tau+1}$: the additional regret is upper bounded by $\psi_{\tau+1}^2(\alpha_1',\beta_1')$ for
some values of $\alpha_1',\beta_1'$ that are not important;
\item[-] $\mathcal E_1(\tau+1)$: $\theta^* < a_{\tau+1} < b_{\tau+1}$: the additional regret is upper bounded by $\psi_{\tau+1}^1(\alpha_2',\beta_2')$
with $\beta_2'\leq \Delta_\tau=y_\tau-F(y_\tau)$ and the value of $\alpha_2'$ not important;
\item[-] $\mathcal E_3(\tau+1)$: $a_{\tau+1}<b_{\tau+1}<\theta^*$: the additional regret is upper bounded by $\psi_{\tau+1}^3(\alpha_3',\beta_3')$ with $\alpha_3'\leq 3\varepsilon_\tau$ and
the value of $\beta_3'$ not important.
\end{itemize}

It remains to upper bound the probability the latter two cases above occur.
$\mathcal E_1(\tau+1)$ occurs if for all inner iterations $t\in\mathcal T(\tau)$, the exploration step fails to detect $F(y_\tau)$ below $y_\tau$,
meaning that $\frac{\rho_\ell}{\ell} + \sqrt{\frac{2\ln(8T/\ell)}{\ell}} > y_\tau$ for all $\ell\in\{1,\cdots,n_\tau\}$.
Also note that because $\theta^*<a_{\tau+1}=x_\tau=y_\tau-\varepsilon_\tau$, by Lemma \ref{lem:Fmonotonic-right} we know that $\Delta_\tau = y_\tau-F(y_\tau) \geq \varepsilon_\tau$.
Using Hoeffding's inequality, we have
\begin{align}
\Pr[\mathcal E_1(\tau+1)]
&\leq \Pr\left[\forall \ell, \frac{\rho_\ell}{\ell} - F(y_\tau) > \Delta_\tau -\sqrt{\frac{2\ln(8T/\ell)}{\ell}}\right]\nonumber\\
&\leq \Pr\left[\frac{\rho_{n_\tau}}{n_\tau} - F(y_\tau) > \Delta_\tau - \sqrt{\frac{2\ln(8T/\ell)}{n_\tau}}\right]\nonumber\\
&\leq \exp\left\{-2\left(\sqrt{n_\tau}\Delta_\tau - \sqrt{2\ln(8T/n_\tau)}\right)^2\right\}\nonumber\\
%&\leq \exp\{-2n_\tau\Delta_\tau^2 + 4\ln(T\varepsilon_\tau^2)\}\nonumber\\
%&\leq \exp\{-(8\varepsilon_\tau^{-2}\Delta_\tau^2-4)\ln(T\varepsilon_\tau^2)\}.
&\leq \exp\left\{-n_\tau\Delta_\tau^2\right\}.
\label{eq:right-trisection-prob}
%&\leq \exp\{-4\ln(T\varepsilon_\tau^2)\} \leq (T\varepsilon_\tau^2)^{-4}.
%&\lesssim \sqrt{\frac{\ln(T\varepsilon_\tau^2)}{n_\tau}}\cdot\vct 1\left[\Delta_\tau\leq 4\sqrt{\frac{\ln(T\varepsilon_\tau^2)}{n_\tau}}\right] + \exp\{-8n_\tau\Delta_\tau^2\}\cdot\vct 1\left[\Delta_\tau> 4\sqrt{\frac{\ln(T\varepsilon_\tau^2)}{n_\tau}}\right]\nonumber\\
%&\lesssim \varepsilon_\tau^{-1}\log(T\varepsilon_\tau^2)\cdot\vct 1\left[\Delta_\tau\leq 4\sqrt{\frac{\ln(T\varepsilon_\tau^2)}{n_\tau}}\right] \nonumber\\
%&\;\;\;\;+ \exp\{-8\varepsilon_\tau^{-2}\ln(T\varepsilon_\tau^2)\Delta_\tau^2\}\cdot\vct 1\left[\Delta_\tau> 4\sqrt{\frac{\ln(T\varepsilon_\tau^2)}{n_\tau}}\right].
\end{align}
Here Eq.~(\ref{eq:right-trisection-prob}) holds because $\sqrt{n_\tau}\Delta_\tau \geq \sqrt{n_\tau}\varepsilon_\tau\geq \sqrt{8\ln(8T\varepsilon_\tau^2)} \geq 2\sqrt{2\ln(8T/n_\tau)}$
by the choice of $n_\tau$.

The $\mathcal E_3(\tau+1)$ event occurs if the exploration step in Algorithm \ref{alg:trisection} falsely detects $y_\tau>F(y_\tau)$ at some stage $\ell\in\{1,\cdots,n_\tau\}$,
meaning that $\frac{\rho_\ell}{\ell}+\sqrt{\frac{2\ln(8T/\ell)}{\ell}} < y_\tau$.
Note that because $b_{\tau+1}=y_\tau<\theta^*$, by Lemma \ref{lem:Fmonotonic-left}, we know $F(y_\tau)\geq y_\tau$.
%By the law of iterated logarithm (Lemma \ref{lem:lil}) we have
By Lemma \ref{lem:uniform-concentration},
\begin{align}
\Pr[\mathcal E_3(\tau+1)]
&= \Pr\left[\exists\ell, \frac{\rho_\ell}{\ell} < y_\tau - \sqrt{\frac{2\ln(8T/\ell)}{\ell}}\right]\nonumber\\
&\leq \Pr\left[\exists\ell,\left|\frac{\rho_\ell}{\ell} -F(y_\tau)\right|> \sqrt{\frac{2\ln(8T/\ell)}{\ell}}\right]\nonumber\\
&\lesssim \frac{n_\tau}{T} \lesssim \frac{\ln(T\varepsilon_\tau^2)}{T\varepsilon_\tau^2}.
%&= \Pr\left[\exists\ell, \frac{\rho_\ell}{\ell}-F(y_\tau)< -\sqrt{\frac{2\ln(T\varepsilon_\tau^2)}{\ell}}\right]\nonumber\\
%&\leq \sum_{\ell=1}^{n_\tau} \Pr\left[\frac{\rho_\ell}{\ell}-F(y_\tau)< -\sqrt{\frac{2\ln(T\varepsilon_\tau^2)}{\ell}}\right]\nonumber\\
%&\leq n_\tau\cdot \exp\left\{-4\ln(T\varepsilon_\tau^2)\right\}
\end{align}

Combining all regret parts we complete the proof of Lemma \ref{lem:case2}.

\subsection{Proof of Lemma \ref{lem:case1}}
The regret for all outer iterations after $\tau$ (conditioned on $\mathcal E_1(\tau): \theta^*<a_\tau<b_\tau$) consists of two parts: the regret from exploiting $\mathcal L_{y_{\tau'}}$ for $\tau'\geq \tau$,
and the regret from exploring $\mathcal L_{a_{\tau'}}$.

For any $\tau'\in\{\tau,\tau+1,\cdots,\tau_0\}$, the expected regret from exploiting $\mathcal L_{y_\tau'}$ can always be upper bounded by $O(\varepsilon_{\tau'}^{-1}\log(T\varepsilon_{\tau'}^2))$
by the same analysis in the proof of Lemma \ref{lem:case2} (more specifically the array of inequalities leading to Eqs.~(\ref{eq:key2-case2}) and (\ref{eq:regret-2})),
regardless of the values of $\alpha$ and $\beta$.
This corresponds to the $\sum_{\tau'=\tau}^{\tau_0}O(\varepsilon_{\tau'}^{-1}\log(T\varepsilon_{\tau'}^2))$ term in Lemma \ref{lem:case1}.

We next upper bound the expected regret incurred by exploring $\mathcal L_{a_{\tau'}}$ for all $\tau'=\tau,\tau+1,\cdots,\tau_0$.
Because $a_\tau-F(a_\tau)=\beta$ by the definition of $\psi_{\tau}^1(\alpha,\beta)$, the expected regret incurred by exploring $\mathcal L_{a_{\tau'}}$, $\tau'\in\{\tau,\tau+1,\cdots,\tau_0\}$
is at most $\beta T$ \emph{assuming $a_\tau=a_{\tau+1}=\cdots=a_{\tau_0}$}.
It then remains to bound the additional regret incurred by the movements of $a_{\tau'}$ in subsequent outer iterations.

Let $\mathcal W=\{\tau_1',\tau_2',\cdots, \tau_\ell'\}$ be outer iterations at which the update rule $a_{\tau+1}\gets x_\tau$ is applied.
We then have the following observations:
\begin{enumerate}[leftmargin=0.2in]
\item Each $\tau'\in\mathcal W$ would incur an additional regret upper bounded by $\Delta_{\tau'}T$, where $\Delta_{\tau'}=y_{\tau'}-F(y_{\tau'}) \geq \varepsilon_{\tau'}$;
\item For each $\tau'\in\{\tau,\tau+1,\cdots,\tau_0\}$, the probability update $a_{\tau'+1}\gets x_{\tau'}$ is applied is at most $\exp\{-n_{\tau'}\Delta_{\tau'}\}$,
using the same analysis in the proof of Lemma \ref{lem:case2} (more specifically the array of inequalities leading to Eq.~(\ref{eq:right-trisection-prob})).
\end{enumerate}
Summarizing the above observations, by the law of total expectation the expected regret from exploring $\mathcal L_{a_{\tau'}}$ at subsequent iterations $\tau'\geq \tau$ can be upper bounded by
$\beta T + \sum_{\tau'=\tau}^{\tau_0}\sup_{\Delta>\varepsilon_\tau}\Delta T\exp\{-n_\tau\Delta^2\}$.

\subsection{Proof of Lemma \ref{lem:case3}}
Because $a_\tau = \theta^*-\alpha < \theta^*$, by Lemma \ref{lem:Fmonotonic-left} we have $F(a_\tau) \geq a_\tau =\theta^*-\alpha = F(\theta^*)-\alpha$.
Subsequently, $F(S^*)-F(a_\tau) \leq \alpha$ thanks to Lemma \ref{lem:rrequal}.
Also note that conditioned on $\mathcal E_3(\tau)$, the revenue levels explored or exploited at each time epoch $t\in\mathcal T(\tau')$, $\tau\leq \tau'\leq \tau_0$
are sandwiched between $a_\tau$ and $\theta^*$, and therefore $R(S^*)-R(S_t)\leq \alpha$.
Hence, $\psi_\tau^3(\alpha,\beta) \leq \alpha\cdot\mathbb E \sum_{\tau'=\tau}^{\tau_0}|\mathcal T(\tau')| \leq \alpha T$.

\section{Proofs of technical lemmas in Sec.~\ref{sec:lower}}

\subsection{Proof of Lemma \ref{lem:minimax}}
We first state a lemma that upper bounds the KL divergence under $P_0$ and $P_1$ for arbitrary assortment selections $S\in\mathbb S$.
\begin{lemma}
For any $S\in\mathbb S$ let $P_0(S)$ and $P_1(S)$ be the distribution of the purchasing action under $P_0$ and $P_1$, respectively.
Then $\kl(P_0(S)\|P_1(S)) \leq 1/18T$.
\label{lem:KL}
\end{lemma}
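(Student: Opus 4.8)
The plan is to exploit the fact that $P_0$ and $P_1$ differ only in the single utility parameter $v_1$, and to recognize the resulting purchase distribution as a one-parameter exponential family, so that the KL divergence becomes a Bregman divergence that is controlled by the Fisher information. First I would reduce the problem. Since $v_3=\cdots=v_N=0$, items $3,\dots,N$ contribute nothing to either the numerator or the denominator of the MNL probabilities in Eq.~(\ref{eq:mnl-bandit}), so the law of the purchasing action under $P_j(S)$ depends only on $S\cap\{1,2\}$ together with the no-purchase outcome. In particular, if $1\notin S$ the two laws $P_0(S)$ and $P_1(S)$ coincide (they differ only through $v_1$), so $\kl(P_0(S)\|P_1(S))=0$ and the bound is trivial. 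It then remains to treat $S\cap\{1,2\}=\{1\}$ and $S\cap\{1,2\}=\{1,2\}$, which I would handle simultaneously.

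For any such $S$, set $\eta=\ln v_1$ and let $\phi(j)=\mathbb I[j=1]$ be the indicator that item $1$ is purchased. Holding the remaining preference parameters fixed, the purchase law takes the exponential-family form $\Pr_\eta[j]\propto c_j\,e^{\eta\phi(j)}$, where the constants $c_j$ do not depend on $\eta$; its natural parameter is $\eta$, its sufficient statistic is $\phi$, and its log-partition $A(\eta)=\ln\sum_j c_j e^{\eta\phi(j)}$ satisfies $A'(\eta)=\Pr_\eta[j=1]=:p$ and $A''(\eta)=\var_\eta(\phi)=p(1-p)$. The two hypotheses correspond to $\eta_0=\ln(1-\varepsilon)$ and $\eta_1=\ln(1+\varepsilon)$ with $\varepsilon=1/(4\sqrt{T})$. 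Using the standard identity that the KL divergence between two members of an exponential family equals the Bregman divergence of $A$, namely $\kl(P_{\eta_0}\|P_{\eta_1})=A(\eta_1)-A(\eta_0)-A'(\eta_0)(\eta_1-\eta_0)$, Taylor's theorem with Lagrange remainder gives $\kl(P_{\eta_0}\|P_{\eta_1})=\tfrac12 A''(\xi)(\eta_1-\eta_0)^2$ for some $\xi$ between $\eta_0$ and $\eta_1$. Since $A''(\xi)=p_\xi(1-p_\xi)\le 1/4$, this yields the clean bound $\kl(P_0(S)\|P_1(S))\le\tfrac18(\eta_1-\eta_0)^2$, uniformly over all admissible $S$.

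Finally I would bound the parameter gap: writing $\eta_1-\eta_0=\ln\frac{1+\varepsilon}{1-\varepsilon}=2\,\mathrm{artanh}(\varepsilon)\le\frac{2\varepsilon}{1-\varepsilon^2}$ (since $\mathrm{artanh}(\varepsilon)=\sum_{k\ge0}\varepsilon^{2k+1}/(2k+1)\le\varepsilon/(1-\varepsilon^2)$), I get $\kl(P_0(S)\|P_1(S))\le\frac{\varepsilon^2}{2(1-\varepsilon^2)^2}$; substituting $\varepsilon^2=1/(16T)$ and using $\varepsilon^2\le 1/16$ so that $(1-\varepsilon^2)^2\ge 9/16$ for every $T\ge1$ gives the desired $\kl\le 1/(18T)$ with room to spare. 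I expect the main obstacle to be the numerical constant rather than the structure: a crude estimate such as $\kl\le\chi^2$ already overshoots at leading order, producing $\approx\varepsilon^2=1/(16T)>1/(18T)$, so the factor-of-two improvement captured by the exponential-family representation (effectively $\kl\approx\tfrac12\chi^2$) is what makes the bound go through. Keeping $A''\le 1/4$ sharp and controlling $\mathrm{artanh}(\varepsilon)$ by $\varepsilon/(1-\varepsilon^2)$ supplies exactly the slack needed to land below $1/(18T)$.
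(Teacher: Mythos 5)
Your proof is correct, and it takes a genuinely different route from the paper's. The paper's proof makes the same initial reduction (only $S\cap\{1,2\}$ matters, and $1\notin S$ gives zero divergence), but then invokes a chi-square-type bound for categorical distributions, $\kl(P\|Q)\leq\sum_j(p_j-q_j)^2/q_j$, and finishes with a separate numerical case analysis of $S=\{1\}$ and $S=\{1,2\}$, plugging in estimates of each purchase probability and each probability gap. You instead observe that, with the other utilities frozen, the MNL purchase law is a one-parameter exponential family in $\eta=\ln v_1$ with sufficient statistic $\mathbb I[j=1]$, so $\kl(P_0(S)\|P_1(S))$ is the Bregman divergence of the log-partition function $A$ and hence equals $\frac{1}{2}A''(\xi)(\eta_1-\eta_0)^2\leq\frac{1}{8}(\eta_1-\eta_0)^2$ uniformly over every $S\ni 1$; the $\mathrm{artanh}$ estimate then closes the computation. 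Your route buys two things. First, uniformity: no case analysis is needed, since $A''=p(1-p)\leq 1/4$ regardless of whether item $2$ is offered. Second, the factor of two: locally $\kl\approx\frac{1}{2}\chi^2$, and as you point out this factor is exactly what makes the constant work. This is not a cosmetic point: for $S=\{1\}$ the exact gap in purchase probabilities is $2\epsilon/(4-\epsilon^2)\geq 1/(8\sqrt{T})$ with $\epsilon=1/(4\sqrt{T})$, which is larger than the $1/(12\sqrt{T})$ that the paper's displayed estimates implicitly assume, and redoing the paper's chi-square computation with the exact gaps gives a bound of roughly $0.1/T$, which misses the target $1/(18T)$. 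So your exponential-family argument is not merely an alternative; it is a cleaner derivation that rigorously lands the claimed constant in a place where the looser chi-square bound, applied with correct numerics, would not.
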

\begin{proof}[Proof of Lemma \ref{lem:KL}]

If $1\notin S$ then $P_0(S)\equiv P_1(S)$ and therefore $\kl(P_0(S)\|P_1(S))=0$.
In addition, because $v_i=r_i=0$ for all $i\geq 3$, the items apart from $1$ and $2$ in $S$ do not affect the distribution of the purchasing action under both $P_0$ and $P_1$.
Therefore, it suffices to compute $\kl(P_0(\{1\})\|P_1(\{1\}))$ and $\kl(P_0(\{1,2\}\|P_1(\{1,2\}))$.

Before delving into detailed calculations, we first state a simple proposition bounding the KL divergence between two categorical distributions.
It is simple to verify. %, with a proof given in \cite{chen2017note}.
\begin{proposition}
Let $P$ and $Q$ be two categorical distributions on $J$ items, with parameters $p_1,\cdots,p_J$ and $q_1,\cdots,q_J$ respectively.
Denote also $\varepsilon_j := p_j-q_j$. Then
$\kl(P\|Q)\leq \sum_{j=1}^J \varepsilon_j^2/q_j$.
\label{prop:kl}
\end{proposition}

We first consider $\kl(P_0(\{1\})\|P_1(\{1\}))$.
By definition, $P_0(i=1|\{1\})\leq 1/2-1/24\sqrt{T}$ and $P_1[i=2|\{2\}]\leq 1/2+1/24\sqrt{T}$.
Also, $\min_{i=0,1}\{P_1(i|\{1\})\} \geq 1/3$.
Subsequently,
\begin{align}
\kl(P_0(\{1\})\|P_1(\{1\}))
&\leq 2\times\frac{1/144T}{1/3} \leq \frac{1}{24T} \leq \frac{1}{18T}.
\end{align}

We next consider $\kl(P_0(\{1,2\})\|P_1(\{1,2\}))$.
Note that $P_0(i=0|\{1,2\})>P_1(i=0|\{1,2\})$, $P_0(i=1|\{1,2\})<P_1(i=1|\{1,2\})$ and $P_0(i=2|\{1,2\})>P_1(i=2|\{1,2\})$.
Also, $P_0(i=1|\{1,2\})\leq 1/3-1/48\sqrt{T}$, $P_1(i=1|\{1,2\})\geq 1/3+1/48\sqrt{T}$ and $\min_{0\leq i\leq 2}\{P_1(i|\{1,2\})\}\geq 1/4$. Subsequently,
\begin{align}
\kl(P_0(\{1,2\})\|P_1(\{1,2\}))
&\leq 3\times\frac{1/576T}{1/4} \leq \frac{1}{48T} \leq \frac{1}{18T}.
\end{align}
The lemma is thus proved.
\end{proof}

We are now ready to prove Lemma \ref{lem:minimax}.
\begin{proof}[Proof of Lemma \ref{lem:minimax}]

Denote $\|P-Q\|_\tv := 2\sup_A|P(A)-Q(A)|$ as the total variation norm between $P$ and $Q$,
and let $P_0^{\otimes T},P_1^{\otimes T}$ denote the distribution of $\{i_t|S_t\}_{t=1}^T$ parameterized by $P_0$ and $P_1$.
By Pinsker's inequality and the conditional independence of $i_t$ conditioned on $S_t$, we have
\begin{align}
\|P_0^{\otimes T}-P_1^{\otimes T}\|_{\tv}
&\leq \sqrt{2\kl(P_0^{\otimes T}\|P_1^{\otimes T})}
\leq \sup_{S^{(1)},\cdots,S_t}\sqrt{2\prod_{t=1}^T{\kl(P_0(S_t)\|P_1(S_t))}}\\
&\leq \sqrt{2T}\cdot\sup_{S}\sqrt{\kl(P_0(S)\|P_1(S))} \leq \sqrt{2T}\cdot \sqrt{1/18T} \leq 1/3.
\end{align}
Using Le Cam's inequality we have
\begin{equation}
\inf_{\hat\psi}\max_{j=0,1} P_j\left[\hat\psi\neq j\right] \geq \frac{1}{2}\left(1-\|P_0^{\otimes T}-P_1^{\otimes T}\|_{\tv}\right) \geq \frac{1}{3},
\end{equation}
\end{proof}

\subsection{Proof of Lemma \ref{lem:reduction}}

Denote $\wp_0 := 1/T\cdot \sum_{t=1}^T{\mathbb I[1\in S_t,2\notin S_t]}$,  $\wp_1 := 1/T\cdot \sum_{t=1}^T{\mathbb I[1,2\in S_t]}$,
 $\wp_2 := 1/T\cdot\sum_{t=1}^T{\mathbb I[2\in S_t,1\notin S_t]}$
 and $\bar\wp := 1/T\cdot \sum_{t=1}^T{\mathbb I[1,2\notin S_t]}$.
Because the four events partition the entire probability space, we have $\wp_0+\wp_1+\wp_2+\bar\wp= 1$.
In addition, it is easy to verify that $S^*=\{1\}$ under $P_0$ and under $P_1$.
Subsequently,
\begin{eqnarray*}
\frac{\reg_\pi(T)}{T}& \leq& \frac{\wp_0}{12\sqrt{T}}+\frac{\wp_2+\bar\wp}{24}\;\;\;\;\; \text{under $P_0$};\\
\frac{\reg_\pi(T)}{T} &\leq& \frac{\wp_1}{48\sqrt{T}}+\frac{\wp_2+\bar\wp}{6}\;\;\;\;\; \text{under $P_1$}.
\end{eqnarray*}
Using Markov's inequality and the fact that $\reg_\pi(T)\leq \sqrt{T}/384$ under both $P_0$ and $P_1$, we have
\begin{equation}
P_0\left[\frac{\wp_0}{12\sqrt{T}}+\frac{\wp_2+\bar\wp}{24} > \frac{1}{96\sqrt{T}}\right] \leq \frac{1}{4}\;\;\;\;\text{and}\;\;\;\;
P_1\left[\frac{\wp_1}{48\sqrt{T}}+\frac{\wp_2+\bar\wp}{6} > \frac{1}{96\sqrt{T}}\right] \leq \frac{1}{4}.
\end{equation}
Subsequently, because $\wp_0+\wp_1+\wp_2+\bar\wp=1$, we know that $\wp_0>1/2$ with probability $\geq 2/3$ under $P_0$ and
$\wp_0<1/2$ with probability $\geq 2/3$ under $P_1$.
Define $\hat\psi$ as
\begin{equation}
\hat\psi := \left\{\begin{array}{ll} 0& \text{if $\wp_0\geq 1/2$};\\ 1& \text{if $\wp_0 < 1/2$}.\end{array}\right.
\end{equation}
The estimator $\hat\psi$ then satisfies Lemma \ref{lem:reduction} by the above argument.

\section{Concentration inequalities}

The following lemma is the celebrated Hoeffding's inequality \cite{hoeffding1963probability}.
\begin{lemma}
Suppose $X_1,\cdots,X_n$ are i.i.d.~random variables with mean $\mu$ and satisfy $a\leq X_i\leq b$ almost surely for all $i\in[n]$.
Then for any $t>0$,
\begin{equation}
\Pr\left[\left|\frac{1}{n}\sum_{i=1}^n X_i-\mu\right| > t \right] \leq 2\exp\left\{-\frac{2nt^2}{(b-a)^2}\right\}.
\end{equation}
\label{lem:hoeffding}
\end{lemma}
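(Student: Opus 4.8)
The plan is to use the standard Chernoff (exponential Markov) method. First I would treat the upper tail $\Pr[\frac1n\sum_i X_i - \mu > t]$ only; the lower tail follows by applying the identical argument to the variables $-X_i$, and combining the two via a union bound is exactly what produces the leading factor of $2$. Writing $Y_i := X_i-\mu$ for the centered variables (so that $\E[Y_i]=0$ and $a-\mu\le Y_i\le b-\mu$ almost surely), for any $s>0$ Markov's inequality applied to the nonnegative random variable $\exp\{s\sum_i Y_i\}$ gives
\begin{equation}
\Pr\Big[\sum_{i=1}^n Y_i > nt\Big] \le e^{-snt}\,\E\Big[\exp\Big\{s\sum_{i=1}^n Y_i\Big\}\Big] = e^{-snt}\prod_{i=1}^n \E\big[e^{sY_i}\big],
\end{equation}
where the final equality uses independence of the $X_i$.

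The crux of the proof, and the step I expect to require the most care, is a uniform bound on the moment generating function of a single bounded, mean-zero random variable (Hoeffding's lemma): for any mean-zero $Y$ with $Y\in[a',b']$ almost surely one has $\E[e^{sY}]\le\exp\{s^2(b'-a')^2/8\}$. I would establish this by analyzing the cumulant generating function $\psi(s):=\log\E[e^{sY}]$. A direct computation gives $\psi(0)=0$ and $\psi'(0)=\E[Y]=0$, while $\psi''(s)$ equals the variance of $Y$ under the exponentially tilted law $\ud Q\propto e^{sY}\,\ud P$. Since $Y$ remains confined to $[a',b']$ under any reweighting, Popoviciu's inequality bounds this tilted variance by $(b'-a')^2/4$. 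A second-order Taylor expansion of $\psi$ about the origin then yields $\psi(s)\le s^2(b'-a')^2/8$, which is precisely the claimed bound.

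Assembling the two pieces with $a'=a-\mu$ and $b'=b-\mu$ (so that $b'-a'=b-a$), each factor in the product is at most $\exp\{s^2(b-a)^2/8\}$, and therefore
\begin{equation}
\Pr\Big[\sum_{i=1}^n Y_i > nt\Big] \le \exp\Big\{-snt + \tfrac{n s^2 (b-a)^2}{8}\Big\}.
\end{equation}
The right-hand side is a convex function of $s$ minimized at $s=4t/(b-a)^2$; substituting this value collapses the exponent to $-2nt^2/(b-a)^2$, giving the one-sided tail bound. Running the same computation for the lower tail and summing the two probabilities supplies the factor $2$ and completes the argument. Everything apart from Hoeffding's lemma is routine bookkeeping with the Chernoff method together with a single scalar optimization, so the genuine content of the proof is isolated in the moment generating function estimate.
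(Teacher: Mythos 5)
Your proof is correct and complete: the Chernoff/exponential-Markov step, the moment generating function bound $\E[e^{sY}]\leq \exp\{s^2(b'-a')^2/8\}$ established via the tilted-measure variance and Popoviciu's inequality (this is Hoeffding's lemma), the optimization at $s=4t/(b-a)^2$ yielding the exponent $-2nt^2/(b-a)^2$, and the union bound over the two tails producing the factor $2$ are all sound. For comparison, the paper itself offers no proof of this statement: it is listed in the supplement as the celebrated Hoeffding inequality with a citation to the original 1963 paper, so your argument --- which is the canonical proof of that result --- simply supplies what the paper treats as a known black box.
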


The following lemma is the Hoeffding's maximal inequality, also by \cite{hoeffding1963probability}.
\begin{lemma}
Let $X_1,\cdots,X_n$ be i.i.d.~random variables with mean $\mu$ and satisfy $a\leq X_i\leq b$ almost surely for all $i\in[n]$. Then for any $t>0$, 
\begin{equation}
\Pr\left[\forall i\in[n], X_1+\cdots+X_i \geq i\cdot\mu + t\right] \leq \exp\left\{-\frac{2t^2}{n(b-a)^2}\right\}.
\end{equation}
\label{lem:hoeffding-maximal}
\end{lemma}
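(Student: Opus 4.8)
The plan is to recognize that, notwithstanding its name, the event appearing in Lemma~\ref{lem:hoeffding-maximal} is an \emph{intersection} over $i\in[n]$ rather than a maximal (existential) event, so that no Doob-type argument is required. Writing $S_i := X_1+\cdots+X_i$ for the partial sums, I would first record the elementary containment
\[
\bigcap_{i\in[n]}\{S_i\geq i\mu+t\}\ \subseteq\ \{S_n\geq n\mu+t\},
\]
obtained simply by discarding all constraints except the one at the largest index $i=n$. By monotonicity of probability this immediately gives
\[
\Pr\!\left[\forall i\in[n],\ S_i\geq i\mu+t\right]\ \leq\ \Pr\!\left[S_n-n\mu\geq t\right].
\]

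To conclude, I would bound the right-hand side by the one-sided form of Hoeffding's inequality applied to the sum $S_n=\sum_{i=1}^n X_i$: since the $X_i$ are i.i.d.\ with mean $\mu$ and take values in $[a,b]$, for every $t>0$ one has $\Pr[S_n-n\mu\geq t]\leq \exp\{-2t^2/(n(b-a)^2)\}$, which is exactly the asserted bound. This is the one-sided analogue of Lemma~\ref{lem:hoeffding} and follows from the standard Chernoff argument: bound $\mathbb E\,e^{\lambda(X_i-\mu)}\leq e^{\lambda^2(b-a)^2/8}$ via Hoeffding's lemma, multiply over the $n$ independent factors, and optimize over $\lambda>0$.

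The only point requiring care---and the sole place one could go wrong---is the leading constant. Lemma~\ref{lem:hoeffding} as stated is the \emph{two-sided} bound and therefore carries a factor of $2$ in front of the exponential, whereas the target has no such factor. Hence I must invoke the \emph{one-sided} inequality, i.e.\ run the Chernoff bound only on the upper tail $\{S_n-n\mu\geq t\}$ and omit the union bound over the two tails that would otherwise introduce the spurious factor of $2$. Apart from this bookkeeping, the lemma is immediate: its entire content is the containment displayed above, and I would resist the temptation (suggested by a naive reading of the word ``maximal'') to prove instead the strictly stronger existential statement $\Pr[\exists i\in[n],\ S_i\geq i\mu+t]\leq\exp\{-2t^2/(n(b-a)^2)\}$, which is genuinely a maximal inequality but is \emph{not} what is asserted here.
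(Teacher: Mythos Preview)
Your argument is correct for the statement as literally written: the intersection $\bigcap_i\{S_i\geq i\mu+t\}$ is contained in $\{S_n\geq n\mu+t\}$, and the one-sided Hoeffding bound then gives exactly the claimed exponent without the factor of~$2$. The paper supplies no proof of its own for this lemma; it is simply cited as a classical result from \cite{hoeffding1963probability}.

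That said, your closing instinct is the right one and should be followed rather than resisted. The quantifier in the displayed statement is almost certainly a typo: Hoeffding's maximal inequality in \cite{hoeffding1963probability} bounds $\Pr[\exists i\in[n],\ S_i\geq i\mu+t]$, not the intersection, and the paper's sole application of Lemma~\ref{lem:hoeffding-maximal} (inside the proof of Lemma~\ref{lem:uniform-concentration}) requires precisely the existential version---there one needs $\Pr[\forall i,\ |S_i-i\mu|\leq t]\geq 1-\delta$, which is a bound on $\Pr[\exists i,\ |S_i-i\mu|>t]$. So while your proof of the literal $\forall$-statement is sound, it does not deliver what the paper actually uses downstream. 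Closing that gap does require the genuine maximal inequality, for which the trivial containment no longer suffices; one needs a Doob-type submartingale argument applied to $e^{\lambda(S_i-i\mu)}$, or the direct stopping-time proof in Hoeffding's original paper.
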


\end{document}